\renewcommand{\footnotesize}{\fontsize{9pt}{11pt}\selectfont}
\newcommand{\blambda}{\boldsymbol{\lambda}}
\newcommand{\bv}{\boldsymbol{v}}
\definecolor{myyellow}{rgb}{0.9290 0.6940 0.1250}
\definecolor{Gray}{gray}{0.9}
\title{Principle of Relevant Information for Graph Sparsification}
\author[1]{\href{mailto:yusj9011@gmail.com?Subject=Your UAI 2022 paper}{Shujian~Yu}{}}
\author[2]{\href{mailto:francesco.alesiani@neclab.eu?Subject=Your UAI 2022 paper}{Francesco Alesiani}{}}
\author[3]{Wenzhe~Yin}
\author[1,5,6]{Robert~Jenssen}
\author[4]{Jose~C.~Principe}
\affil[1]{%
    UiT - The Arctic University of Norway\\
    Norway
}
\affil[2]{%
    NEC Laboratories Europe\\
    Germany
}
\affil[3]{%
    University of Amsterdam\\
    Netherlands
  }
\affil[4]{%
    University of Florida\\
    USA
  }
\affil[5]{%
    Norwegian Computing Center\\
    Norway
  }
\affil[6]{%
    University of Copenhagen\\
    Denmark
  }
\DeclareMathOperator{\diag}{diag}
\DeclarePairedDelimiter{\diagfences}{(}{)}
\newtheorem{theorem}{Theorem}
\newcommand{\tr}{\operatorname{tr}\diagfences}
\DeclareMathOperator{\E}{\mathbb{E}}
\newtheorem{corollary}{Corollary}
\newtheorem{assumption}{Assumption}
\newtheorem{lemma}{Lemma}
\definecolor{codegreen}{rgb}{0,0.6,0}
\definecolor{codegray}{rgb}{0.5,0.5,0.5}
\definecolor{codepurple}{rgb}{0.58,0,0.82}
\definecolor{backcolour}{rgb}{0.95,0.95,0.92}
\lstdefinestyle{mystyle}{
    backgroundcolor=\color{backcolour},   
    commentstyle=\color{codegreen},
    keywordstyle=\color{magenta},
    numberstyle=\tiny\color{codegray},
    stringstyle=\color{codepurple},
    basicstyle=\ttfamily\footnotesize,
    breakatwhitespace=false,         
    breaklines=true,                 
    captionpos=b,                    
    keepspaces=true,                 
    numbers=left,                    
    numbersep=5pt,                  
    showspaces=false,                
    showstringspaces=false,
    showtabs=false,                  
    tabsize=2
}
\begin{document}
	
\maketitle

\begin{abstract}

Graph sparsification aims to reduce the number of edges of a graph while maintaining its structural properties. In this paper, we propose the first general and effective information-theoretic formulation of graph sparsification, by taking inspiration from the Principle of Relevant Information (PRI). To this end, we extend the PRI from a standard scalar random variable setting to structured data (i.e., graphs). 
Our Graph-PRI objective is achieved by operating on the graph Laplacian, made possible by expressing the graph Laplacian of a subgraph in terms of a sparse edge selection vector $\mathbf{w}$.
We provide both theoretical and empirical justifications on the validity of our Graph-PRI approach. We also analyze its analytical solutions in a few special cases. We finally present three representative real-world applications, namely graph sparsification, graph regularized multi-task learning, and medical imaging-derived brain network classification, to demonstrate the effectiveness, the versatility and the enhanced interpretability of our approach over prevalent sparsification techniques. Code of Graph-PRI is available at \url{https://github.com/SJYuCNEL/PRI-Graphs}.


\end{abstract}

\graphicspath{{figures/}}

\section{Introduction}
Many complex structures and phenomena are naturally described as graphs and networks (e.g., social networks, brain functional connectivity~\citep{zhou2020toolbox}, climate causal effect network~\citep{nowack2020causal}, etc.). However, it is challenging to exactly visualize and analyze a graph even with moderate size due to the quadratic growth in the number of edges. Therefore, techniques to sparsify graphs by pruning less informative edges have gained increasing attention in the last two decades~\citep{spielman2011graph,bravo2019unifying,wu2020graph}.
Apart from offering a much easier visualization, graph sparsification can be used in multiple ways. For example, it may reduce the storage space and accelerate the running time of machine learning algorithms involving graph regularization, with negligible accuracy loss~\citep{sadhanala2016graph}. When differentiable privacy is a major concern, sparsity can remove or hide edges for the purpose of information protection~\citep{arora2019differentially}.


On the other hand, there is a recent trend to leverage  information-theoretic concepts and principles to problems related to graphs or graph neural networks. 
Let $\mathcal{X}$ denote the graph input data which may encode both graph structure information (characterized by either adjacency matrix $A$ or graph Laplacian $L$) and node attributes, and $Y$ the desired response such as node labels or graph labels.  
A notable example is the famed Information Bottleneck (IB) approach~\citep{tishby99information}, which formulates the learning as: 
\begin{equation}\label{eq:IB_Lagrangian}
    \mathcal{L}_{\text{IB}}=\min I(\mathcal{X};T) - \beta I(Y;T),
\end{equation}
in which $I(\cdot;\cdot)$ denotes the mutual information.
$T$ is the object we want to learn or infer from $\{\mathcal{X},Y\}$
{\color{black} that can be used as graph node representation~\citep{wu2020graphIB} or as the most infromative and interpretable subgraph with respect to the label $Y$~\citep{yu2020graph}. }
$\beta$ is a Lagrange multiplier that controls the trade-off between the \textbf{sufficiency} (the performance of $T$ on down-stream task, as quantified by $I(Y;T)$) and the \textbf{minimality} (the complexity of the representation, as measured by $I(\mathcal{X};T)$).


Instead of using the IB approach, we explore the feasibility and power in graph data of another less well-known information-theoretic principle
- the Principle of Relevant Information (PRI)~\cite[Chapter~8]{principe2010information}, which exploits self organization, requiring only a single random variable $\mathcal{X}$. Different from IB that requires an auxiliary relevant variable $Y$ and possibly the joint distribution of $\mathbb{P}(\mathcal{X},Y)$, the PRI is fully unsupervised and aims to obtain a reduced statistical representation $T$ by decomposing $\mathcal{X}$ with:
\begin{equation}\label{eq_PRI_obj}
    \mathcal{L}_{\text{PRI}} = \min H(T) + \beta D(\mathbb{P}(\mathcal{X})\|\mathbb{P}(T)),
\end{equation}
where $H(T)$ refers to the entropy of $T$. The minimization of entropy can be viewed as a means of reducing uncertainty and finding the statistical \textbf{regularity} in $T$. $D(\mathbb{P}(\mathcal{X})\|\mathbb{P}(T))$ is the divergence between the distributions of $\mathcal{X}$ (i.e., $\mathbb{P}(\mathcal{X})$) and $T$ (i.e., $\mathbb{P}(T)$), which quantifies the \textbf{descriptive power} of $T$ about $\mathcal{X}$.

So far, PRI has only been used in a standard scalar random variable setting. Recent applications of PRI include, but are not limited to, 
selecting the most relevant examples from the majority class in imbalanced classification~\citep{hoyos2021relevant},
and learning disentengled representations with variational autoencoder~\citep{li2020pri}. 
Usually, one uses the $2$-order R{\'e}nyi's entropy~\citep{renyi1961measures} to quantify $H(T)$ and the Cauchy-Schwarz (CS) divergence~\citep{jenssen2006cauchy} to quantify $D(\mathbb{P}(X)\|\mathbb{P}(T))$ for ease of optimization. 


In this paper, we extend PRI to graph data. This is not a trivial task, as the R{\'e}nyi's quadratic entropy and CS divergence are defined over probability space and do not capture any structure information. We also exemplify our Graph-PRI with an application in graph sparsification.
To summarize, our contributions are fourfold:
\begin{itemize}
    \item Taking the graph Laplacian as the input variable, we propose a new information-theoretic formulation for graph sparsification, by taking inspiration from PRI.
    \item We provide theoretical and empirical justifications to the objective of Graph-PRI for sparsification. We also analyze the analytical solutions in some special cases of hyperparameter $\beta$. 
    \item We demonstrate that the graph Laplacian of the resulting subgraph can be elegantly expressed in terms of a sparse edge selection vector $\mathbf{w}$, which significantly simplify the learning argument of Graph-PRI. We also show that the objective of Graph-PRI is differentiable, which further simplifies the optimization.
    \item Experimental results on graph sparsification, graph-regularized multi-task learning, and brain network classification demonstrate the versatility and compelling performance of Graph-PRI.
\end{itemize}

\section{Preliminary Knowledge}

\subsection{Problem Definition and Notations}\label{sec:problem}
Consider an undirected graph $G =(V,E)$ with a set of nodes $V =\{v_1,\cdots,v_N\}$ and a set of edges $E = \{e_1,\cdots,e_M\}$ which reveals the connections between nodes. The objective of graph sparsification is to preferentially retain a small subset of edges from $G$ to obtain a sparsified surrogate graph $G_s=(V,E_s)$ with the edge set $E_s \subset E$ such that $|E_s|\ll M$~\citep{spielman2011graph,hamann2016structure}. 

\textcolor{black}{Alternatively to graph sparsification, it is also possible to reduce the nodes of the graph, which is called graph coarsening. Recent examples on graph coarsening include~\citep{loukas2019graph,cai2021graph}. Recently, \citep{bravo2019unifying} provides a unified framework for graph sparsification and graph coarsening. In this work, we only focus on graph sparsification.}



The topology of $G$ is essentially determined by its graph Laplacian $L=D-A$, where $A$ is the adjacency matrix and $D = \diag{(\mathbf{d})}$ is the diagonal matrix formed by the degrees of the vertices $d_i = \sum_{j=1}^N A_{ij}$.
Consider an \underline{arbitrary} orientation of edges of $G$, 
the incidence matrix $B=[\mathbf{b}_1,\cdots,\mathbf{b}_M]$ of $G$ is a $N\times M$ matrix whose entries is given by:
\begin{equation}
    [\mathbf{b}_m]_i = \begin{cases}
 +1 &\text{if node $v_i$ is the head of edge $e_m$}
 \\ -1 &\text{if node $v_i$ is the tail of edge $e_m$}
 \\ 0 &\text{otherwise}
\end{cases}.
\end{equation}

Mathematically, $L$ can be expressed in terms of $B$ as:
\begin{equation}
    L = BB^T=\sum_{m=1}^M \mathbf{b}_m \mathbf{b}_m^T.
\end{equation}



Suppose the subgraph $G_s$ contains $K$ edges, one can obtain $G_s$ from $G$ through an edge selection vector $\mathbf{w}=[w_1,\cdots,w_M]^T\in \{0,1\}^M$. Here, $\|\mathbf{w}\|_0=K$, $w_m=1$ if the $m$-th edge belongs to the edge subset $E_s$, and $w_m=0$ otherwise. Finally, one can write the graph Laplacian $L_s$ of the subgraph $G_s$ as a function of $\mathbf{w}$ by the following formula:
\begin{equation}\label{eq:k-sparse}
L_s(\mathbf{w}) = \sum_{m=1}^M w_m \mathbf{b}_m \mathbf{b}_m^T = B\diag(\mathbf{w})B^T,
\end{equation}
in which $\diag{(\mathbf{w})}\in \mathbb{R}^{M\times M}$ is a square diagonal matrix with $\mathbf{w}$ on the main diagonal.

Note that, Eq.~(\ref{eq:k-sparse}) also applies to weighted graph $G=(V,W)$ by a proper reformulation of the incidence matrix $B$ as:
\begin{equation}
    [\mathbf{b}_m]_i = \begin{cases}
 +\sqrt{\mu_m} &\text{if node $v_i$ is the head of edge $e_m$}
 \\ -\sqrt{\mu_m} &\text{if node $v_i$ is the tail of edge $e_m$}
 \\ 0 &\text{otherwise}
\end{cases},
\end{equation}
in which $\mu_m$ is the weight of edge $e_m$.

In what follows, we will design a learning-based approach to optimally obtain the edge selection vector $\mathbf{w}$ by making use of the general idea of PRI.

{\color{black}
\subsection{Graph Sparsification}




Substantial efforts have been made on graph sparsification. In general, existing methods are mostly based on sampling~\cite{fung2019general,wickman2021sparrl}, in which the importance of edges can be evaluated by effective resistance~\citep{spielman2011graph,spielman2011spectral}, degree of neighboring nodes~\citep{hamann2016structure} or local similarity~\citep{satuluri2011local}. Among them, the most notable example is the spectrum-preserving approach that generates a $\gamma$-spectral approximation to $G$ such that:
\begin{equation}\label{eq:spectral_property}
    \frac{1}{\gamma} \vec{x}^T L_s \vec{x} \leq \vec{x}^T L \vec{x} \leq \gamma \vec{x}^T L_s \vec{x} \quad \text{for all} \quad \vec{x}.
\end{equation}
Remarkably, Spielman \emph{et al.} also proved that every graph $G$ has an $(1+\epsilon)$-spectral approximation $G_s$ with nearly $\mathcal{O}(\frac{N}{\epsilon^2})$ edges.

\textcolor{black}{Learning-based approach (especially which uses neural networks) for graph sparsification, in which there is an explicit learning objective and can be directed optimized, is still less-investigated.} GSGAN~\citep{wu2020graph} is designed mainly for community detection, whereas SparRL~\citep{wickman2021sparrl} uses deep reinforcement learning to sequentially prune edges by preserving the subgraph modularity. Different from GSGAN and SparRL, we demonstrate below that a sparsified graph can be learned simply by a gradient-based method in a principled (information-theoretic) manner, avoiding the necessity of reinforcement learning or the tuning of a generative adversarial network (GAN)~\citep{goodfellow2014generative}.



}

\section{PRI for Graph Sparsification}

\subsection{The Learning Objective}
Suppose we are given a graph $G$ with a known but fixed topology that is characterized by its graph Laplacian $\rho$, from which we want to obtain a surrogate subgraph $G_s$ with graph Laplacian $\sigma$, by preferentially removing less informative (or redundant) edges in $G$. Motivated by the objective of PRI in Eq.~(\ref{eq_PRI_obj}), we can cast this problem as a trade-off between the entropy $S(\sigma)$ of $G_s$ and its descriptive power about $G$ in terms of their divergence (or dissimilarity) $D(\sigma||\rho)$:
\begin{equation} \label{eq:pri_graph}
\mathcal{J}_{\text{Graph-PRI}} = \arg \min_{\sigma} S(\sigma) + \beta D(\sigma||\rho),
\end{equation}

In this paper, we choose von Neumann entropy on the trace normalized graph Laplacian (i.e., $\tilde{\sigma} = \sigma/\tr{\sigma}$ to quantify the entropy of $G_s$, which is defined on the cone of symmetric positive semi-definite (SPS) matrix with trace $1$ as~\citep{nielsen2002quantum}:
\begin{equation} \label{eq:vn}
S_{\text{vN}}(\tilde{\sigma}) = -\tr{\tilde{\sigma} \log\tilde{\sigma}}=-\sum_{i} \left( \lambda_i \log{\lambda_i}  \right),
\end{equation}
in which $\log(\cdot)$ is the matrix logarithm, $\tr{\cdot}$ denotes the trace, $\{\lambda_i\}$ are the eigenvalues of $\tilde{\sigma}$.



We then use the quantum Jenssen-Shannon (QJS) divergence between two trace normalized graph Laplacians $\tilde{\sigma}$ and $\tilde{\rho}$ to quantify the divergence between $G$ and $G_s$~\citep{lamberti2008metric}:
\begin{equation} \label{eq:jsd}
D_{\text{QJS}}(\tilde{\sigma}||\tilde{\rho}) = S_{\text{vN}}\left(\frac{\tilde{\sigma} + \tilde{\rho}}{2}\right) - \frac{1}{2} S_{\text{vN}}(\tilde{\sigma}) - \frac{1}{2} S_{\text{vN}}(\tilde{\rho}).
\end{equation}

In this paper, we absorb a scaling constant $2$ into the expression for $D_{\text{QJS}}(\tilde{\sigma}||\tilde{\rho})$, the resulting objective combining Eqs.~(\ref{eq:pri_graph})-(\ref{eq:jsd}) is given by:
\begin{align}
\label{eq:pri_final}
\mathcal{J}_{\text{Graph-PRI}} & = \arg \min S_{\text{vN}}(\tilde{\sigma}) + \beta D_{\text{QJS}}(\tilde{\sigma}||\tilde{\rho}) \\ \nonumber
& = \arg \min S_{\text{vN}}(\tilde{\sigma}) \\ \nonumber
& + \beta \left[ 2 S_{\text{vN}}\left(\frac{\tilde{\sigma} + \tilde{\rho}}{2}\right) - S_{\text{vN}}(\tilde{\sigma}) - S_{\text{vN}}(\tilde{\rho}) \right] \\ \nonumber  
& \equiv \arg \min (1-\beta)S_{\text{vN}}(\tilde{\sigma}) + 2\beta S_{\text{vN}}\left(\frac{\tilde{\sigma} + \tilde{\rho}}{2}\right).
\end{align}
We remove an extra term $-\beta S_{\text{vN}}(\tilde{\rho})$ in the last line of Eq.~(\ref{eq:pri_final}), because it is a constant value with respect to $\tilde{\sigma}$.






\subsection{Justification of the Objective of Graph-PRI}\label{sec:justification}

One may ask why we choose the von Neumann entropy in $\mathcal{J}_{\text{graph-PRI}}$. In fact, the Laplacian spectrum contains rich information about the multi-scale structure of graphs~\citep{mohar1997some}. For example, it is well-known that the second smallest eigenvalue $\lambda_2(L)$, which is also called the algebraic connectivity, is always considered to be a measure of how well-connected a graph is~\citep{ghosh2006growing}.

On the other hand, it is natural to use the QJS divergence to quantify the dissimilarity between the original graph and its sparsified version. The QJS divergence is symmetric and its square root has also recently been found to satisfy the triangle inequality~\citep{virosztek2021metric}. In fact, as a graph dissimilarity measure, QJS has also found applications in multilayer networks compression~\citep{de2015structural} and anomaly detection in graph streams~\citep{chen2019fast}.

A few recent studies indicate the close connections between $S_{\text{vN}}(L)$ with the structure \emph{regularity} and \emph{sparsity} of a graph~\citep{passerini2008neumann,han2012graph,liu2021bridging,simmons2018neumann}. We shall now highlight three theorems therein and explain our justifications in Sections 3.2.1 and 3.2.2 in detail.

\begin{theorem}[\citep{passerini2008neumann}]\label{thm:add_edge}
Given an undirected graph $G=\{V,E\}$, let $G'=G+\{u,v\}$, where $V(G)=V(G')$ and $E(G)=E(G')\cup\{u,v\}$, we have:
\begin{equation}\label{eq:monotonic}
    S_{\text{vN}}(L_{G'})\geq\frac{d_{G'}-2}{d_{G'}} S_{\text{vN}}(L_G),
\end{equation}
where $d_{G'}=\sum_{v\in V(G')} d(v)$ is the \emph{degree-sum} of $G'$, $L_G$ and $L_{G'}$ refer to respectively the graph Laplacians of $G$ and $G’$.
\end{theorem}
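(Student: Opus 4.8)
The plan is to exploit the fact that $S_{\text{vN}}$ is defined on the \emph{trace-normalized} Laplacian, combined with the concavity of the von Neumann entropy on the set of density matrices. I read $G'=G+\{u,v\}$ as adjoining a single edge $\{u,v\}$ to $G$ (so that $E(G')=E(G)\cup\{u,v\}$, the set relation in the statement being transposed), and I read $S_{\text{vN}}(L_G)$ as the entropy of $L_G/\tr{L_G}$ in the sense fixed by Eq.~(6). By Eq.~(5), the new edge contributes its incidence dyad to the Laplacian, so $L_{G'}=L_G+\mathbf{b}\mathbf{b}^T$ with $\mathbf{b}=\mathbf{e}_u-\mathbf{e}_v$. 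Since $\tr{\mathbf{b}\mathbf{b}^T}=\|\mathbf{e}_u-\mathbf{e}_v\|^2=2$, I would first record the bookkeeping identity $d_{G'}=\tr{L_{G'}}=\tr{L_G}+2=d_G+2$.

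The key step is to present the normalized Laplacian of $G'$ as a convex combination. Dividing by $d_{G'}$ gives
\begin{equation}
\tilde{L}_{G'} = \frac{L_G + \mathbf{b}\mathbf{b}^T}{d_{G'}} = \frac{d_G}{d_{G'}}\,\tilde{L}_G + \frac{2}{d_{G'}}\cdot\frac{\mathbf{b}\mathbf{b}^T}{2},
\end{equation}
where $\tilde{L}_G = L_G/d_G$. Setting $\alpha = d_G/d_{G'} = (d_{G'}-2)/d_{G'}$ exhibits $\tilde{L}_{G'}$ as $\alpha\,\tilde{L}_G + (1-\alpha)\,\rho_e$, a genuine convex combination of two symmetric positive semi-definite matrices of trace $1$, in which $\rho_e = \mathbf{b}\mathbf{b}^T/2$ is the rank-one density matrix associated with the added edge.

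I would then invoke concavity of the von Neumann entropy, namely $S_{\text{vN}}\bigl(\alpha\,\tilde{L}_G + (1-\alpha)\,\rho_e\bigr) \geq \alpha\,S_{\text{vN}}(\tilde{L}_G) + (1-\alpha)\,S_{\text{vN}}(\rho_e)$. Because $\rho_e$ is rank one, its spectrum is $\{1,0,\dots,0\}$, so $S_{\text{vN}}(\rho_e)=0$ and the second term vanishes. This leaves $S_{\text{vN}}(L_{G'}) \geq \alpha\,S_{\text{vN}}(L_G) = \frac{d_{G'}-2}{d_{G'}}\,S_{\text{vN}}(L_G)$, which is exactly the claimed bound, and the argument makes transparent that equality requires the added edge to act trivially on the spectral structure of $\tilde{L}_G$.

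The only substantive obstacle is justifying the concavity of $S_{\text{vN}}$ on density matrices; this is a classical fact, following from the concavity of the scalar map $x\mapsto -x\log x$ together with Klein's inequality (equivalently, the nonnegativity of the quantum relative entropy). Beyond that, the proof is essentially bookkeeping: the decisive observations are that one extra edge raises the degree-sum (the normalization constant) by exactly $2$, and that the normalized contribution of that edge is a \emph{pure state} of zero entropy, so the weight $\alpha=(d_{G'}-2)/d_{G'}$ falls out automatically from the normalization.
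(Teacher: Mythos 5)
Your proof is correct. Note that the paper itself never proves this statement -- it is quoted verbatim from the cited reference (Passerini and Severini, 2008) -- and your argument is essentially the original one from that source: write the trace-normalized Laplacian of $G'$ as the convex combination $\tilde{L}_{G'} = \alpha\,\tilde{L}_G + (1-\alpha)\,\rho_e$ with $\alpha = d_G/d_{G'} = (d_{G'}-2)/d_{G'}$, observe that the pure edge state $\rho_e = \mathbf{b}\mathbf{b}^T/2$ has zero entropy, and invoke concavity of the von Neumann entropy (correctly justified via Klein's inequality rather than scalar concavity alone, since $\tilde{L}_G$ and $\rho_e$ need not commute); your reading of the transposed edge-set relation in the statement as $E(G')=E(G)\cup\{\{u,v\}\}$ is also the intended one.
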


Theorem~\ref{thm:add_edge} shows that $S_{\text{vN}}(L)$ tends to grow with edge addition. Although Eq.~(\ref{eq:monotonic}) does not indicate a monotonic increasing trend for $S_{\text{vN}}(L)$, it does suggest that minimizing $S_{\text{vN}}(L)$ may lead to a sparser graph, especially when the degree-sum is large.

\begin{theorem}[\citep{liu2021bridging}]\label{thm:entropy_diff}
For any undirected graph $G=\{V,E\}$, we have:
\begin{equation}\label{eq:vN_bound_weighted}
    0\leq \Delta H(G) = H(G)-S_{\text{vN}}(L_G) \leq \frac{\log_2e}{\delta} \frac{\tr{W^2}}{d_G},
\end{equation}
where $H(G)=-\sum_{i=1}^{N}\left(\frac{d_i}{d_G}\right)\log_2{\left(\frac{d_i}{d_G}\right)}$, in which $\delta=\min d_i|d_i>0$ is the minimum positive node degree, $d_G$ is the \emph{degree-sum}, $W$ is the weighted adjacency matrix of $G$.
\end{theorem}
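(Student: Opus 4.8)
The plan is to reinterpret $\Delta H(G)$ as a \emph{quantum relative entropy} and then sandwich it from both sides. Write $\tilde L = L_G/d_G$ for the trace-normalized Laplacian (recall $\tr{L_G}=\sum_i d_i = d_G$, so $S_{\text{vN}}(L_G)$ is the von Neumann entropy of $\tilde L$), and let $\tilde D = D/d_G$ denote its diagonal part, a density matrix with diagonal entries $d_i/d_G$. First I would record the identity, with $D(\rho\|\sigma):=\tr{\rho(\log_2\rho-\log_2\sigma)}$ the quantum relative entropy,
\begin{equation*}
D(\tilde L \,\|\, \tilde D) = \tr{\tilde L\log_2\tilde L} - \tr{\tilde L \log_2 \tilde D} = -S_{\text{vN}}(L_G) - \tr{\tilde L \log_2 \tilde D}.
\end{equation*}
Since $\log_2\tilde D$ is diagonal and $\tilde L$ shares its diagonal, $\tr{\tilde L \log_2\tilde D}=\sum_i (d_i/d_G)\log_2(d_i/d_G) = -H(G)$, so this collapses to $D(\tilde L\|\tilde D) = H(G) - S_{\text{vN}}(L_G) = \Delta H(G)$. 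In words, $\Delta H$ is exactly the relative entropy between the normalized Laplacian and its own diagonal (its ``relative entropy of coherence'').

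The lower bound is then immediate, as quantum relative entropy is nonnegative. Equivalently, one can argue by the Schur--Horn theorem that the diagonal $(d_i/d_G)_i$ of $\tilde L$ is majorized by its spectrum $\{\lambda_i\}$; since $x\mapsto -x\log_2 x$ is Schur-concave, $H(G)\ge S_{\text{vN}}(L_G)$, i.e. $\Delta H(G)\ge 0$.

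For the upper bound I would invoke the matrix analogue of the classical inequality $\mathrm{KL}\le\chi^2$, namely that relative entropy is dominated by the (symmetric, $\alpha=\tfrac12$) quantum $\chi^2$-divergence,
\begin{equation*}
D(\rho\|\sigma)\le \frac{1}{\ln 2}\,\tr{(\rho-\sigma)\,\sigma^{-1/2}(\rho-\sigma)\,\sigma^{-1/2}},
\end{equation*}
applied with $\rho=\tilde L$ and $\sigma=\tilde D$. Here $\rho-\sigma = -W/d_G$ and $\sigma^{-1/2}=\sqrt{d_G}\,D^{-1/2}$ on the support of $D$, so a cyclic-trace computation gives
\begin{equation*}
\tr{(\rho-\sigma)\sigma^{-1/2}(\rho-\sigma)\sigma^{-1/2}} = \frac{1}{d_G}\tr{D^{-1/2}W D^{-1/2}W} = \frac{1}{d_G}\sum_{i,j}\frac{W_{ij}^2}{\sqrt{d_i d_j}}.
\end{equation*}
Finally, every nonzero $W_{ij}$ joins two vertices of degree at least $\delta$, so $\sqrt{d_i d_j}\ge \delta$ and $\sum_{i,j}W_{ij}^2/\sqrt{d_id_j}\le \tr{W^2}/\delta$; combined with $1/\ln 2=\log_2 e$ this yields $\Delta H(G)\le \frac{\log_2 e}{\delta}\,\frac{\tr{W^2}}{d_G}$, matching the claim exactly.

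The main obstacle is the upper-bound step: one must pin down the correct quantum $\chi^2$-divergence (the $\sigma^{-1/2}(\cdot)\sigma^{-1/2}$ variant) and confirm it dominates the relative entropy in the \emph{noncommuting} case, not merely in the commuting/classical reduction where the bound is transparent. A self-contained alternative that avoids this citation is to write $\Delta H = -\int_0^1(1-s)\,g''(s)\,ds$ for $g(s)=S_{\text{vN}}\!\big((D-sW)/d_G\big)$ --- the interpolant is a valid density matrix on $[0,1]$ since $D-sW=(1-s)D+sL_G\succeq 0$, and $g'(0)=0$ because the off-diagonal perturbation is trace-orthogonal to the diagonal $\log_2\tilde D$ --- and to bound the integrated second derivative of the entropy via its resolvent (Fr\'echet) representation. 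However, the unavoidable zero eigenvalue of $L_G$ makes $(\rho_s+u)^{-1}$ singular and complicates the resulting integral, which is precisely why the $\chi^2$ route, depending only on the nonsingular $\tilde D$, is cleaner.
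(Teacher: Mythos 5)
The paper never proves this theorem: it is quoted, with citation, from \citep{liu2021bridging}, so there is no internal proof to compare against. Judged on its own, your argument is correct, and it follows essentially the same route as the cited source: identify $\Delta H(G)$ with the quantum relative entropy $D(\tilde L\,\|\,\tilde D)$ between the trace-normalized Laplacian and its diagonal part (your identity is right, since $\tilde L$ and $\tilde D$ share the same diagonal and $\log_2\tilde D$ is diagonal), obtain $\Delta H\ge 0$ from nonnegativity of relative entropy (or, equivalently, Schur--Horn majorization plus Schur-concavity of entropy), and obtain the upper bound by dominating the relative entropy with a quantum $\chi^2$-divergence whose value here is $\frac{1}{d_G}\sum_{i,j}W_{ij}^2/\sqrt{d_i d_j}\le \tr{W^2}/(\delta d_G)$.

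The step you flag as the main obstacle is a genuine theorem, and it can be closed in two lines, so your proof has no gap. By monotonicity in $\alpha$ of the sandwiched R\'enyi divergences, $D(\rho\|\sigma)\le \tilde D_2(\rho\|\sigma)=\ln\tr{\rho\sigma^{-1/2}\rho\sigma^{-1/2}}$ (in nats); writing $\rho=\sigma+\Delta$ with $\tr{\Delta}=0$ and expanding gives $\tr{\rho\sigma^{-1/2}\rho\sigma^{-1/2}}=1+\chi^2_{1/2}(\rho\|\sigma)$, hence $D\le\ln\bigl(1+\chi^2_{1/2}\bigr)\le \chi^2_{1/2}$. Alternatively, you can bypass the $\alpha=\tfrac12$ variant entirely: the Petz--R\'enyi-$2$ bound $D(\rho\|\sigma)\le\ln\tr{\rho^2\sigma^{-1}}=\ln\bigl(1+\tr{(\rho-\sigma)^2\sigma^{-1}}\bigr)$ gives $\Delta H\le \log_2 e\cdot\frac{1}{d_G}\sum_{i,j}W_{ij}^2/d_j$, and $d_j\ge\delta$ on every nonzero term, so it yields the identical final bound with machinery that is easier to justify. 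Two minor points of hygiene: (i) isolated vertices make $\tilde D$ singular, but $\ker D\subseteq\ker W$ and $\ker D\subseteq\ker L_G$, so all quantities restrict to the support of $D$ exactly as you assume; (ii) your trace identity $\tr{\tilde L\log_2\tilde D}=-H(G)$ uses that $W$ has zero diagonal (no self-loops), which is the standing convention here and worth stating explicitly.
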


\begin{theorem}[\citep{liu2021bridging}]\label{thm:convergence}
For almost all unweighted graphs $G$ of order $n$, we have:
\begin{equation}\label{eq:convergence}
    \frac{H(G)}{S_{\text{vN}}(L_G)} - 1 \geq 0,
\end{equation}
and decays to $0$ at a rate of $\mathcal{O}(1/\log_2(n))$.
\end{theorem}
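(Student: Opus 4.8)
The plan is to read the claim as a statement about the ratio $\Delta H(G)/S_{\text{vN}}(L_G)$, where $\Delta H(G)=H(G)-S_{\text{vN}}(L_G)$ is precisely the quantity controlled by Theorem~\ref{thm:entropy_diff}. Writing
\begin{equation*}
\frac{H(G)}{S_{\text{vN}}(L_G)}-1=\frac{H(G)-S_{\text{vN}}(L_G)}{S_{\text{vN}}(L_G)}=\frac{\Delta H(G)}{S_{\text{vN}}(L_G)},
\end{equation*}
the non-negativity in Eq.~(\ref{eq:convergence}) is immediate: the left inequality $\Delta H(G)\geq 0$ of Theorem~\ref{thm:entropy_diff} gives $H(G)\geq S_{\text{vN}}(L_G)$, and $S_{\text{vN}}(L_G)>0$ whenever the normalized Laplacian is not rank-one (which fails only for the trivial single-edge graph), so the ratio is well defined and $\geq 0$. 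Thus only the decay rate needs work, and it amounts to bounding the numerator from above and the denominator from below.

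First I would simplify the numerator using the fact that $G$ is \emph{unweighted}. Then $W=A$ with $A_{ij}\in\{0,1\}$, so $\tr{W^2}=\sum_{i,j}A_{ij}^2=\sum_{i,j}A_{ij}=\sum_i d_i=d_G$. Substituting $\tr{W^2}/d_G=1$ into the right-hand bound of Theorem~\ref{thm:entropy_diff} collapses it to $\Delta H(G)\leq (\log_2 e)/\delta$. Since $\delta\geq 1$ for any graph with an edge, the numerator is bounded by the \emph{constant} $\log_2 e$, independent of $n$; for almost all graphs one moreover has $\delta=\Theta(n)$, so in fact $\Delta H(G)=\mathcal{O}(1/n)$.

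The substance is the lower bound $S_{\text{vN}}(L_G)=\Omega(\log_2 n)$ for almost all graphs, and here I would avoid a direct spectral analysis of the random Laplacian by exploiting the sandwich $H(G)-\Delta H(G)=S_{\text{vN}}(L_G)\leq H(G)$ once more. Interpreting ``almost all graphs'' through the uniform model $G(n,\tfrac12)$ over labelled graphs on $n$ vertices, a standard concentration argument shows that, with probability tending to $1$, every degree satisfies $d_i=\tfrac{n}{2}\pm\mathcal{O}(\sqrt{n\log n})$. Hence the normalized degree distribution $\{d_i/d_G\}$ is within $o(1)$ of the uniform distribution on $n$ atoms, which forces $H(G)=\log_2 n-o(\log_2 n)$. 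Combining with $\Delta H(G)=\mathcal{O}(1)$ yields $S_{\text{vN}}(L_G)=H(G)-\Delta H(G)=(1-o(1))\log_2 n$, i.e.\ the von Neumann entropy sits near its maximal value $\log_2(n-1)$.

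Putting the pieces together,
\begin{equation*}
0\leq\frac{H(G)}{S_{\text{vN}}(L_G)}-1=\frac{\Delta H(G)}{S_{\text{vN}}(L_G)}\leq\frac{\log_2 e}{\delta\,(1-o(1))\log_2 n}=\mathcal{O}\!\left(\frac{1}{\log_2 n}\right),
\end{equation*}
which is the claimed rate (indeed, inserting $\delta=\Theta(n)$ the decay is even faster, $\mathcal{O}(1/(n\log_2 n))$, so $\mathcal{O}(1/\log_2 n)$ is a safe but loose guarantee). The main obstacle is the denominator estimate: one must certify that \emph{typical} graphs have near-maximal von Neumann entropy. The sandwiching via $H(G)$ reduces this to degree concentration, which is routine for $G(n,\tfrac12)$ but would become delicate if ``almost all'' were read over a sparse regime, where the degree distribution is no longer concentrated and $H(G)$ need not approach $\log_2 n$; pinning down exactly which graph ensemble the ``almost all'' quantifier ranges over is the one place where care is genuinely required.
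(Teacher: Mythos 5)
The paper offers no proof for you to be compared against: Theorem~\ref{thm:convergence} is quoted verbatim from \citep{liu2021bridging} and used as a black box, as is Theorem~\ref{thm:entropy_diff} on which your argument rests. Judged on its own merits, your proof is correct. The decomposition $H(G)/S_{\text{vN}}(L_G)-1=\Delta H(G)/S_{\text{vN}}(L_G)$, the observation that $\operatorname{tr}(W^2)=d_G$ for unweighted graphs --- which collapses the bound of Theorem~\ref{thm:entropy_diff} to $\Delta H(G)\leq \log_2 e/\delta\leq\log_2 e$ --- and the lower bound $S_{\text{vN}}(L_G)\geq H(G)-\log_2 e=(1-o(1))\log_2 n$ obtained by reusing the same sandwich, together yield the claimed $\mathcal{O}(1/\log_2 n)$ rate. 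You also correctly separate what holds deterministically (non-negativity, which needs only $\Delta H(G)\geq 0$ and $S_{\text{vN}}(L_G)>0$, the latter failing only for a single edge plus isolated vertices) from what genuinely needs the ``almost all'' quantifier (the rate), and you resolve that quantifier in its standard sense --- uniform over labelled graphs of order $n$, equivalently $G(n,\tfrac12)$ with probability tending to one --- where Chernoff-plus-union-bound degree concentration gives $H(G)=(1-o(1))\log_2 n$. Your closing observation is also on target: with $\delta=\Theta(n)$ w.h.p.\ the decay is in fact $\mathcal{O}(1/(n\log_2 n))$, and the stated $\mathcal{O}(1/\log_2 n)$ is what survives if one only invokes the worst-case $\delta\geq 1$; that looseness sits in the cited theorem itself, not in your argument.
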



\textcolor{black}{Theorem~\ref{thm:entropy_diff} and Theorem~\ref{thm:convergence} bound the difference between $S_{\text{vN}}(L)$ and $H(G)$, the Shannon discrete entropy on node degree. They also indicate that $H(G)$ is a natural choice of the fast approximation to $S_{\text{vN}}(L)$. In fact, there are different fast approximation approaches so far~\citep{chen2019fast,minello2019neumann,kontopoulou2020randomized}. According to~\citep{liu2021bridging}, $H(G)$ enjoys simultaneously good scalability, interpretability and provable
accuracy.}

\subsubsection{$\beta$ controls the sparsity of $G_s$} \label{sec:sparsity}

Different from the spectral sparsifiers~\citep{spielman2011graph,spielman2011spectral} in which the sparsity of the subgraph is hard to control (i.e., there is no monotonic relationship between the hyperparameter $\epsilon$ and the degree of sparisity as measured by $|E_s|$), we argue that the sparsity of subgraph obtained by Graph-PRI is mainly determined by the value of hyperparameter $\beta$. 


Our argument is mainly based on Theorem~\ref{thm:add_edge}. Here, we additionally claim that, under a mild condition (Assumption~\ref{assumption}), the QJS divergence $D_{\text{QJS}}(L\|L_s)$ is prone to decrease with edge addition (Corollary 1). 

\begin{assumption}\label{assumption}
Given an undirected graph $G=\{V,E\}$, let $G'=G+\{u,v\}$, where $V(G)=V(G')$ and $E(G)=E(G')\cup\{u,v\}$, we have
$S_{\text{vN}}(L_{G'})\geq S_{\text{vN}}(L_G)$,
i.e., there exists a strictly monotonically increasing relationship between the number of edges $|G|$ and the von Neumann entropy $S_{\text{vN}}(L_G)$.
\end{assumption}

\begin{corollary}\label{corollary}
Under Assumption~\ref{assumption}, suppose $G_s=\{V_s,E_s\}$ is a sparse graph obtained from $G=\{V,E\}$ (by removing edges), let $G_s'=G_s+\{u,v\}$, where $\{u,v\}$ is an edge from the original graph $G$, $V(G_s)=V(G_s')$ and $E(G_s)=E(G_s')\cup\{u,v\}$, we have
$D_{\text{QJS}}(L_{G_s'}\|L_G)\leq D_{\text{QJS}}(L_{G_s}\|L_G)$,
i.e., adding an edge is prone to decrease the QJS divergence.
\end{corollary}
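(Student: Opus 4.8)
The plan is to reduce the claimed inequality to a comparison of two von Neumann entropy increments and then control that comparison with Assumption~\ref{assumption} together with the concavity of $S_{\text{vN}}$. Writing $\tilde{\sigma}=\tilde{L}_{G_s}$, $\tilde{\sigma}'=\tilde{L}_{G_s'}$ and $\tilde{\rho}=\tilde{L}_{G}$, I would first expand both divergences through the definition in Eq.~(\ref{eq:jsd}). Since $\tilde{\rho}$ is fixed, the term $-\tfrac12 S_{\text{vN}}(\tilde{\rho})$ is common to both and cancels, leaving
\begin{equation*}
D_{\text{QJS}}(\tilde{\sigma}\|\tilde{\rho}) - D_{\text{QJS}}(\tilde{\sigma}'\|\tilde{\rho}) = \tfrac12\big[S_{\text{vN}}(\tilde{\sigma}') - S_{\text{vN}}(\tilde{\sigma})\big] - \big[S_{\text{vN}}(M') - S_{\text{vN}}(M)\big],
\end{equation*}
where $M = \tfrac{\tilde{\sigma}+\tilde{\rho}}{2}$ and $M' = \tfrac{\tilde{\sigma}'+\tilde{\rho}}{2}$, so that the two perturbations are linked by $M' - M = \tfrac12(\tilde{\sigma}'-\tilde{\sigma})$. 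Proving the corollary then amounts to showing this difference is nonnegative.

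Next I would interpret the two matrices $M$ and $M'$ as (trace-normalized) Laplacians of the superposition graph $\tfrac12 G_s + \tfrac12 G$: adding the edge $\{u,v\}$ to $G_s$ injects a weight-$\tfrac12$ copy of that edge into this superposition. Applying Assumption~\ref{assumption} (in its weighted form, consistent with the reformulated incidence matrix $B$) to the two graphs separately yields $S_{\text{vN}}(\tilde{\sigma}')\geq S_{\text{vN}}(\tilde{\sigma})$ and $S_{\text{vN}}(M')\geq S_{\text{vN}}(M)$. Hence the first bracket above is a nonnegative gain and the second is a nonnegative loss, and the whole statement collapses to the single inequality
\begin{equation*}
S_{\text{vN}}(M') - S_{\text{vN}}(M) \;\leq\; \tfrac12\big[S_{\text{vN}}(\tilde{\sigma}') - S_{\text{vN}}(\tilde{\sigma})\big].
\end{equation*}

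To attack this crux I would exploit concavity (diminishing returns) of $S_{\text{vN}}$: the operating point $M=\tfrac12\tilde{\sigma}+\tfrac12\tilde{\rho}\succeq\tfrac12\tilde{\sigma}$ is ``denser'' and higher-entropy than $\tilde{\sigma}$, so the same additive perturbation should raise its entropy less, and the explicit factor $\tfrac12$ gives further slack. Concretely, to first order in the perturbation the two sides differ by $\tfrac12\,\tr{(\log M - \log\tilde{\sigma})(\tilde{\sigma}'-\tilde{\sigma})}$; substituting $\tilde{\sigma}'-\tilde{\sigma}$ (the renormalized single-edge update) splits this into a favorable relative-entropy term $S_{\text{vN}}(\tilde{\sigma}\|M)\geq 0$ plus an edge quadratic form $\mathbf{b}^{T}(\log M-\log\tilde{\sigma})\mathbf{b}$ that I would try to bound through operator monotonicity of $\log$ applied to $M\succeq\tfrac12\tilde{\sigma}$.

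The main obstacle is precisely this last quadratic-form term: it is not sign-definite, and the crude bounds available ($\log M - \log\tilde{\sigma}\succeq -(\log 2)I$ on one side and $S_{\text{vN}}(\tilde{\sigma}\|M)\leq\log 2$ on the other) point in opposing directions, so the inequality cannot be forced unconditionally. This is exactly why the statement is phrased as a tendency (``prone to decrease'') and is asserted only under the mild Assumption~\ref{assumption}. I would therefore present the argument as establishing the dominant monotone trend, completing it rigorously in the perturbative (small edge weight) regime where the first-order term governs the sign, and otherwise relying on the two entropy-monotonicity facts plus concavity to justify that the net divergence decreases.
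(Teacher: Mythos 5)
Your reduction is exactly the paper's: after cancelling the common $-\tfrac12 S_{\text{vN}}(\tilde{\rho})$ term, the corollary is equivalent to the crux inequality $S_{\text{vN}}(M')-S_{\text{vN}}(M)\leq\tfrac12\bigl[S_{\text{vN}}(\tilde{\sigma}')-S_{\text{vN}}(\tilde{\sigma})\bigr]$ with $M'-M=\tfrac12(\tilde{\sigma}'-\tilde{\sigma})$, and the paper, like you, treats this perturbatively to first order. But you leave the crux unproven: you concede that your matrix-space expansion produces a sign-indefinite term $\mathbf{b}^T(\log M-\log\tilde{\sigma})\mathbf{b}$ you cannot control, and you retreat to ``tendency'' language. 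That is a genuine gap, and it stems from invoking Assumption~\ref{assumption} in the wrong place. You apply it \emph{along} each curve, to sign the two increments $S_{\text{vN}}(\tilde{\sigma}')-S_{\text{vN}}(\tilde{\sigma})\geq 0$ and $S_{\text{vN}}(M')-S_{\text{vN}}(M)\geq 0$; two non-negative numbers cannot be compared that way. The paper instead applies it \emph{across} the two operating points: the half-mixture $\bar{G}$, with Laplacian $\tfrac12(L_G+L_{G_s})$, contains more (weighted) edges than $G_s$, so Assumption~\ref{assumption} gives $S_{\text{vN}}(L_{\bar{G}})\geq S_{\text{vN}}(L_{G_s})$ --- a comparison between the sparse graph and the mixture that never appears in your argument.

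The second ingredient you are missing is the paper's Lemma~\ref{lemma3} (built on Lemma~\ref{lemma2}), a diminishing-returns statement formulated in the \emph{eigenvalue} domain rather than the matrix domain: if $\blambda'=\blambda+\bv$ with $S(\blambda')\geq S(\blambda)$, then $\E_{\bv}\log_2\blambda'\geq\E_{\bv}\log_2\blambda$, equivalently $v_i\,\frac{dS}{d\lambda_i}(\blambda')\big|_{\bv}\leq v_i\,\frac{dS}{d\lambda_i}(\blambda)\big|_{\bv}$; its proof writes the difference as a sum of two terms, each non-negative by positivity of the KL divergence. Applied with $\blambda=\blambda(L_{G_s})$ and $\blambda'=\blambda(L_{\bar{G}})$, this yields $\frac{dS_{\text{vN}}(L_{\bar{G}})}{d\lambda_i}\Delta\lambda_i\leq\frac{dS_{\text{vN}}(L_{G_s})}{d\lambda_i}\Delta\lambda_i$, and together with the factor $\tfrac12$ (the mixture sees only half of the edge perturbation) this closes the crux at first order --- precisely the step you could not force. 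Working with spectra and KL positivity, instead of matrix logarithms and operator monotonicity, is what circumvents the indefinite cross term that blocked you; your proposal stops exactly where the paper's real work begins, so it does not establish the stated inequality even at the paper's own (first-order, assumption-conditioned) level of rigor.
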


\textcolor{black}{We provide additional information on the rigor of Assumption~\ref{assumption} in Appendix~\ref{sec:assumption_rigor}.} Combining Theorem~\ref{thm:add_edge} and Corollary~\ref{corollary}, it is interesting to find that the edge addition has opposite effects on $S_{\text{vN}}$ and $D_{\text{QJS}}$: the former is likely to increase whereas the latter will decrease. Therefore, when minimizing the weighted sum of $S_{\text{vN}}$ and $D_{\text{QJS}}$ together as in Graph-PRI, one can expect the number of edges in $G_s$ is mainly determined by the hyperprameter $\beta$: a smaller $\beta$ gives more weight to $S_{\text{vN}}$ and thus encourages a more sparse graph.

To empirically justify our argument, we generate a set of graphs with $200$ nodes by either the Erd{\"o}s-R\'enyi (ER) model or the Barab\'asi-Albert (BA) model~\citep{barabasi1999emergence}. For both models, we generate the original dense graph $G$ where the average of the node degree $\bar{d}$ is approximately $10$, $20$ and $30$, respectively. We then sparsify $G$ to obtain $G_s$ by a random sparsifier, which satisfies the spectral property (i.e., Eq.~(\ref{eq:spectral_property})), whose computational complexity is, however, low~\citep{sadhanala2016graph}. 

We finally evaluate the von Neumann entropy of $G_s$ and the QJS divergence between $G$ and $G_s$ with respect to different percentages (pct.) of preserved edges. We repeat the procedure $100$ independent times and the averaged results are plotted in Fig.~\ref{fig:tradeoff}, from which we can clearly observe the opposite effects mentioned above. We also sparisify the original graph $G$ by our Graph-PRI with different values of $\beta$. The number of preserved edges in $G_s$ with respect to $\beta$ is illustrated in Fig.~\ref{fig:monotonic}, from which we can observe an obvious monotonic increasing relationship.


\begin{figure}[ht]
  \begin{subfigure}[]{0.235\textwidth}
    \includegraphics[width=\textwidth]{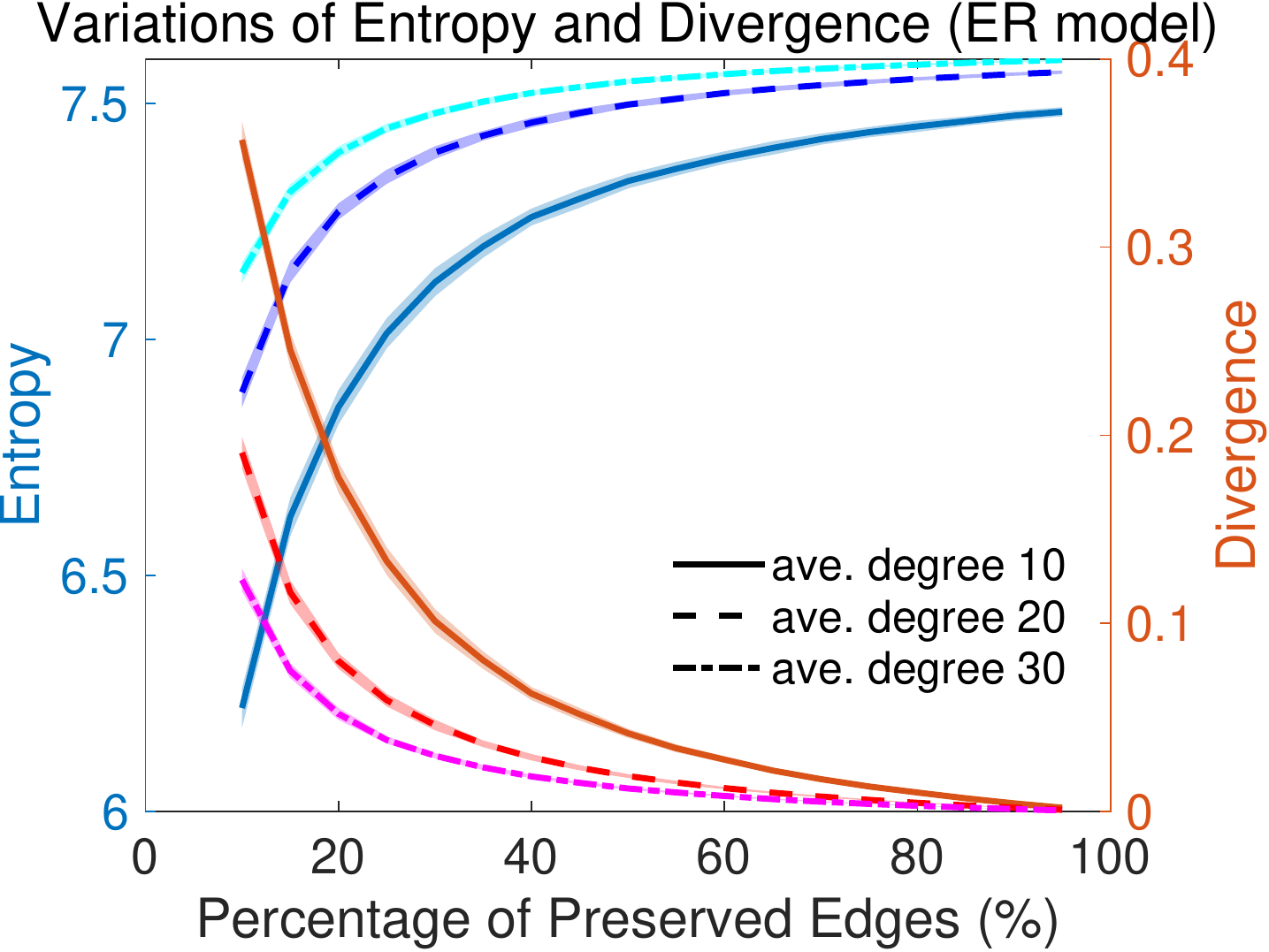}
    \caption{ER model}
  \end{subfigure}
  \begin{subfigure}[]{0.235\textwidth}
    \includegraphics[width=\textwidth]{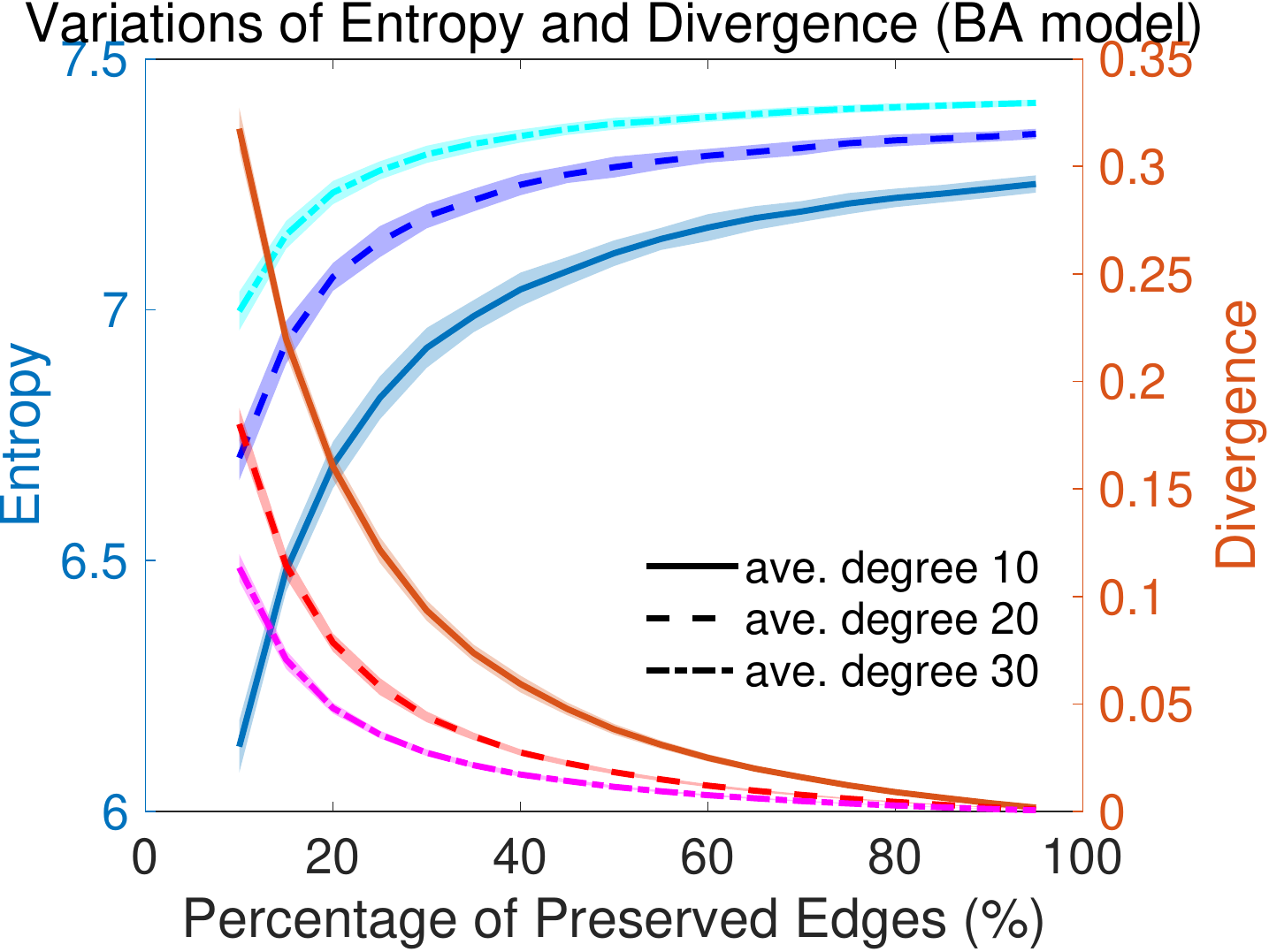}
    \caption{BA model}
  \end{subfigure}
  \caption{The variations of entropy and divergence with respect to different percentages of preserved edges.} 
  \label{fig:tradeoff}
\end{figure}

\begin{figure}[ht]
  \begin{subfigure}[]{0.235\textwidth}
    \includegraphics[width=\textwidth]{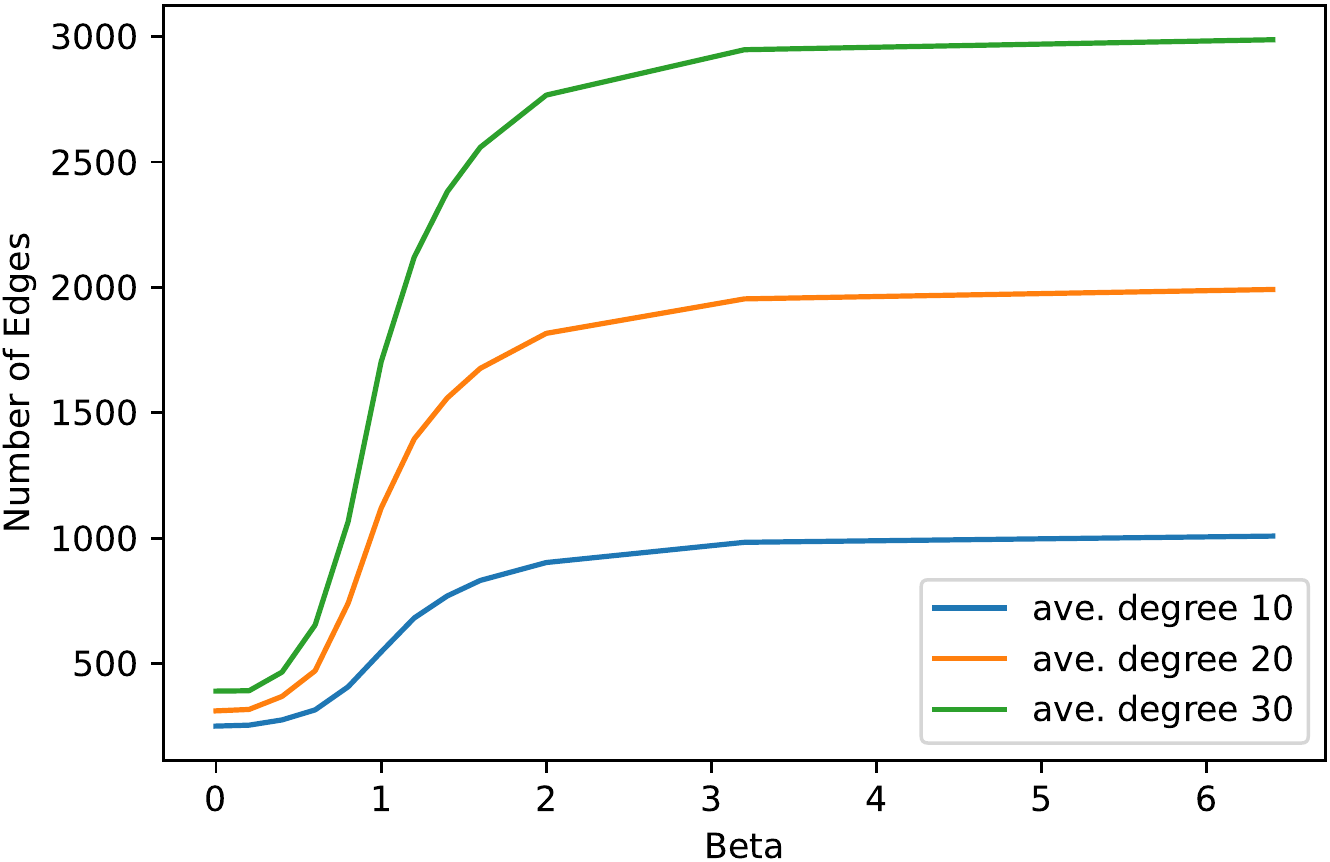}
    \caption{ER model}
  \end{subfigure}
  \begin{subfigure}[]{0.235\textwidth}
    \includegraphics[width=\textwidth]{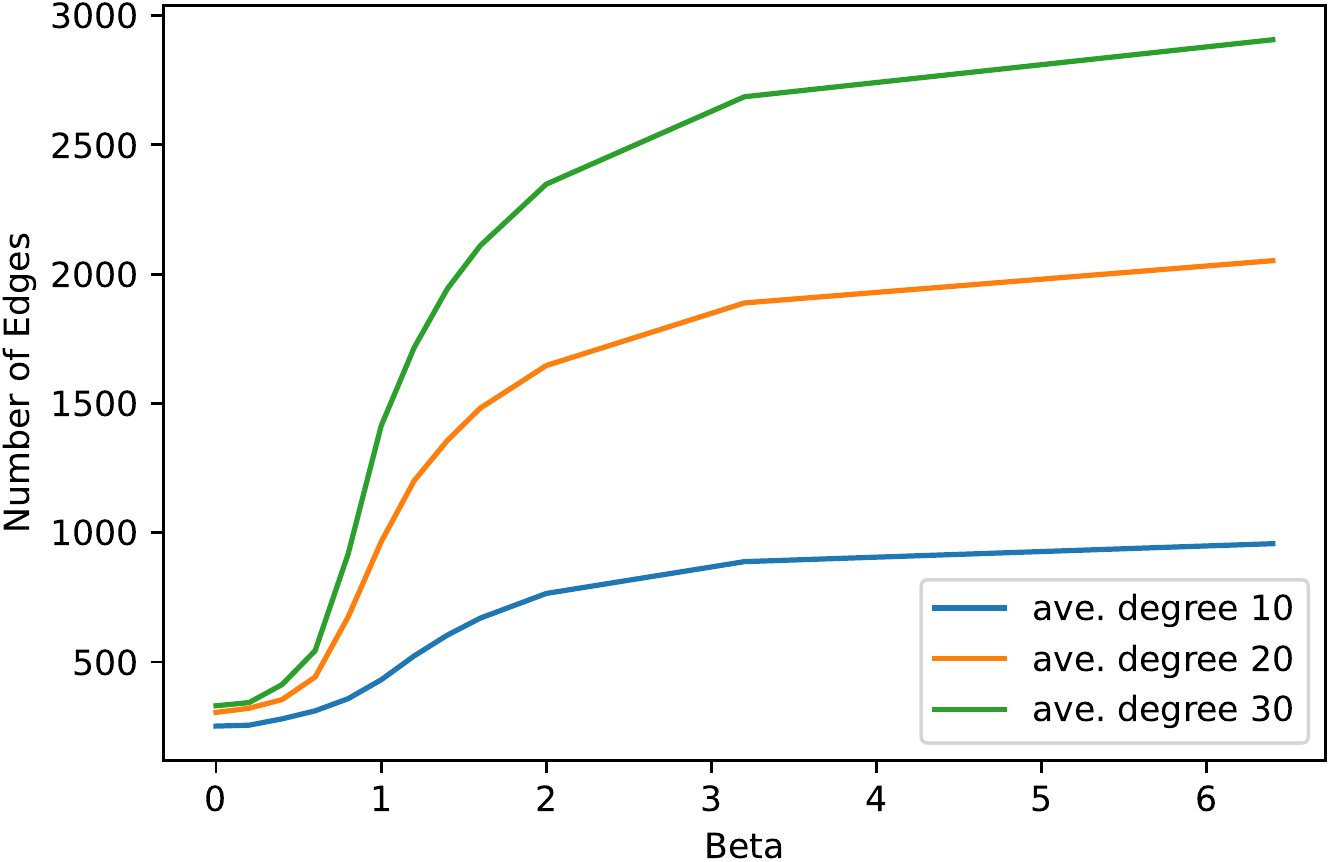}
    \caption{BA model}
  \end{subfigure}
  \caption{The monotonic increasing relationship between number of preserved edges and hyperparameter $\beta$ in our Graph-PRI.} 
  \label{fig:monotonic}
\end{figure}


\subsubsection{Graph-PRI in special cases of $\beta$}

Continuing our discussion in Sec.~\ref{sec:sparsity}, it would be interesting to infer what may happen in some special cases of $\beta$. Here, we restrict our discussion with $\beta=0$ and $\beta \to\infty$. 

When $\beta=0$, due to Theorems~\ref{thm:entropy_diff} and \ref{thm:convergence}, our objective can be interpreted as $\min H(G)$. $H(G)$ takes the mathematical form of the Shannon discrete entropy (i.e., $-\sum_{i=1}^N \mathbb{P}(x_i)\log \mathbb{P}(x_i)$, in which $\mathbb{P}(x_i)$ is the probability of the $i$-th state) on the degree of node. In this sense, $H(G)$ reaches to maximum for uniformly distributed degree of node ($i.e.$, $d_1=\cdots=d_N=k$, which is also called the $k$-regular graph) and reduces to minimum if the degree of one node dominates (i.e., a star graph that possesses a high level of centralization). In fact, it was also conjectured in~\citep{dairyko2017note} that among connected graphs with fixed order $n$, the star graph $S_n$ minimizes the von Neumann entropy. Thus, $S_{\text{vN}}(L)$ can also be interpreted as a measure of degree heterogeneity or graph centrality~\citep{simmons2018neumann}. It also indicates that minimizing $S_{\text{vN}}(L)$ pushes the Graph-PRI to learn a graph that has more graph centrality.

When $\beta\to\infty$, we are expect to recover original graph $G$ by Corollary~\ref{corollary}. Fig.~\ref{fig:pri_demo} corroborates our analysis.

Interestingly, similar properties also hold for the original PRI in scalar random variable setting (see Appendix~\ref{sec:appendix_PRI}). 



\begin{figure*}[ht]
  \begin{subfigure}[]{0.23\textwidth}
    \includegraphics[width=\textwidth]{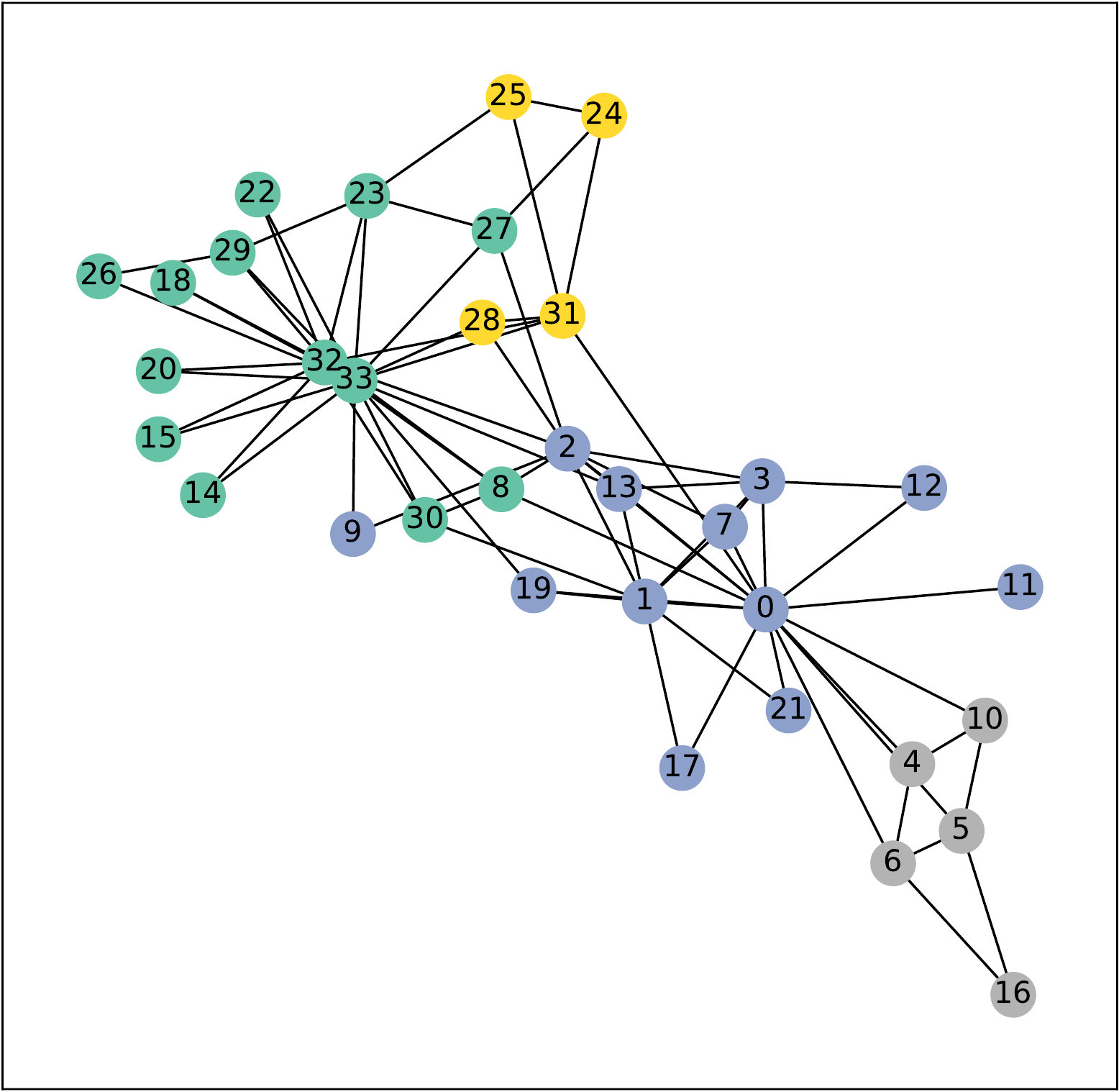}
    \caption{Karate (original)}
  \end{subfigure}
  \begin{subfigure}[]{0.23\textwidth}
    \includegraphics[width=\textwidth]{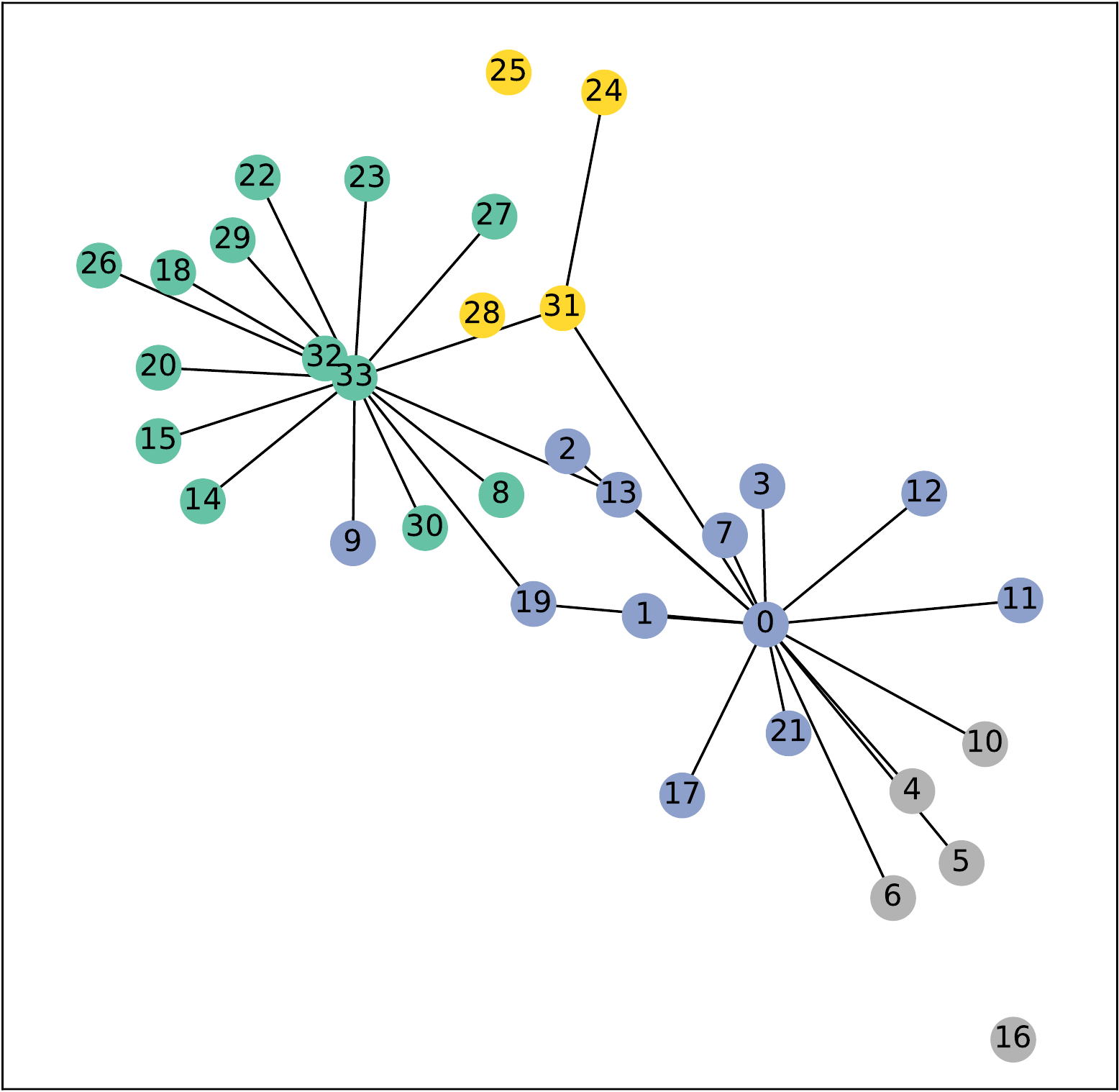}
    \caption{$\beta=0$}
  \end{subfigure}
  %
  %
    \begin{subfigure}[]{0.23\textwidth}
    \includegraphics[width=\textwidth]{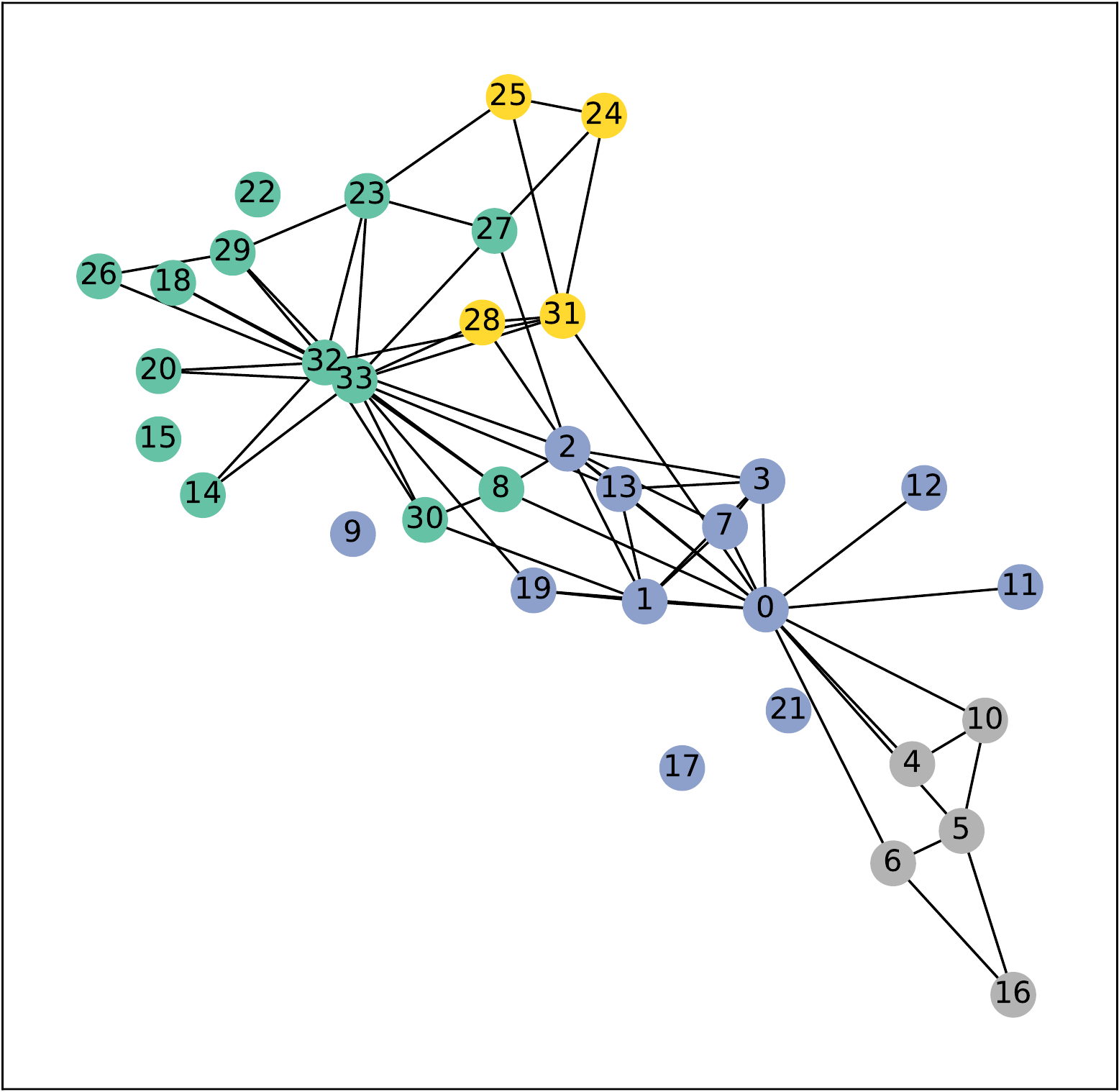}
    \caption{$\beta=5$}
  \end{subfigure}
  \begin{subfigure}[]{0.23\textwidth}
    \includegraphics[width=\textwidth]{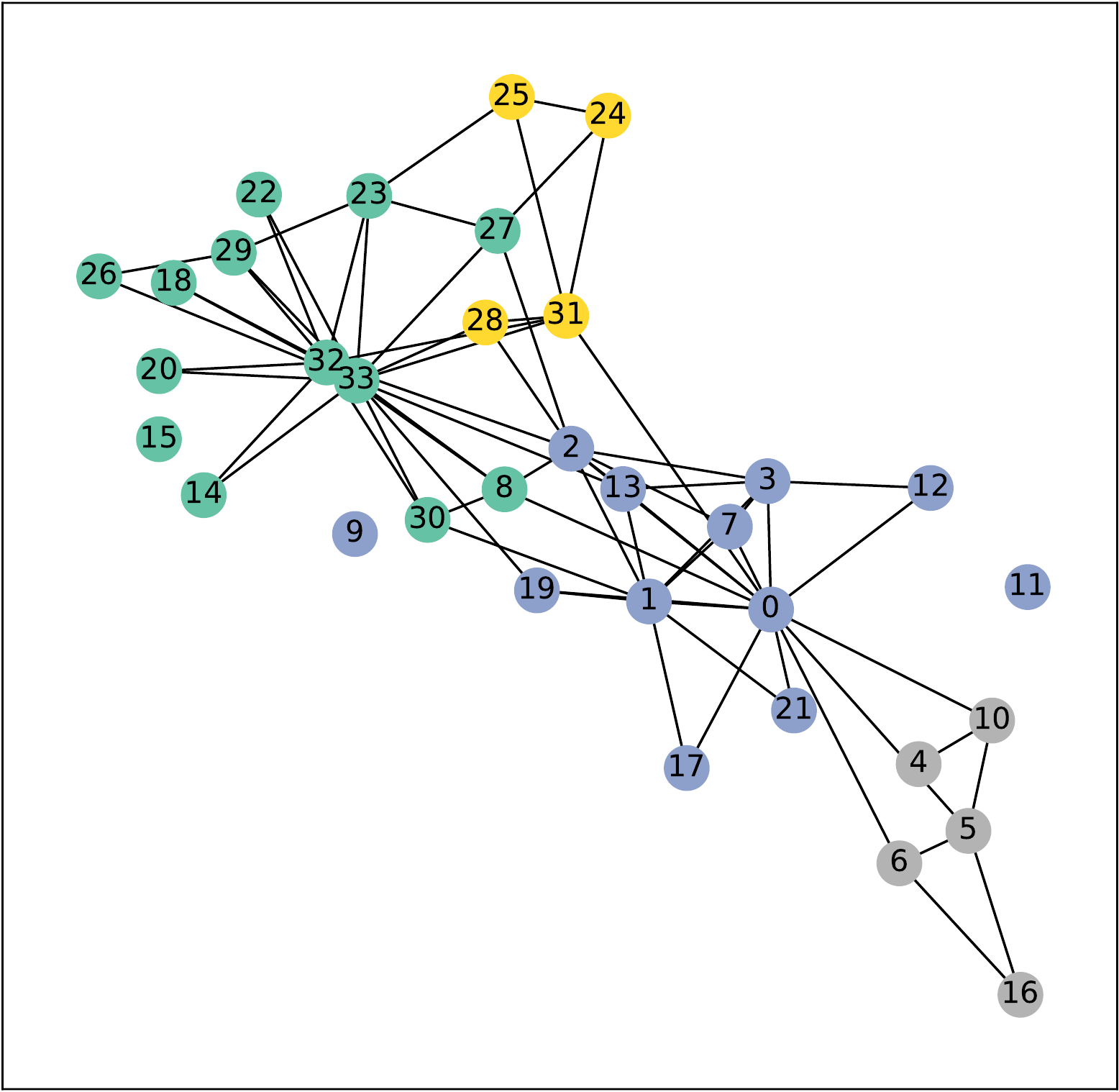}
    \caption{$\beta=1000$}
  \end{subfigure}
  \caption{Illustration of the sparsified graph structures revealed by our Graph-PRI for (a) Zachary's karate club~\cite{zachary1977information}. As the values of $\beta$ increases, the solution passes through (b) an approximately star graph to the extreme case of (d) $\beta\rightarrow\infty$, in which we get back the original graph as the solution.}
  \label{fig:pri_demo}
\end{figure*}

\subsection{Optimization}
We define a gradient descendent algorithm to solve Eq.~(\ref{eq:pri_final}). As has been discussed in Section~\ref{sec:problem}, we have $\rho=BB^T$ and $\sigma_\mathbf{w}=B\diag{(\mathbf{w})}B^T$, in which $\mathbf{w}$ is the edge selection vector. For simplicity, we assume that the selections of edges from the original graph $G$ are conditionally independent to each other~\citep{luo2020parameterized}, that is $\mathbb{P}_\mathbf{w}=\prod_{i=1}^{M} \mathbb{P}_{w_i}$. Due to the discrete nature of $G_s$, we relax $\mathbf{w}=[w_1,w_2,...,w_M]$ from a binary vector $\{0,1\}^M$ to a continuous real-valued vector in $[0,1]^M$. In this sense, the value of $w_i$ can be interpreted as the probability of selecting the $i$-th edge.

In practice, we use the Gumbel-softmax~\citep{maddison2017concrete,jang2016categorical} to update $w_i$. Particularly, suppose we want to approximate a categorical random variable represented as a one-hot vector in $\mathbb{R}^K$ with category probability $p_1,p_2,\cdots,p_K$ (here, $K=2$), the Gumbel-softmax gives a $K$-dimensional sampled vector with the $i$-th entry as:
\begin{equation}
    \hat{p}_i = \frac{\exp\left((\log p_i + g_i)/\tau\right)}{\sum_{j=1}^K \exp\left((\log p_j + g_j)/\tau\right)},
\end{equation}
where $\tau$ is a temperature for the Concrete distribution and $g_i$ is generated from a $\text{Gumbel}(0,1)$ distribution:
\begin{equation}
    g_i = -\log (-\log u_i), \quad u_i\sim \text{Uniform}(0,1).
\end{equation}




Note that, although we use the Gumbel-Softmax to ease the optimization, Graph-PRI itself has analytical gradient (Theorem~\ref{th:grad}).
The detailed algorithm of Graph-PRI is elaborated in Appendix~\ref{sec:appendix_code}. We also provide a PyTorch example therein.


{\color{black}
\begin{theorem} \label{th:grad}
The gradient of Eq.~(\ref{eq:pri_final}) with respect to edge selection vector $\mathbf{w}$ is:
\begin{align}
\nabla_\mathbf{w} 
\mathcal{J}_{\text{Graph-PRI}} = U g,
\end{align}
where $\tilde{\mathbf{w}}$ is the normalised $\mathbf{w}$ ($\tilde{\mathbf{w}} = \mathbf{w} / \sum_{i=1}^M w_i$), $\tilde{\mathbf{1}}_{M}  = \frac1{M} \mathbf{1}_{M} $ is the normalized version of the all-ones vector. $\bar{\sigma}_\mathbf{w} = \frac{1}{2} \left( \tilde{\sigma}_\mathbf{w} +\tilde{\rho} \right) = \frac{1}{2} B  \diag{\left(\tilde{\mathbf{w}} + \tilde{\mathbf{1}}_{M} \right)} B^T$. $g = -\diag{\left( B^T \left[ \left(1-\beta\right)\ln \tilde{\sigma}_\mathbf{w} +\beta\ln \bar{{\sigma}}_\mathbf{w}\right]B \right)}$ and $U = \{ u_{ij} \} \in \mathbb{R}^{M\times M}, u_{ij} = -\frac{\tilde{w}_j}{1-\tilde{w}_i},  \forall ij | i \ne j, u_{ii} = 1.$
\end{theorem}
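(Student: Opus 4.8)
The plan is to differentiate $\mathcal{J}_{\text{Graph-PRI}} = (1-\beta)S_{\text{vN}}(\tilde{\sigma}_\mathbf{w}) + 2\beta S_{\text{vN}}(\bar{\sigma}_\mathbf{w})$ by viewing it as the composition $\mathbf{w}\mapsto\tilde{\mathbf{w}}\mapsto(\tilde{\sigma}_\mathbf{w},\bar{\sigma}_\mathbf{w})\mapsto\mathcal{J}$ and applying the chain rule in three stages: (i) the matrix gradient of the von Neumann entropy; (ii) the linear maps $\tilde{\sigma}_\mathbf{w}=B\diag{(\tilde{\mathbf{w}})}B^T$ and $\bar{\sigma}_\mathbf{w}=\tfrac12 B\diag{(\tilde{\mathbf{w}}+\tilde{\mathbf{1}}_M)}B^T$; and (iii) the simplex normalization $\tilde{\mathbf{w}}=\mathbf{w}/\sum_i w_i$. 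The vector $g$ will emerge from stages (i)--(ii) and the matrix $U$ from stage (iii).

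For stage (i), I would first establish the matrix-calculus lemma that, for a symmetric positive (semi-)definite $X$, one has $\nabla_X S_{\text{vN}}(X) = -(\log X + I)$. Starting from $S_{\text{vN}}(X)=-\tr{X\log X}$, the Fréchet differential is $dS_{\text{vN}} = -\tr{(\log X)\,dX} - \tr{X\,d(\log X)}$, and the crux is the identity $\tr{X\,d(\log X)}=\tr{dX}$. I would prove this from the integral representation $d\log X = \int_0^\infty (X+uI)^{-1}\,dX\,(X+uI)^{-1}\,du$ combined with cyclicity of the trace and $\int_0^\infty (X+uI)^{-2}\,du = X^{-1}$ (verified eigenvalue-wise). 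On the Laplacian's null space the convention $0\log 0 = 0$ applies, so the computation is carried out on the support of $X$.

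For stage (ii), since $\partial\tilde{\sigma}_\mathbf{w}/\partial\tilde{w}_k = \mathbf{b}_k\mathbf{b}_k^T$ and $\partial\bar{\sigma}_\mathbf{w}/\partial\tilde{w}_k=\tfrac12\mathbf{b}_k\mathbf{b}_k^T$, contracting the entropy gradient from stage (i) with these rank-one updates gives $\partial S_{\text{vN}}(\tilde{\sigma}_\mathbf{w})/\partial\tilde{w}_k = -\mathbf{b}_k^T(\log\tilde{\sigma}_\mathbf{w} + I)\mathbf{b}_k$ and, with the extra factor $\tfrac12$, the analogous expression for $\bar{\sigma}_\mathbf{w}$. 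Collecting the coefficients $1-\beta$ and $2\beta\cdot\tfrac12=\beta$ yields $\partial\mathcal{J}/\partial\tilde{w}_k = g_k - \|\mathbf{b}_k\|^2$, where $g_k = -\mathbf{b}_k^T[(1-\beta)\log\tilde{\sigma}_\mathbf{w} + \beta\log\bar{\sigma}_\mathbf{w}]\mathbf{b}_k$ is exactly the $k$-th entry of $g=-\diag{(B^T[(1-\beta)\ln\tilde{\sigma}_\mathbf{w} + \beta\ln\bar{\sigma}_\mathbf{w}]B)}$. The residual $-\|\mathbf{b}_k\|^2$ stems from the identity $I$ and is constant in $k$ (the fixed column norm of $B$).

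For stage (iii), differentiating $\tilde{w}_j = w_j/s$ with $s=\sum_i w_i$ gives the Jacobian $\partial\tilde{w}_j/\partial w_i = \tfrac1s(\delta_{ij}-\tilde{w}_j)$, so $\partial\mathcal{J}/\partial w_i = \tfrac1s\bigl(\partial\mathcal{J}/\partial\tilde{w}_i - \sum_j \tilde{w}_j\,\partial\mathcal{J}/\partial\tilde{w}_j\bigr)$. The decisive observation is that the constant term $-\|\mathbf{b}_k\|^2$ is annihilated because $\sum_j\tilde{w}_j=1$, i.e., the simplex-tangent projection kills constants, leaving $g_i-\sum_j\tilde{w}_j g_j$; writing this in matrix form and normalizing each row so that its diagonal entry equals $1$ produces exactly $u_{ij}=-\tilde{w}_j/(1-\tilde{w}_i)$ for $i\neq j$ and $u_{ii}=1$, hence $\nabla_\mathbf{w}\mathcal{J} = Ug$. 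I expect the main obstacle to be stage (i) --- the noncommutative differentiation of the matrix logarithm and the identity $\tr{X\,d(\log X)}=\tr{dX}$ --- together with the careful bookkeeping in stage (iii) that collapses the full simplex Jacobian and the constant/identity contributions into the compact matrix $U$.
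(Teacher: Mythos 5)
Your stages (i) and (ii) are sound, and in fact more careful than the paper's own treatment: the paper's companion Theorem~\ref{th:grad_entropy} sidesteps the matrix-logarithm differentiation entirely by redefining the entropy as $S(\sigma)=-\tr{\sigma\log\sigma-\sigma}$, whose matrix gradient is exactly $-\log\sigma$, so no $+I$ residual ever appears in its $g$. Your handling of the residual $-\|\mathbf{b}_k\|^2$ is fine for unweighted graphs (every column of $B$ has norm $\sqrt{2}$), but note that it is \emph{not} constant in $k$ for the weighted graphs the paper also allows, where $\|\mathbf{b}_k\|^2=2\mu_k$; your cancellation argument in stage (iii) only kills vectors that are constant across coordinates, so this assumption should be flagged or the paper's modified entropy adopted.

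The genuine gap is at the end of stage (iii). Carrying your chain rule through correctly gives $\partial\mathcal{J}/\partial w_i=\tfrac1s\bigl(g_i-\sum_j\tilde w_j g_j\bigr)=\tfrac{1-\tilde w_i}{s}\,(Ug)_i$, i.e., the honest Euclidean gradient is $\tfrac1s\diag(1-\tilde w_1,\dots,1-\tilde w_M)\,Ug$, which differs from the claimed $Ug$ by a positive, $i$-dependent diagonal factor. Your final step --- ``normalizing each row so that its diagonal entry equals $1$'' --- is not a gradient-preserving operation; it changes the vector, so the concluding ``hence $\nabla_\mathbf{w}\mathcal{J}=Ug$'' is a non sequitur. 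The paper obtains $Ug$ on the nose because it interprets $\nabla_\mathbf{w}$ differently: not as the chain-rule gradient through $\tilde{\mathbf{w}}=\mathbf{w}/s$, but as the \emph{total derivative under the constraint-preserving perturbation} of Lemma~\ref{lemma}, in which $w_i\mapsto w_i+\delta$ while every other coordinate shrinks proportionally, $w_j\mapsto w_j(1-\gamma)$ with $\gamma=\delta/(1-w_i)$; this gives $dw_j/dw_i=-\tilde w_j/(1-\tilde w_i)$ directly, and then $d\mathcal{J}/dw_i=g_i+\sum_{j\ne i}g_j\,dw_j/dw_i=(Ug)_i$ with no leftover scaling (any constant-in-$k$ part of $g$ also cancels here, since each row of $U$ sums to zero). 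The two formulas agree coordinatewise in sign, and $-Ug$ is still a descent direction for your gradient, so the distinction is immaterial for optimization; but as a mathematical statement your derivation proves a diagonally rescaled identity, not the theorem as written. To repair it, either adopt the paper's perturbation convention in stage (iii), or restate the conclusion as $\nabla_\mathbf{w}\mathcal{J}=\tfrac1s\diag(\mathbf{1}-\tilde{\mathbf{w}})\,Ug$.
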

}



\subsection{Approximation and Connectivity Constraint}\label{sec:approximation}
\textcolor{black}{The computation of von Neumann entropy requires the eigenvalue decomposition of a trace normalized SPS matrix, which usually takes $\mathcal{O}(N^3)$ time. In practical applications in which the computational time is a major concern (i.e., when training deep neural networks or when dealing with large graphs with hundreds of thousands of nodes), based on Theorems~\ref{thm:entropy_diff} and \ref{thm:convergence}, we simply approximate $S_{\text{vN}}(L_G)$ with the Shannon discrete entropy on the normalized degree of nodes $H(G)$, which immediately reduces the computational complexity to $\mathcal{O}(N)$. Unless otherwise specified, the experiments in the next section still use the basic $S_{\text{vN}}(L_G)$.}



On the other hand, when the connectivity of the subgraph is preferred, one can simply add another regularization  on the degree of the nodes~\citep{kalofolias2016learn}:
\begin{eqnarray} \label{eq:cost_connectivity}
\min_{\mathbf{w}} S(\tilde{\sigma}_{\mathbf{w}}) + \beta D(\tilde{\sigma}_{\mathbf{w}}||\tilde{\rho}) - \alpha \mathbf{1}^T \log{(\diag{(\sigma)})},
\end{eqnarray}
where the hyper-parameter $\alpha>0$. This Logarithm barrier forces the degree to be positive and improves the connectivety of graph without compromising sparsity. Unless otherwise specified, we select $\alpha=0.005$ throughout this work. 


\section{Experimental Evaluation}\label{sec:experiments}


In this section, we demonstrate the effectiveness and versatility of our Graph-PRI in multiple graph-related machine learning tasks. Our experimental study is guided by the following three questions:
\begin{itemize}
 \item[\textbf{Q1}] What kind of structural property or information does our method preserves? 
 \item[\textbf{Q2}] How well does our method compare against popular and competitive graph sparsification baselines?
 \item[\textbf{Q3}] How to use the Graph-PRI in practical machine learning problems; and what are the performance gains?
\end{itemize}


The selected competing methods include $1$ baseline and $3$ state-of-the-art (SOTA) ones: 1) the Random Sampling (RS) that randomly prunes a percentage of edges; 2) the Local Degree (LD)~\citep{hamann2016structure} that only preserves the top $|\text{degree}(v)^\alpha|$ ($0\leq\alpha\leq 1$) neighbors (sorted by degree in descending order) for each node $v$; 3) the Local Similarity (LS)~\citep{satuluri2011local}) that applies Jaccard similarity function on
nodes $v$ and $u$'s neighborhoods to quantify the score of edge $(u,v)$; 4) the Effective Resistance (ER)~\citep{spielman2011graph}. We implement RS, LD, LS by NetworKit\footnote{\url{https://networkit.github.io/}}, and ER by PyGSP\footnote{\url{https://github.com/epfl-lts2/pygsp}}.






\subsection{Graph Sparsification}

We use $2$ synthetic data and $4$ real-world network data from KONECT network datasets\footnote{\url{http://konect.cc/networks/}} for evaluation. 
They are, 
\textbf{G1}: a $k$-NN ($k=10$) graph with $20$ nodes that constitute a global circle structure; 
\textbf{G2}: a stochastic block model (SBM) with four distinct communities ($30$ nodes in each community, and intra- and inter-community connection probabilities of $2^{-2}$ and $2^{-7}$, respectively);
\textbf{G3}: the most widely used Zachary karate club network ($34$ nodes and $78$ edges); \textbf{G4}: a network contains contacts between suspected terrorists involved in the train bombing of Madrid on March $11$, $2004$ ($64$ nodes and $243$ edges); \textbf{G5}: a network of books about US politics published around the time of the $2004$ presidential election and sold by the online bookseller Amazon.com ($105$ nodes and $441$ edges); and \textbf{G6}: a collaboration network of Jazz musicians ($198$ nodes and $2,742$ edges).



We expect Graph-PRI to preserve two essential properties associated with the original graph: 1) the spectral similarity (due to the divergence term); and 2) the graph centrality (due to the entropy term). We empirically justify our claims with two metrics. They are, the geodesic distance $d_{\vec{x}}(\rho,\sigma)$~\citep{bravo2019unifying}:
\begin{equation}
    d_{\vec{x}}(\rho,\sigma) = \mathrm{arccosh}\left(1+\frac{\|(\rho-\sigma)\vec{x}\|_2^2\|\vec{x}\|_2^2}{2(\vec{x}^T\rho\vec{x})(\vec{x}^T\sigma \vec{x})}\right),
\end{equation}
in which we select $\vec{x}$ to be the smallest non-trivial eigenvector of the original Laplacian $\rho$, as it encodes the global structure of a graph; and the graph centralization measure by $C_D$~\citep{freeman1978centrality}):
\begin{equation}
    C_D = \frac{\sum_{i=1}^N \max(d_j) - d_i}{N^2-3N+2},
\end{equation}
in which $\max(d_j)$ refers to the maximum node degree.

We demonstrate in Fig.~\ref{fig:spectral_distance} and Fig.~\ref{fig:centrality} respectively the values of $d_{\vec{x}}(\rho,\sigma)$ and $C_D$ with respect to different edge preserving ratio (i.e., $|E_s|/|E|$) for different sparsification methods. As can be seen, our Graph-PRI always achieves the $2$\textsuperscript{nd} best performance across different graphs. Although LD has advantages on preserving spectral distance and graph centrality, it does not have compelling performance in practical applications as will be demonstrated in the next subsection.





\begin{figure*}[ht]
  \begin{subfigure}[]{0.16\textwidth}
    \includegraphics[width=\textwidth]{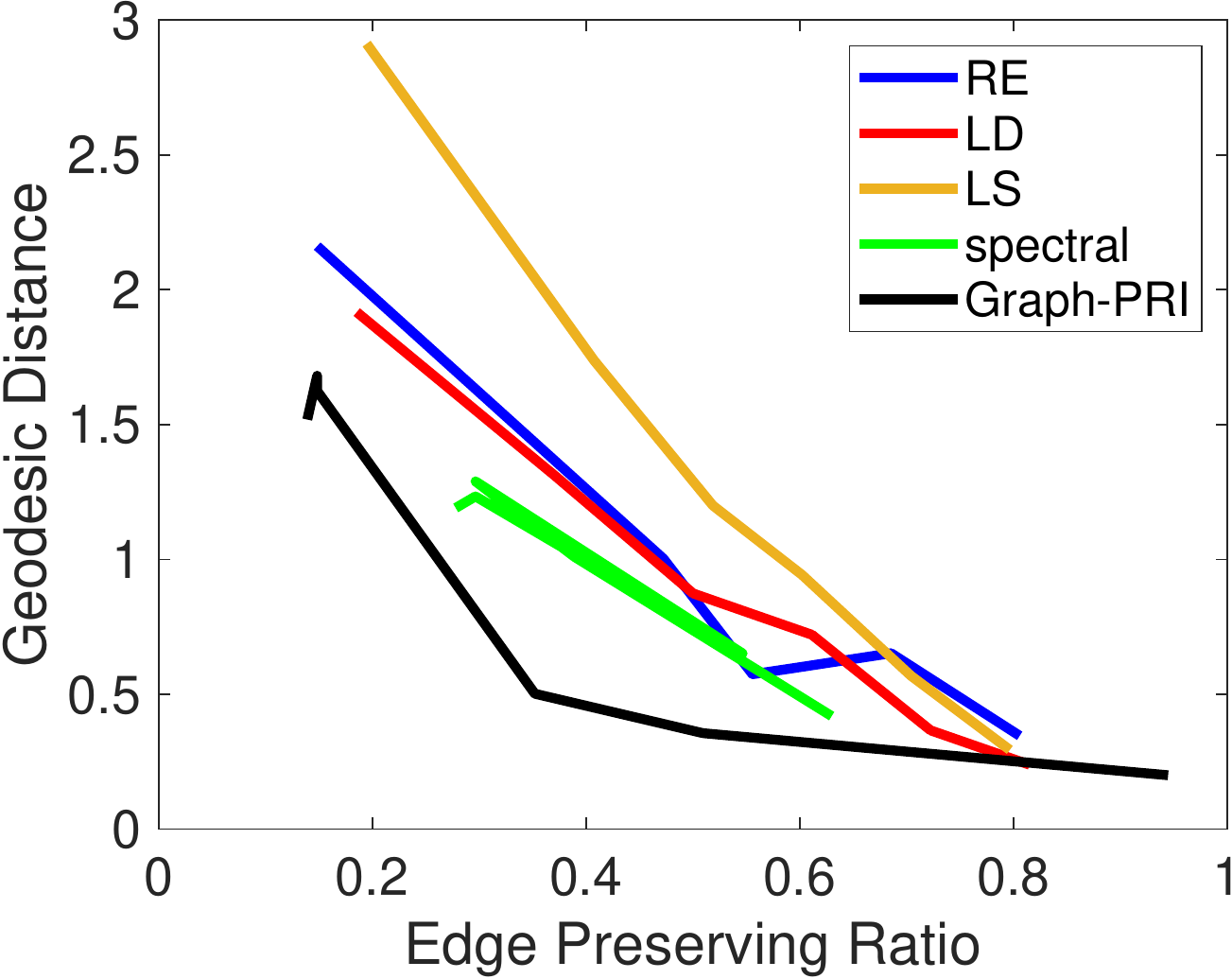}
    \caption{$k$-NN}
  \end{subfigure}
  \begin{subfigure}[]{0.16\textwidth}
    \includegraphics[width=\textwidth]{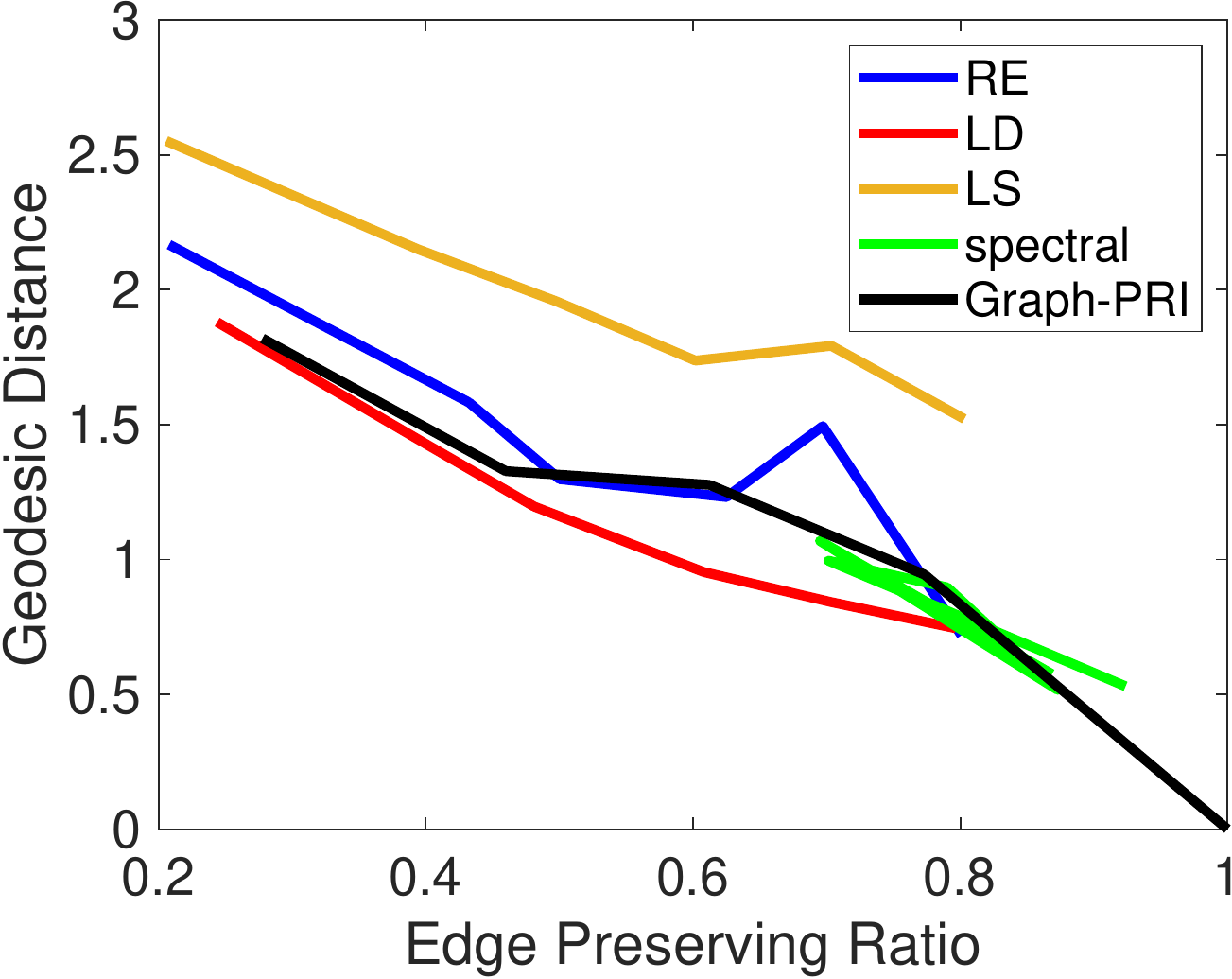}
    \caption{SBM}
  \end{subfigure}
  \begin{subfigure}[]{0.16\textwidth}
    \includegraphics[width=\textwidth]{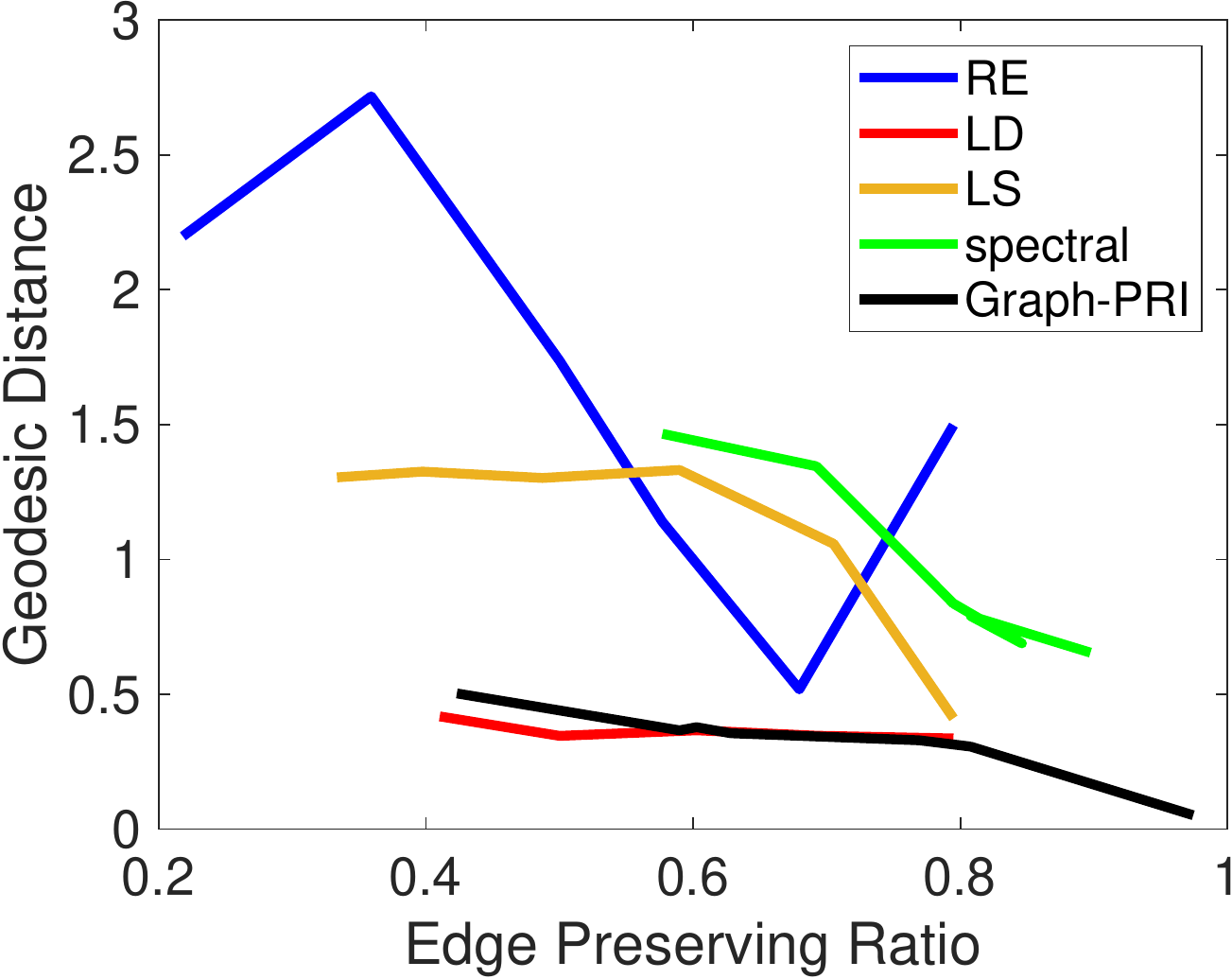}
    \caption{Karate club}
  \end{subfigure}
  \begin{subfigure}[]{0.16\textwidth}
    \includegraphics[width=\textwidth]{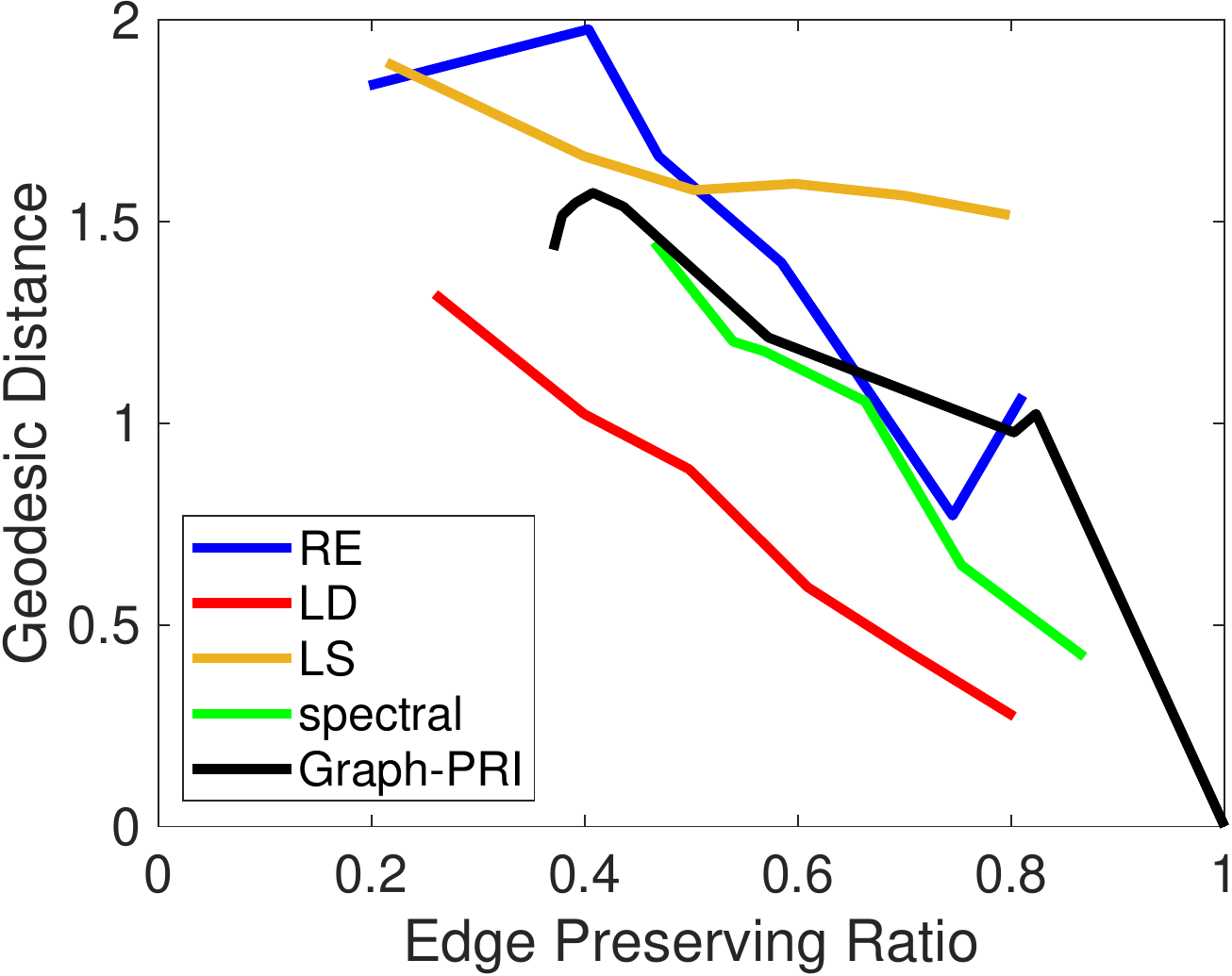}
    \caption{Train bombing}
  \end{subfigure}
    \begin{subfigure}[]{0.16\textwidth}
    \includegraphics[width=\textwidth]{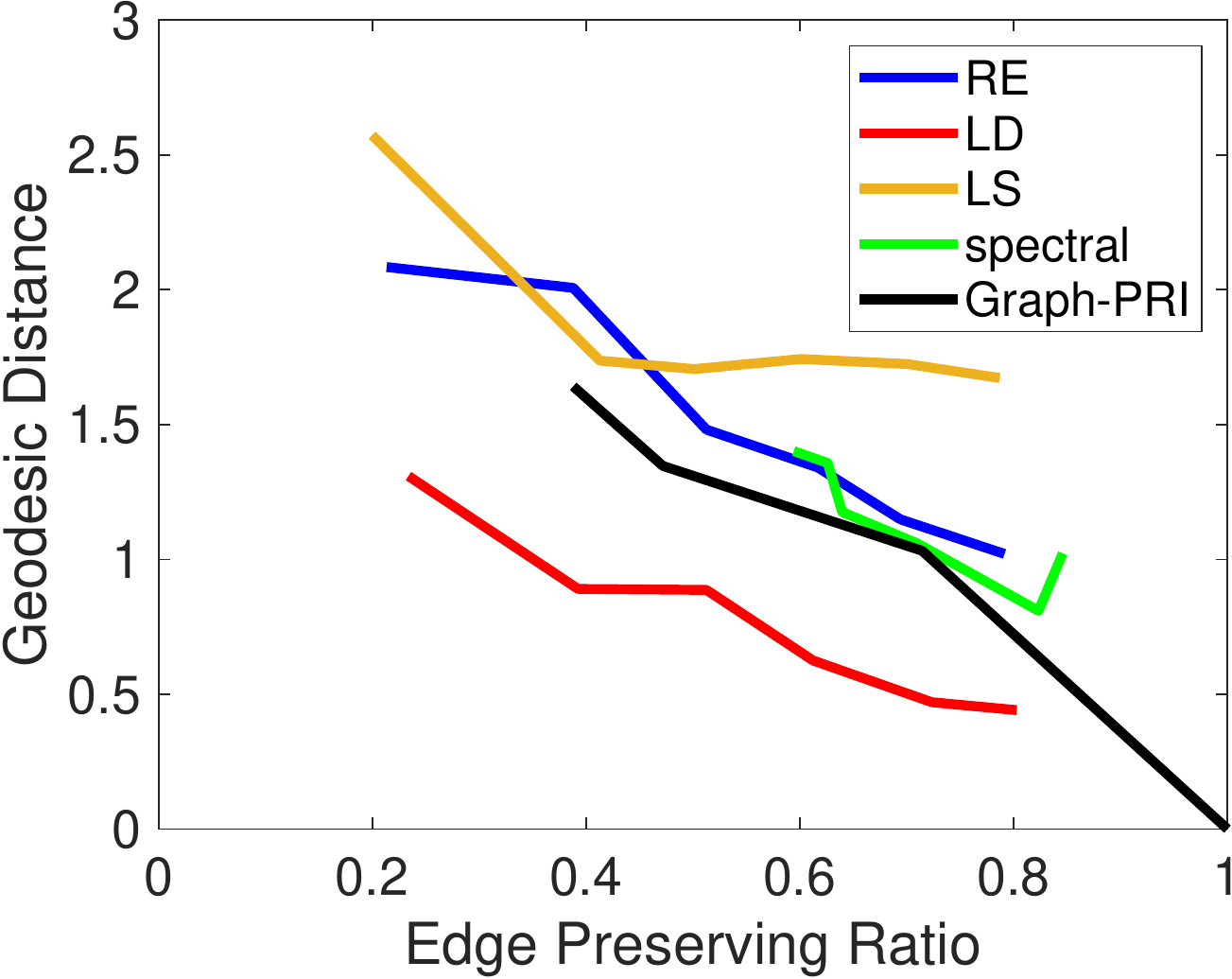}
    \caption{US political books}
  \end{subfigure}
  \begin{subfigure}[]{0.16\textwidth}
    \includegraphics[width=\textwidth]{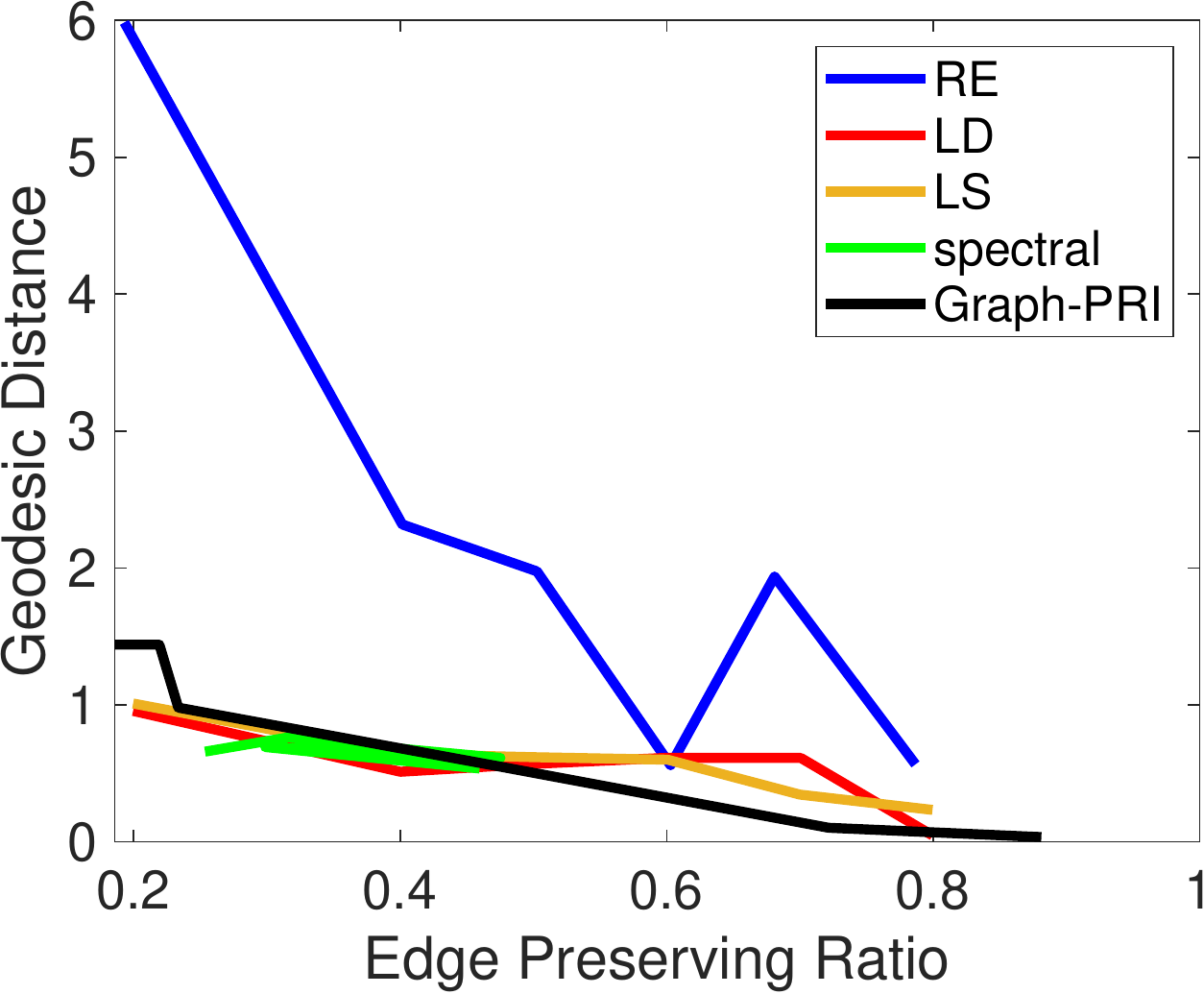}
    \caption{Jazz musicians}
  \end{subfigure}
  \caption{Spectral distance $d_{\vec{x}}(\rho,\sigma)$ (the smaller the better).}
  \label{fig:spectral_distance}
\end{figure*}

\begin{figure*}[ht]
  \begin{subfigure}[]{0.16\textwidth}
    \includegraphics[width=\textwidth]{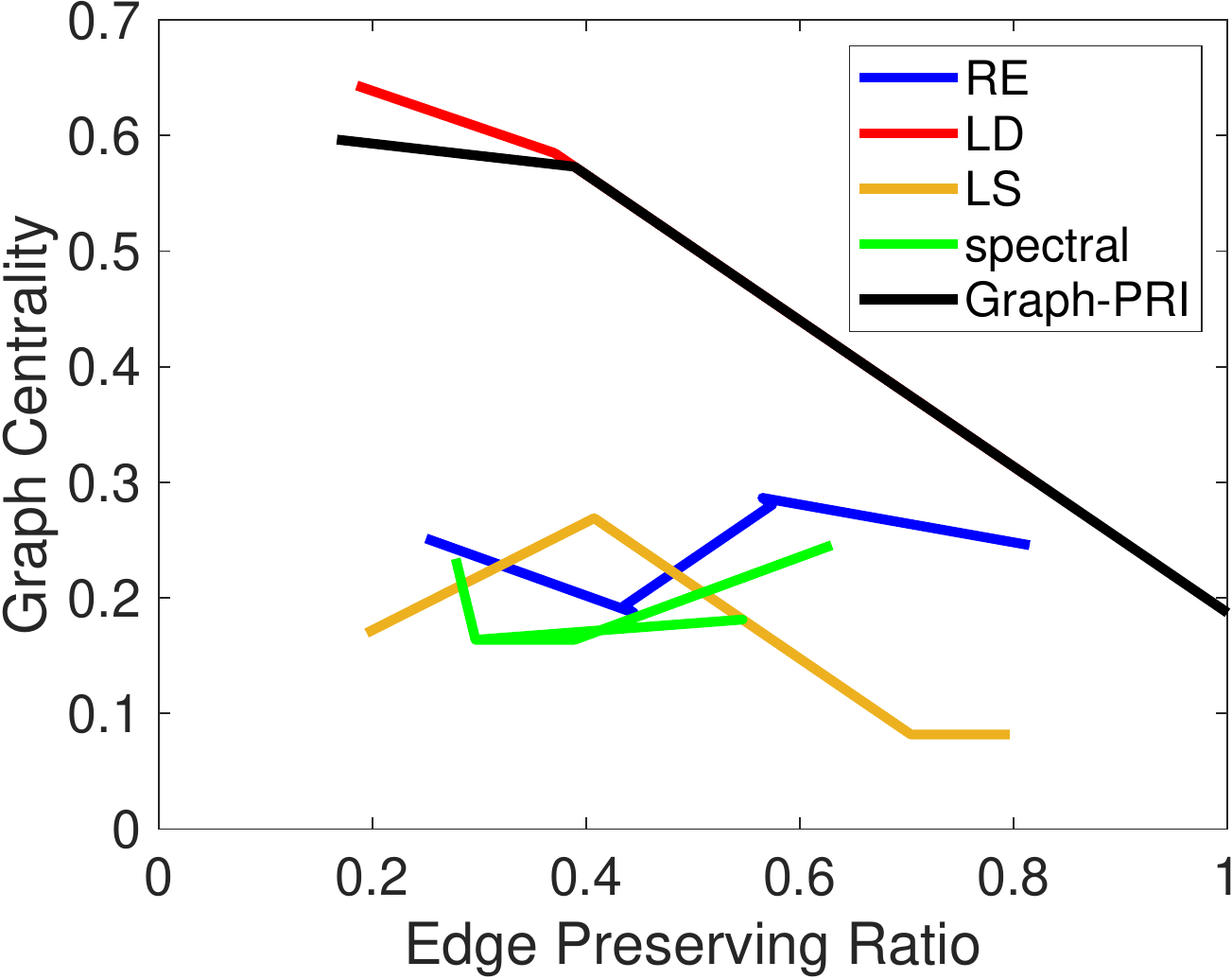}
    \caption{$k$-NN}
  \end{subfigure}
  \begin{subfigure}[]{0.16\textwidth}
    \includegraphics[width=\textwidth]{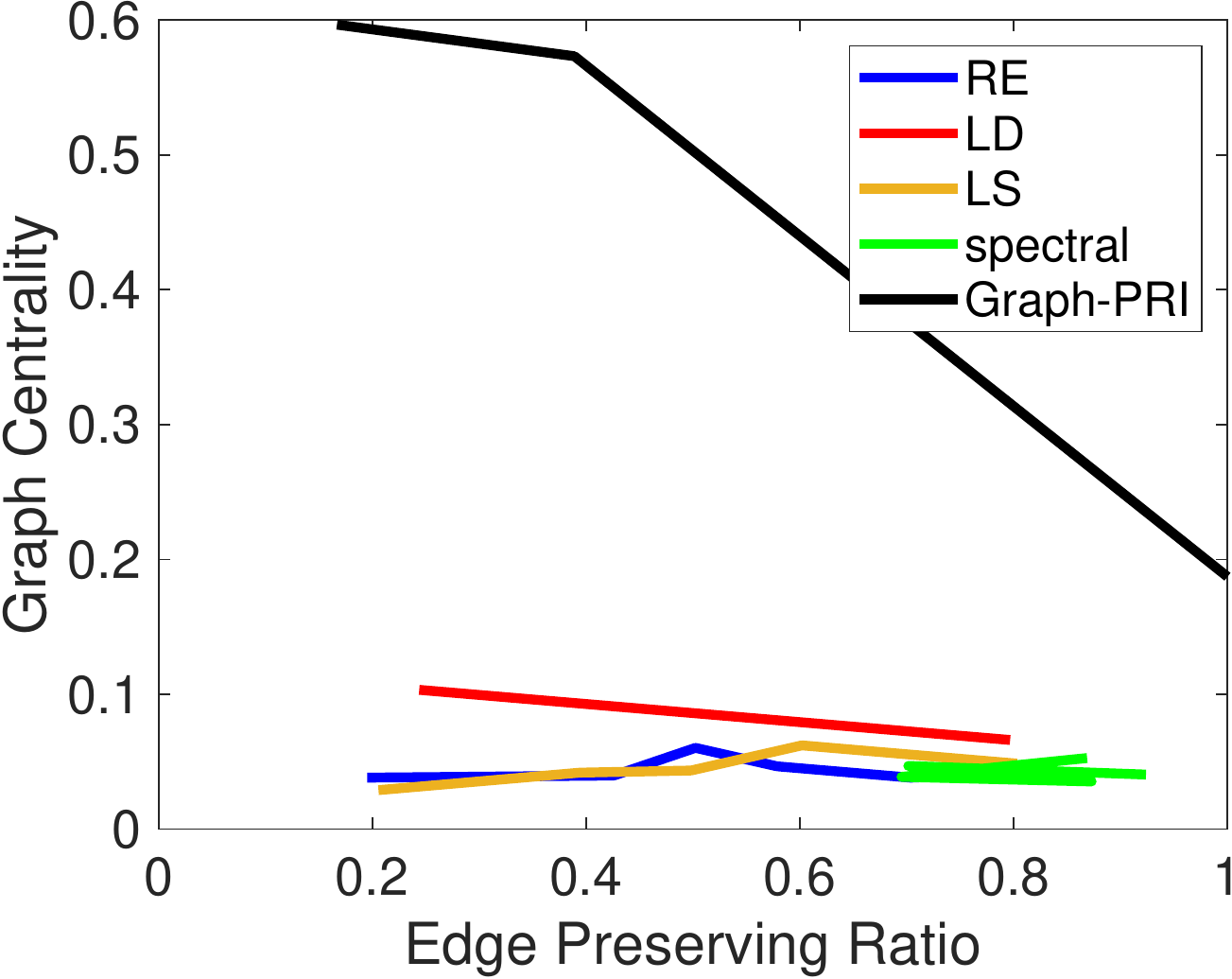}
    \caption{SBM}
  \end{subfigure}
  \begin{subfigure}[]{0.16\textwidth}
    \includegraphics[width=\textwidth]{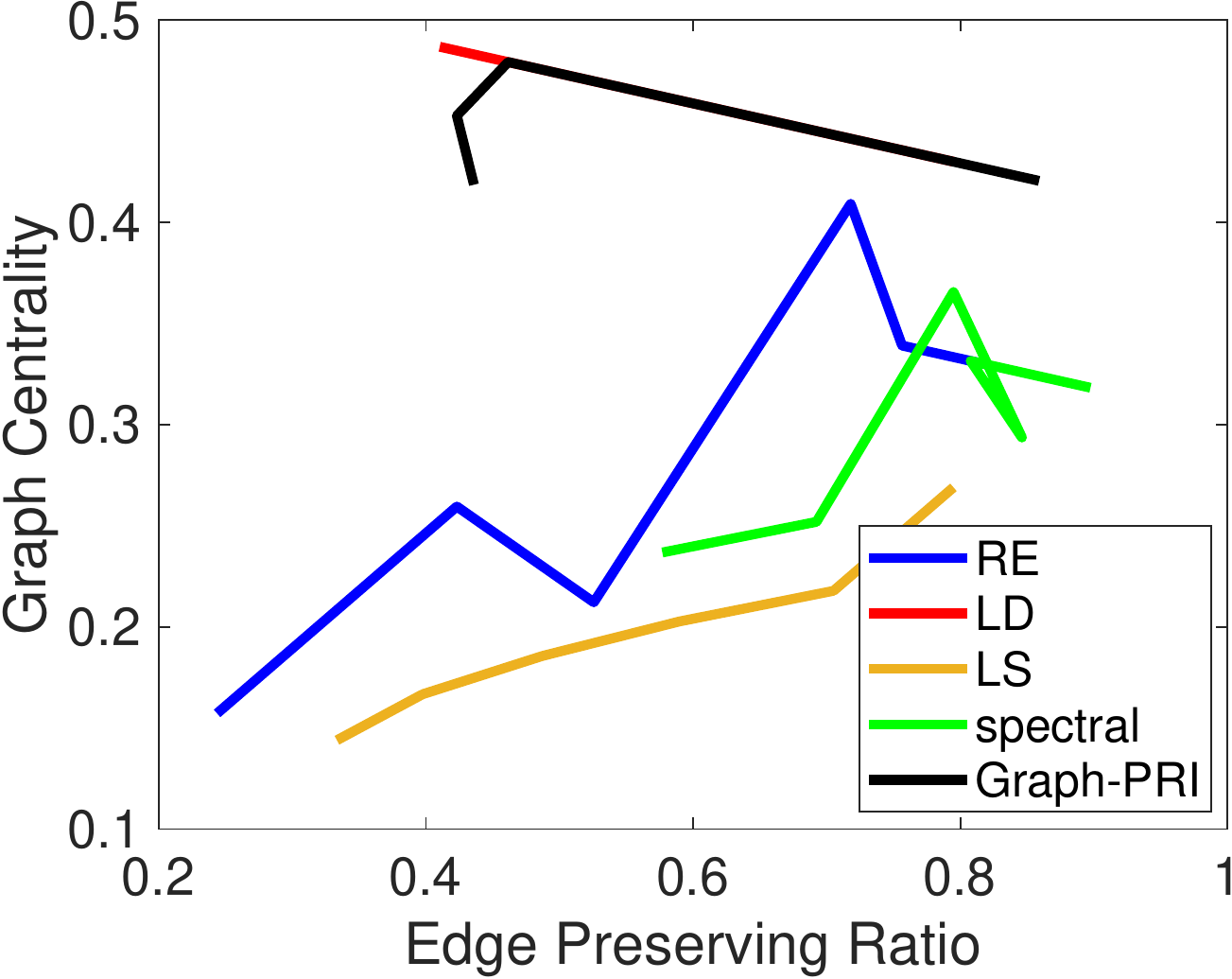}
    \caption{Karate club}
  \end{subfigure}
  \begin{subfigure}[]{0.16\textwidth}
    \includegraphics[width=\textwidth]{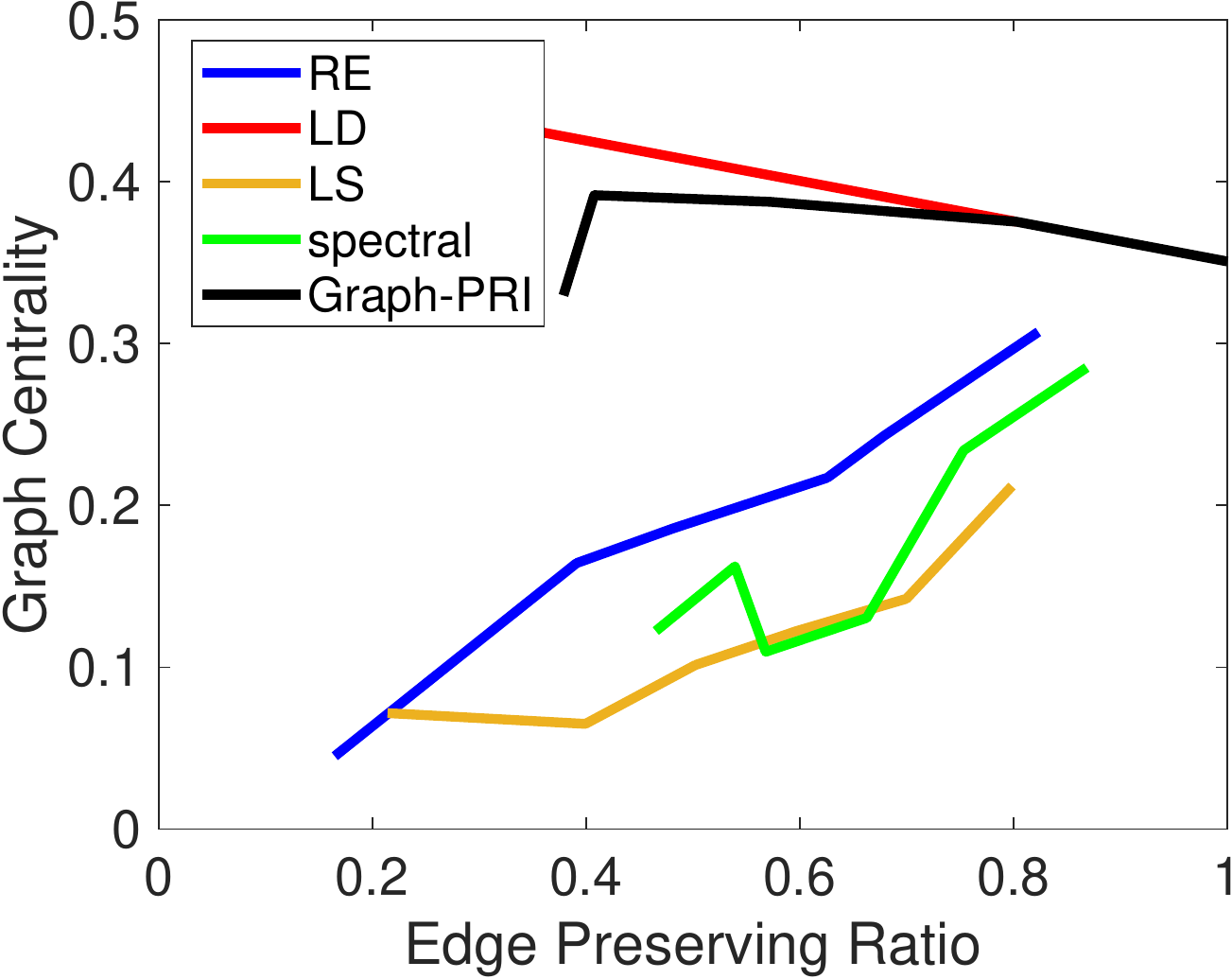}
    \caption{Train bombing}
  \end{subfigure}
    \begin{subfigure}[]{0.16\textwidth}
    \includegraphics[width=\textwidth]{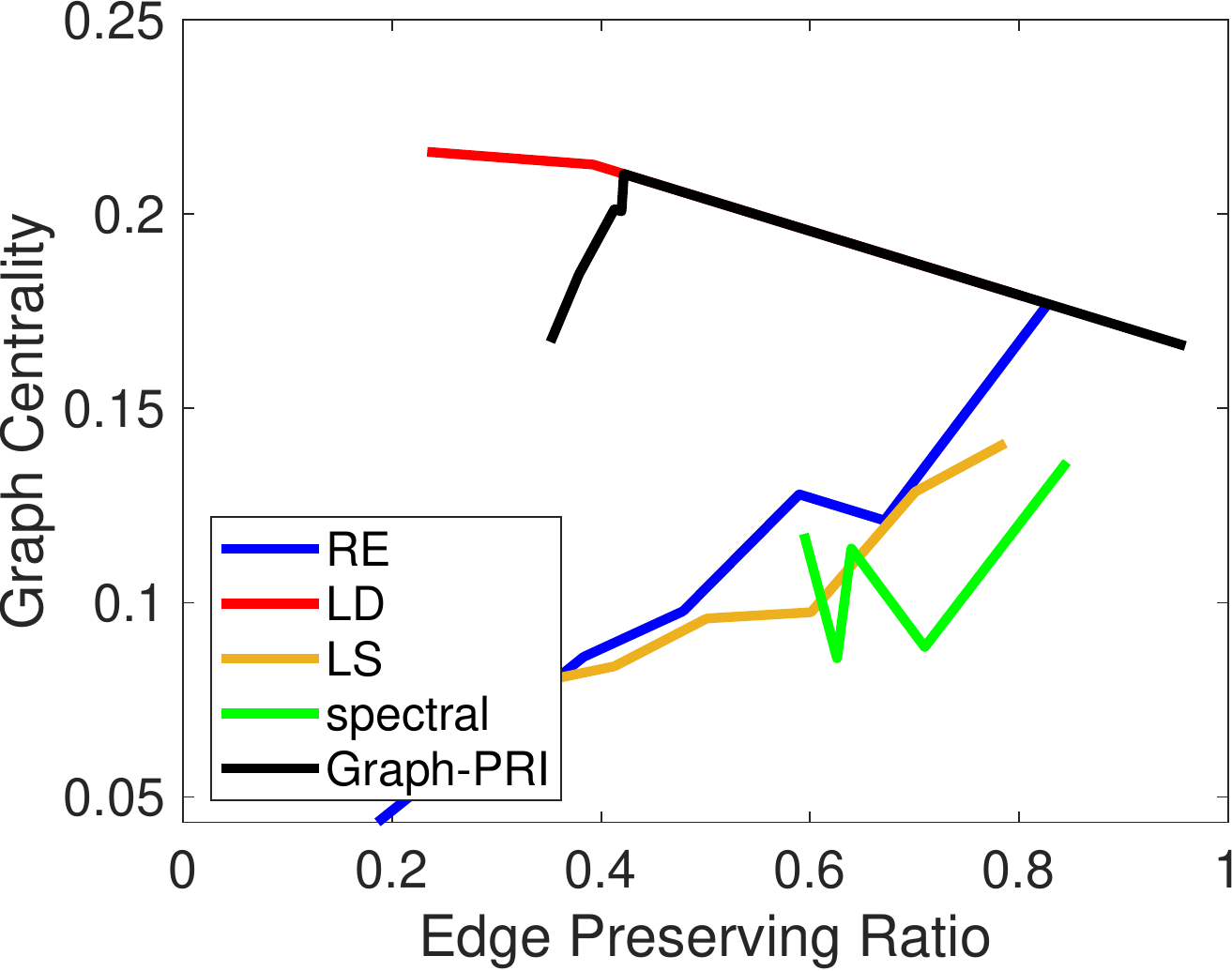}
    \caption{US political books}
  \end{subfigure}
  \begin{subfigure}[]{0.16\textwidth}
    \includegraphics[width=\textwidth]{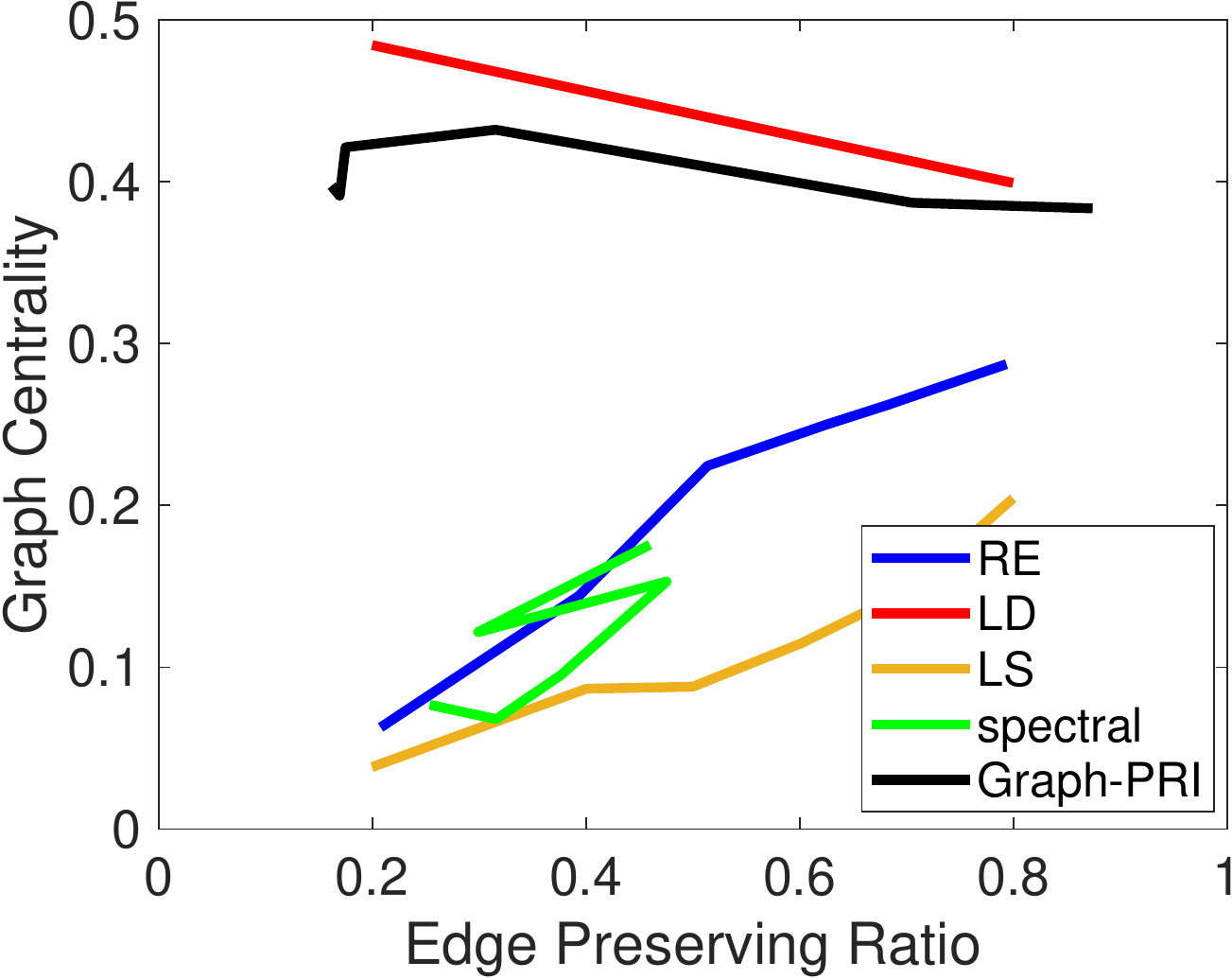}
    \caption{Jazz musicians}
  \end{subfigure}
  \caption{Graph centrality $C_D$ (the larger the higher degree of centrality).}
  \label{fig:centrality}
\end{figure*}

\subsection{Graph-Regularized Multi-task Learning}\label{sec:MTL}
In traditional multi-task learning (MTL), we are given a group of $T$ related tasks. In each task we have access to a training set $\mathcal{D}_t$ with $N_t$ data instances $\{(\mathbf{x}_t^i,y_t^i):i=1,\cdots,N_t, t=1,\cdots,T\}$. In this section, we focus on the regression setup in which $\mathbf{x}_t^i\in\mathcal{X}_t\subseteq\mathbb{R}^d$ and $y_t^i\in\mathbb{R}$. Multi-task learning aims to learn from each training set $\mathcal{D}_t$ a prediction model $f_t(\mathbf{w}_t,\cdot):\mathcal{X}_t\rightarrow\mathbb{R}$ with parameter $\mathbf{w}_t$ such that the task relatedness is taken into consideration and the overall generalization error is small. 


In what follows, we assume a linear model in each task, i.e., $f_t(\mathbf{w}_t,\mathbf{x})=\mathbf{w}_t^T\mathbf{x}$.
The multi-task regression problem with a regularization $\Omega$ on the model parameters $W=[\mathbf{w}_1,\mathbf{w}_2,\cdots,\mathbf{w}_T]$ can thus be defined as:
\begin{equation}\label{eq:obj1}
\min_{W} \sum_{t=1}^T\|\mathbf{w}_t^T\mathbf{x}_t-y_t\|_2^2 + \gamma\Omega(W).
\end{equation}







Graph is a natural way to establish the relationship over multiple tasks: each node refers to a single task; if two tasks are strongly correlated to each other, there is an edge to connect them. In this sense, the objective for multi-task regression learning regularized with a graph adjacency matrix $A$ can be formulated as~\citep{he2019efficient}:
\begin{equation}\label{eq:obj_MTL}
\min_{W} \sum_{t=1}^T\|\mathbf{w}_t^T\mathbf{x}_t-y_t\|_2^2 + \gamma\sum_{i=1}^T\sum_{j\in\mathcal{N}_i}A_{ij}\|\mathbf{w}_i-\mathbf{w}_j\|_2^2,
\end{equation}
where $\mathcal{N}_i$ is the set of neighbors of $i$-th task.


Usually, a dense graph $G$ is estimated at first to fully characterize task relatedness~\citep{chen2010graph,he2019efficient}. Here, we are interesting in: 1) sparsifying $G$ to reduce redundant or less-important connections (edges) between tasks; and 2) validating if the sparsified graph can further reduce the generalization error.  


To this end, we exemplify our motivation with the recently proposed Convex Clustering Multi-Task regression Learning (CCMTL)~\citep{he2019efficient} that optimizes Eq.~(\ref{eq:obj_MTL}) with the Combinatorial Multigrid (CMG) solver~\citep{koutis2011combinatorial}, and test its performance on two benchmark MTL datasets\footnote{See Appendix~\ref{sec:appendix_data} on details of datasets in sections~\ref{sec:MTL} and \ref{sec:brain}.}: 1) a synthetic dataset~\cite{gonccalves2016multi} with $20$ tasks in which tasks $1$-$10$ are mutually related and tasks $11$-$20$ are mutually related; 2) a real-world Parkinson's disease dataset\footnote{\url{https://archive.ics.uci.edu/ml/datasets/parkinsons+telemonitoring}} which contains biomedical voice measurements from $42$ patients. We view each patient as a single task and aim to predict the motor Unified Parkinson's Disease Rating Scale (UPDRS) score based $19$-dimensional features such as age, gender, and jitter and shimmer voice measurements. \textcolor{black}{In both datasets, the initial dense task-relatedness graph $G$ is estimated in the following way: we perform linear regression on each task individually; the task-relatedness between two tasks is modeled as the $\ell_2$ distance of their independently learned linear regression coefficients; we then construct a $k$-nearest neighbor ($k=10$) graph based on all pairwise task distances as $G$.}

\begin{figure}[ht]
	\centering
	    \begin{subfigure}{.23\textwidth}
	    \centering
        \includegraphics[width=\textwidth]{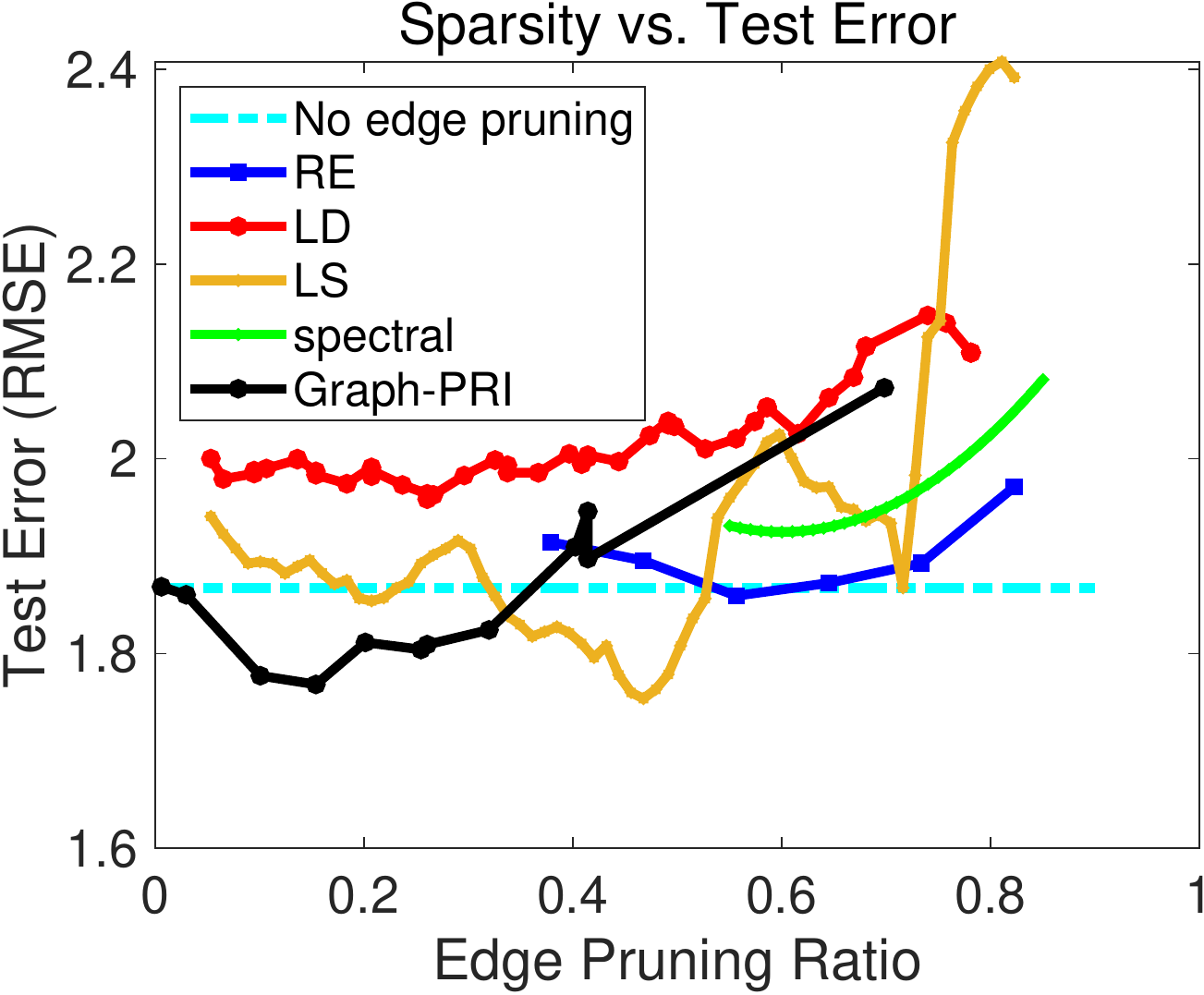}
        \caption{Synthetic data}
	\end{subfigure}%
	\begin{subfigure}{.23\textwidth}
	    \centering
        \includegraphics[width=\textwidth]{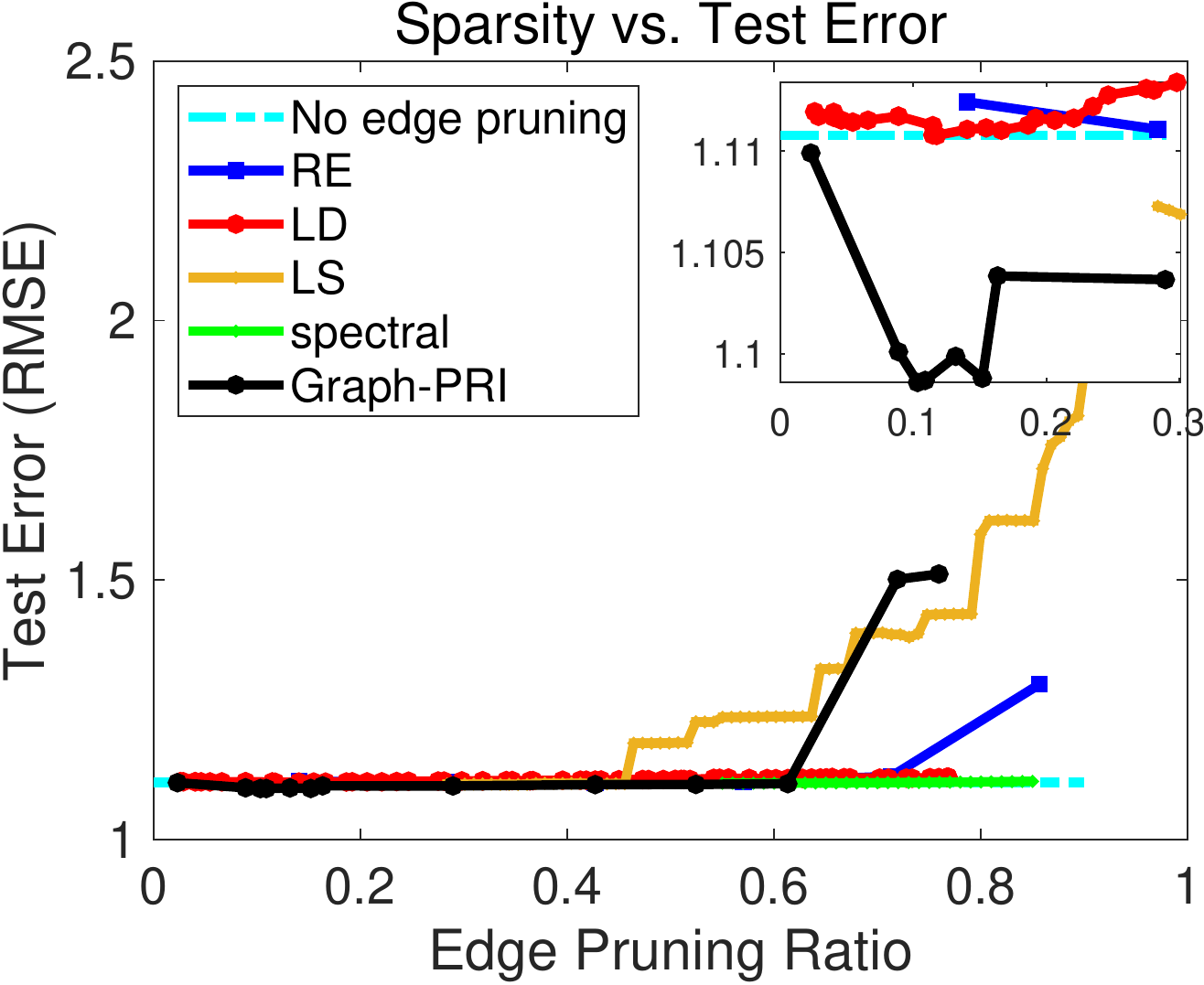}
        \caption{Parkinsons's disease data}
	\end{subfigure}	
	\caption{The RMSE with respect to the degree of sparsity (defined as $1-|E_s|/|E|$) of the resulting subgraph for all competing methods. Black dashed line indicates performance without any edge pruning. Our method is able to drop out redundant or less-important edges to further reduce generalization error.}
	\label{fig:MTL_curves}
\end{figure}

We evaluate the test performance with the root mean squared error (RMSE) and demonstrate the values of RMSE with respect to different edge pruning ratio (i.e., $1-|E_s|/|E|$) of different methods in Fig.~\ref{fig:MTL_curves}. 
In synthetic data, only Graph-PRI and LS are able to further reduce test error. For Graph-PRI, this phenomenon occurs at the beginning of pruning edges, which indicates that our method begins to remove less-informative or spurious connections in an early stage. In Parkinson's data, most of methods obtain almost similar performances to ``no edge pruning" (with Graph-PRI performs slightly better as shown in the zoomed plot), which suggests the existence of redundant task relationships. One should note that, the performance of Graph-PRI becomes worse if we remove large amount of edges. One possible reason is that when $|E_s|$ is small, our subgraph tends to have a high graph centrality or star shape, such that one task dominates. Note however that, in MTL, keeping a very sparse relationship is usually not the goal. Because it may lead to weak collaboration between tasks, which violates the motivation of MTL.

\subsection{{\lowercase{f}MRI}-derived Brain Network Classification and Interpretability}\label{sec:brain}

Brain networks are complex graphs with anatomic brain regions of interest (ROIs) represented as nodes and functional connectivity (FC) between brain ROIs as links. For resting-state functional magnetic resonance imaging (rs-fMRI), the Pearson's correlation coefficient between blood-oxygen-level-dependent (BOLD) signals associated with each pair of ROIs is the most popular way to construct FC network~\citep{farahani2019application}.

In the problem of brain network classification, the identification of predictive subnetworks or edges is perhaps one of the most important tasks, as it offers a mechanistic understanding of neuroscience phenomena~\citep{wang2021learning}. Traditionally, this is achieved by treating all the connections (i.e., the Pearson's correlation coefficients) of FC as a long feature vector, and applying feature selection techniques, such as LASSO~\citep{tibshirani1996regression} and two-sample t-test, to determine if one edge connection is significantly different in different groups (e.g., patients with Alzheimer's disease with respect to normal control members).


In this section, we develop a new graph neural networks (GNNs) framework for interpretable brain network classification that can infer brain network categories and identify the most informative edge connections, in a joint end-to-end learning framework. We follow the motivation of~\citep{cui2021brainnnexplainer} and aim to learn a global shared edge mask $M$ to highlight decision-specific prominent brain network connections. The final explanation for an input graph $G_i$ is generated by the element-wise product of $A_i$ and $\sigma(M)$, i.e., $A_i\odot\sigma(M)$, in which $A_i$ is the adjacency matrix of $G_i$, $\sigma$ refers to the sigmoid activation function that maps $M$ to $[0,1]^{N\times N}$.
Obviously, $\sigma(M)$ in our GNN also plays a similar role to the edge selection vector $\mathbf{w}$ in Graph-PRI.



\noindent
\textbf{Problem definition.}
Given a weighted brain network $G=(V,E,W)$, where $V=\{v_i\}_{i=1}^N$ is the node set of size $N$ defined by the ROIs, $E$ is the edge set, and $W\in\mathbb{R}^{N\times N}$ is the weighted adjacency matrix describing FC strengths between ROIs, the model outputs a prediction label $y$. In brain network analysis, $N$ remains the same across subjects.

\noindent
\textbf{Experimental data.}
We evaluate our method on two benchmark real-world brain network datasets. The first one is the eyes open and eyes closed (EOEC) dataset~\citep{zhou2020toolbox}, which includes $96$ brain networks with the goal to predict either eyes open or eyes closed states. The second one is from the Alzheimer's Disease Neuroimaging Initiative (ADNI) database\footnote{\url{http://adni.loni.usc.edu/}}. We use the brain networks generated by~\citep{kuang2019concise}, with the task of distinguishing mild cognitive impairment (MCI)\footnote{MCI is a transitional stage between AD and NC.} group ($38$ patients) from normal control (NC) subjects ($37$ in total).
Details on brain network construction are elaborated in Appendix~\ref{sec:appendix_data}.

\noindent
\textbf{Methodology and objective.}
Following~\citep{cui2021brainnnexplainer}, we provide interpretability by learning an edge mask $M\in\mathbb{R}^{N\times N}$ that is shared across all subjects to highlight the disease-specific prominent ROI connections. Motivated by the functionality of PRI to prune redundant or less informative edges as demonstrated in previous sections, we train $M$ such that the resulting subgraph $G'=G\odot\sigma(M)$ and the original graph $G$ meets the PRI constraint, i.e., Eq.~(\ref{eq:pri_graph}). Therefore, the final objective of our interpretable GNN can be formulated as:
\begin{equation}
    \mathcal{L}_{\text{CE}} + \lambda \E_{G \sim p(G)}\left\{S_{\text{vN}}(G’) + \beta D_{\text{QJS}}(G’||G) \right\},
\end{equation}
in which $\mathcal{L}_{\text{CE}}$ refers to the supervised cross-entropy loss for label prediction, $\lambda$ is the hyperparameter that balances the trade-off between $\mathcal{L}_{\text{CE}}$ and PRI constraint. 


\noindent
\textbf{Empirical results.}
We summarize the classification accuracy ($\%$) with different methods over $10$ independent runs in Table~\ref{tab:table_brain}, in which Graph-PRI* refers to our objective implemented by approximating von Neumann entropy with Shannon discrete entropy functional on the normalized degree of nodes (see Section~\ref{sec:approximation}). As can be seen, our method achieves compelling or higher accuracy in both datasets.

To evaluate the interpretability of our method, we visualize the edges been frequently selected for MCI patients and NC group in Fig.~\ref{fig:brain}. We observed that the interactions within sensorimotor cortex (colored \textcolor{blue}{blue}) for MCI patients are stronger than that of NC group. 
This result is consistent with the findings in~\citep{ferreri2016sensorimotor,niskanen2011new} which observed that the motor cortex excitability is enhanced in AD and MCI from the early stages. 
We also observed that the interactions within the frontoparietal network (colored \textcolor{myyellow}{yellow}) of patients are significantly less than that of NC group, which is in line with previous studies~\citep{neufang2011disconnection,zanchi2017decreased} stated that decreased activation in FPN is associated with subtle cognitive deficits.

\begin{table}
    \centering
    \caption{Classification accuracy ($\%$) and standard deviation with different methods over $10$ independent runs. The best and second-best performances are in bold and underlined, respectively.}\label{tab:table_brain}
    \setlength{\tabcolsep}{1mm}{
      \begin{tabular}{@{}ccc@{}}
      \toprule 
      \bfseries Method  & \bfseries EOEC & \bfseries ADNI \\
      \midrule 
        SVM + t-test         & $71.79 \pm 7.80$  & $60.61 \pm 10.52$ \\
        SVM + LASSO          & $72.08 \pm 7.29$ & $54.67 \pm 12.88$ \\
      \midrule
        GCN~\citep{kipf2017semi}    & $68.42 \pm 8.59$ & $\mathbf{66.67} \pm 2.48$ \\
        GAT~\citep{velivckovic2018graph} & $73.68 \pm 8.60$ & $\mathbf{66.67} \pm 9.43$ \\
      \midrule
        \textbf{Graph-PRI}   &  $\mathbf{80.70} \pm 9.60$     &  $\mathbf{66.67} \pm 6.67$\\ 
        \textbf{Graph-PRI*}   &  $\underline{78.95} \pm 4.30$     &  $\underline{64.44} \pm 3.14$\\
      \bottomrule 
    \end{tabular}
    }
\end{table}

\definecolor{braincolor1}{rgb}{0,0,1}
\definecolor{braincolor2}{rgb}{0.8500,0.3250,0.0980}
\definecolor{braincolor3}{rgb}{0.9290,0.6940,0.1250}
\definecolor{braincolor4}{rgb}{0.4940,0.1840,0.5560}
\definecolor{braincolor5}{rgb}{0.4660,0.6740,0.1880}
\definecolor{braincolor6}{rgb}{0.3010,0.7450,0.9330}

\begin{figure}
	\centering
	\begin{subfigure}{.45\textwidth}
		\centering
		\includegraphics[width=\textwidth]{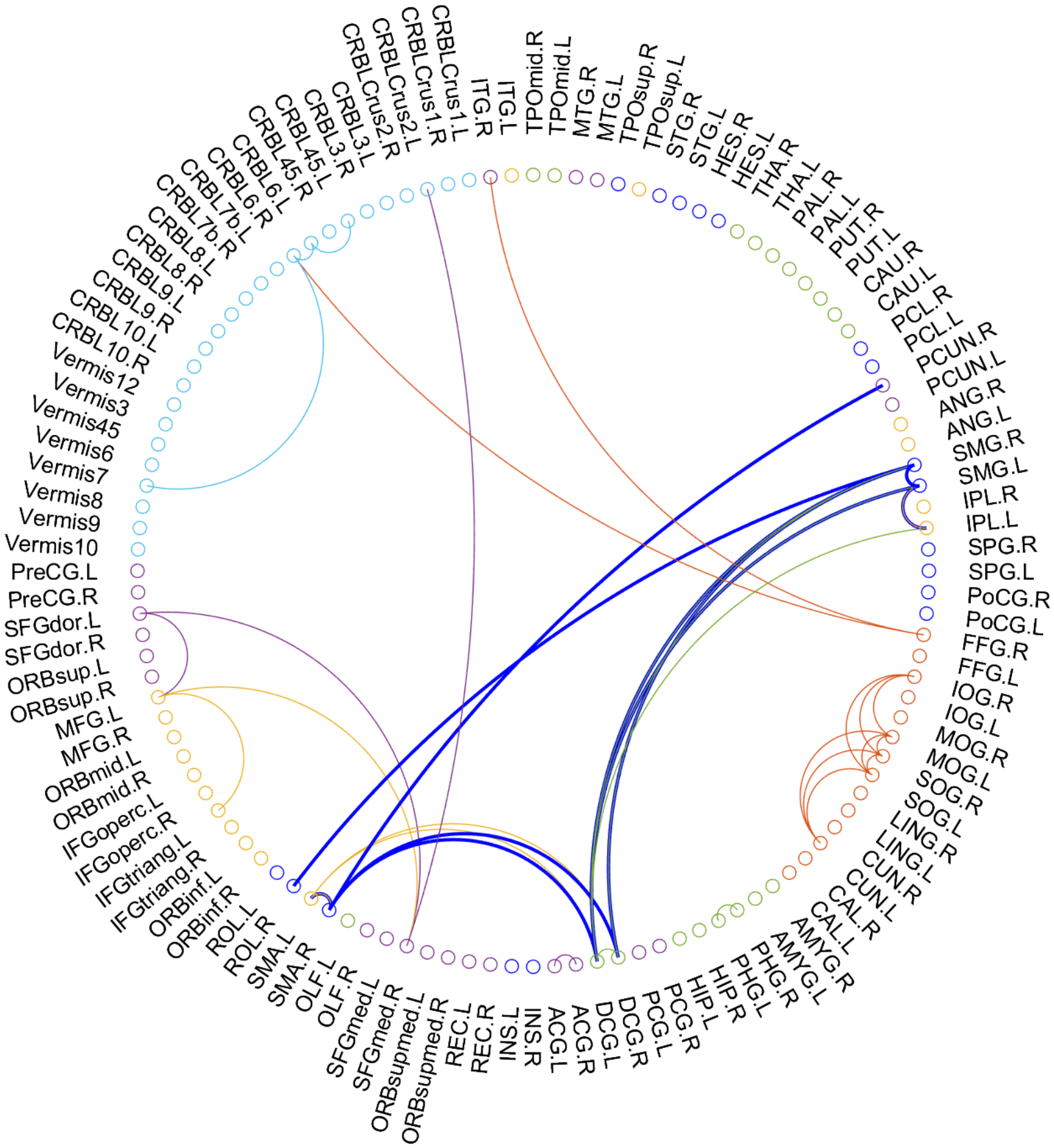}
		\caption{MCI patients}
	\end{subfigure}%
	\\
	\begin{subfigure}{.45\textwidth}
		\centering
		\includegraphics[width=\textwidth]{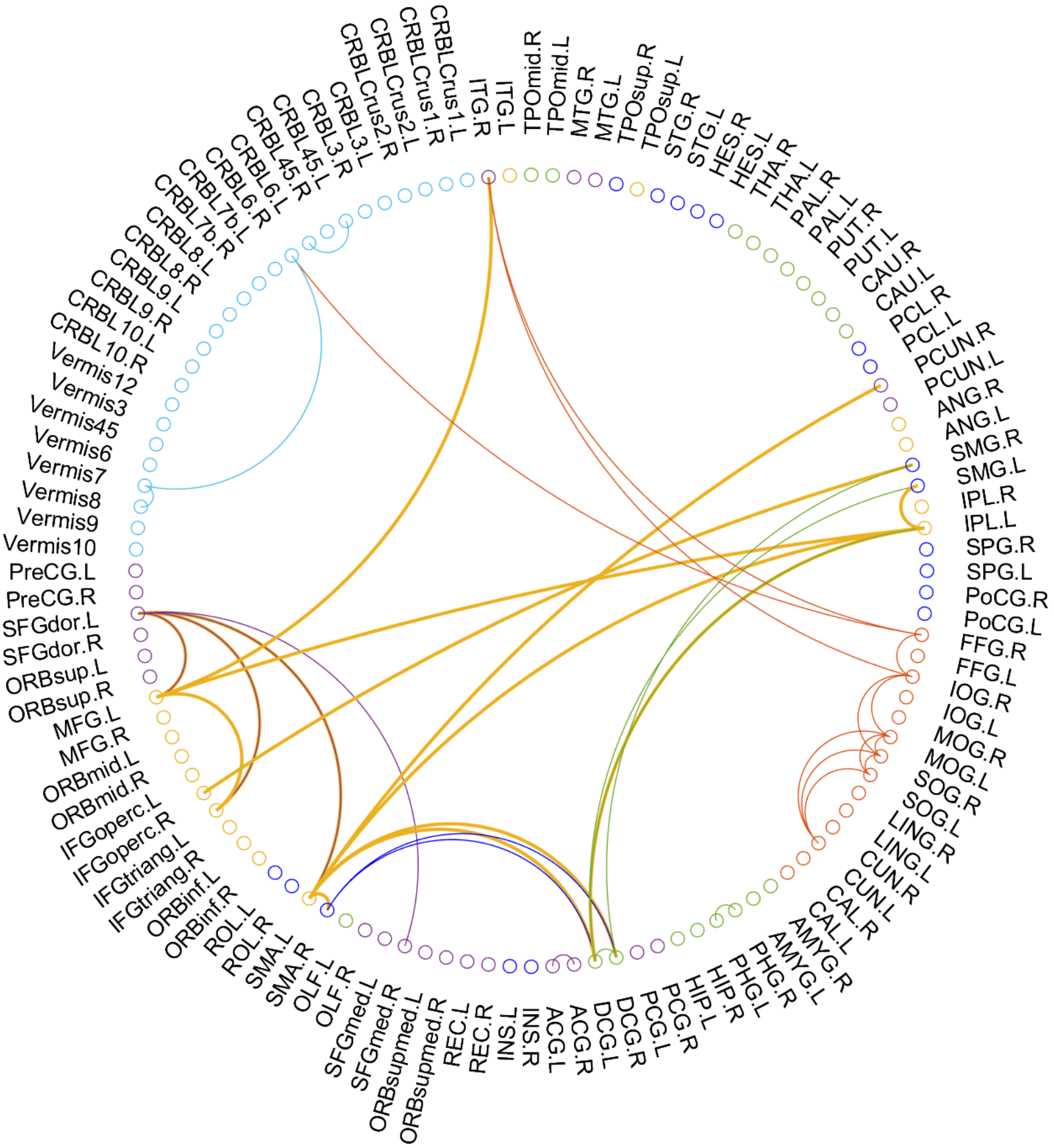}
		\caption{Normal control group}
	\end{subfigure}
	\\
	\begin{subfigure}{.45\textwidth}
	\centering
	\includegraphics[width=\textwidth]{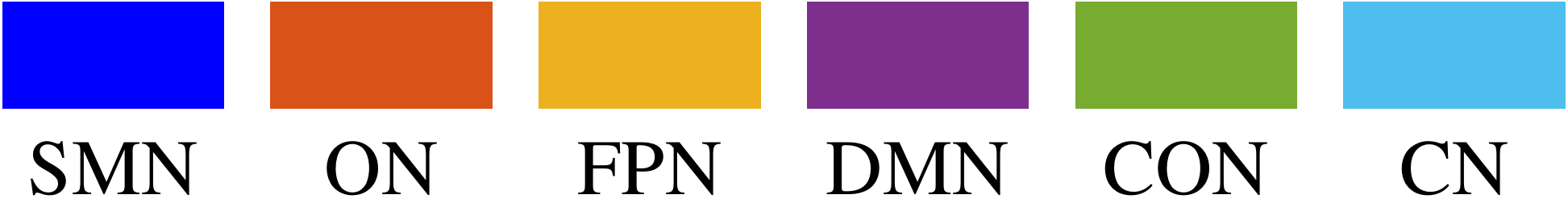}
	\end{subfigure}
	\caption{The contributing functional connectivity links for (a) MCI patients; and (b) normal control group. We visualize edges with a probability of more than $50\%$ been selected by our generated edges. 
	The colors of neural systems are described as: sensorimotor network (\textcolor{braincolor1}{SMN}), occipital network (\textcolor{braincolor2}{ON}), fronto-parietal network (\textcolor{braincolor3}{FPN}), default mode network (\textcolor{braincolor4}{DMN}), cingulo-opercular network (\textcolor{braincolor5}{CON}), and cerebellum network (\textcolor{braincolor6}{CN}), respectively.}
	\label{fig:brain}
\end{figure}

\section{Conclusions}

We present a first study on extending the Principle of Relevant Information (PRI) - a less well-known but promising unsupervised information-theoretic principle - to network analysis and graph neural networks (GNNs). Our Graph-PRI preserves spectral similarity well, while also encouraging the resulting subgraph to have higher graph centrality. Moreover, our Graph-PRI is easy to optimize. It can be flexibly integrated with either multi-task learning or GNNs to improve not only the quantitative accuracy but also interpretability.  


In the future, we will explore more unknown properties behind Graph-PRI, including a full understanding to the physical meaning of von Neumann entropy on graphs. We will also investigate more downstream applications of Graph-PRI on GNNs such as node representation learning. 


\begin{acknowledgements} 
    The authors would like to thank the anonymous reviewers for constructive comments. The authors would also like to thank Prof. Benjamin Ricaud (at UiT) and Mr. Kaizhong Zheng (at Xi'an Jiaotong University) for helpful discussions. This work was funded in part by the Research Council of Norway (RCN) under grant 309439, and the U.S. ONR under grants N00014-18-1-2306, N00014-21-1-2324, N00014-21-1-2295, the DARPA under grant FA9453-18-1-0039. 
\end{acknowledgements}

\bibliography{UAI_camera_ready}

\newpage
\appendix

\section{Proofs and Additional Information}

\subsection{Additional information on the rigor of Assumption~\ref{assumption}}\label{sec:assumption_rigor}

{\color{black}
Note that, one can find counterexamples about Assumption~\ref{assumption}. The question of whether the factor $\frac{d_{G'}-2}{d_{G'}}$ can be removed from Eq.~(\ref{eq:monotonic}) in Theorem~\ref{thm:add_edge} was investigated in~\citep{dairyko2017note}. The answer is negative and adding an edge may decrease the von Neumann entropy slightly. For example, $K_{2,n-2}$ graph ($n>5$) satisfies $S_{vN}(K_{2,n-2})>S_{vN}(K_{2,n-2}+e)$ after adding edge $e$.

However, the Assumption~\ref{assumption} does hold for most of edges. To further corroborate our argument, we performed an additional experiment, where we generated a set of random graphs with $20$ nodes by the Erd{\"o}s-R\'enyi (ER) model, with the average node degree $\bar{d}$ of roughly $1.6$, $2.5$, $3$, $4$ and $5$, respectively. For each graph, we add one edge to the original graph and re-evaluate the von Neumann entropy $S_{\text{vN}}(L)$. We traverse all possible edges and calculate the percentage that the difference of $S_{\text{vN}}(L)$ is non-negative before and after edge addition. As can be seen from the following table, we have more than $85\%$ confidence that Assumption~\ref{assumption} holds. We made similar observations also for large graph with $200$ nodes.

\begin{table}[htb]
    \centering
    \caption{The percentage that adding one edge may increase the von Neumann entropy for a random graph with $20$ nodes generated by the Erd{\"o}s-R\'enyi (ER) model.}
    \setlength{\tabcolsep}{1mm}{
      \begin{tabular}{@{}cccccc@{}}
      \toprule 
      \bfseries Degree  & $1.6$ & $2.5$ & $3$ & $4$ & $5$ \\
      \midrule 
      \bfseries Percentage ($\%$)  & $95.65$  & $88.57$ & $88.82$ & $88.13$ & $87.25$ \\
      \bottomrule 
    \end{tabular}
    }
\end{table}

}

\subsection{Proof to Corollary~\ref{corollary}}

Before proving Corollary~\ref{corollary}, we first present Lemma~\ref{lemma}, 
{ \color{black}
Lemma~\ref{lemma2} and Lemma~\ref{lemma3}. The proof of Corollary~\ref{corollary} is based on the results described in Lemma~\ref{lemma3}, which is based on the conclusion of Lemma~\ref{lemma2}. We present Lemma~\ref{lemma} as a more restrictive version of Lemma~\ref{lemma2}. 
}

\begin{lemma}\label{lemma}
Let $\boldsymbol{\lambda}=\{\lambda_i\}$ be the eigenvalues of (trace normalized) Graph Laplacian $\tilde{L}$. Suppose the $i$-th eigenvalue $\lambda_i$ has a minor and negligible increase and the remaining eigenvalues decrease proportionately to their existing values, such that $\sum_i\lambda_i=1$. Then, the total derivative of von Neumann entropy $S(\boldsymbol{\lambda})=-\sum_{i=1}^{N}{\lambda_i\log_2{\lambda_i}}$ with respect to $\lambda_i$ is given by:
\begin{equation}
\frac{dS}{d\lambda_i} = \frac{\partial S}{\partial\lambda_i}+\sum_{i\neq j}{\frac{\partial S}{\partial\lambda_j}\frac{d\lambda_j}{d\lambda_i}}  = -\frac{S(\boldsymbol{\lambda})+\log_2{\lambda_i}}{1-\lambda_i}.
\end{equation}
\end{lemma}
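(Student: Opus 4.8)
The plan is to treat $S$ as a function of the eigenvalue vector $\boldsymbol{\lambda}$ subject to the normalization $\sum_j \lambda_j = 1$, and to evaluate the total derivative by chaining the partial derivatives of $S$ against the induced rates of change $d\lambda_j/d\lambda_i$ dictated by the perturbation rule. Two ingredients must be assembled before any substitution: the partial derivatives of the von Neumann entropy, and an explicit expression for how the remaining eigenvalues respond to an increase in $\lambda_i$.

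First I would compute the partials. Since $S(\boldsymbol{\lambda}) = -\sum_k \lambda_k \log_2 \lambda_k$, differentiating a single term gives $\frac{\partial}{\partial \lambda_j}\left(\lambda_j \log_2 \lambda_j\right) = \log_2 \lambda_j + \frac{1}{\ln 2}$, so that $\frac{\partial S}{\partial \lambda_j} = -\log_2 \lambda_j - \frac{1}{\ln 2}$ for every $j$.

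Next I would encode the perturbation rule. The assumption states that as $\lambda_i$ rises, the remaining eigenvalues fall proportionately to their current values while the trace stays fixed at $1$. Writing $d\lambda_j = -c\,\lambda_j\, d\lambda_i$ for $j \neq i$ and enforcing $\sum_{j \neq i} d\lambda_j = -d\lambda_i$ (so that the total change vanishes), together with $\sum_{j \neq i} \lambda_j = 1 - \lambda_i$, fixes the constant $c = 1/(1 - \lambda_i)$. Hence $\frac{d\lambda_j}{d\lambda_i} = -\frac{\lambda_j}{1 - \lambda_i}$ for $j \neq i$. This step is the crux of the argument: the informal phrasing ``decrease proportionately to their existing values'' has to be pinned down to exactly this relation, and getting the normalization right is precisely where the $1 - \lambda_i$ denominator of the target formula originates.

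Finally I would substitute and simplify. Plugging the two ingredients into $\frac{dS}{d\lambda_i} = \frac{\partial S}{\partial \lambda_i} + \sum_{j \neq i} \frac{\partial S}{\partial \lambda_j}\frac{d\lambda_j}{d\lambda_i}$ produces a sum over $j \neq i$ of $\lambda_j\left(\log_2 \lambda_j + \frac{1}{\ln 2}\right)/(1 - \lambda_i)$. I would then use $\sum_{j} \lambda_j \log_2 \lambda_j = -S$ to rewrite $\sum_{j \neq i} \lambda_j \log_2 \lambda_j = -S - \lambda_i \log_2 \lambda_i$, and $\sum_{j \neq i} \lambda_j = 1 - \lambda_i$ to dispose of the constant term. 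The $\frac{1}{\ln 2}$ contributions cancel exactly, and after placing everything over the common denominator $1 - \lambda_i$ the numerator collapses to $-\left(S + \log_2 \lambda_i\right)$, yielding $\frac{dS}{d\lambda_i} = -\frac{S(\boldsymbol{\lambda}) + \log_2 \lambda_i}{1 - \lambda_i}$. The only genuine obstacle is the modeling step of converting the verbal perturbation description into the correct $d\lambda_j/d\lambda_i$; the rest is a bookkeeping simplification in which the base-of-logarithm constants conveniently drop out.
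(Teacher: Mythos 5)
Your proposal is correct and follows essentially the same route as the paper's proof: you pin down the proportional-decrease rule as $d\lambda_j/d\lambda_i = -\lambda_j/(1-\lambda_i)$ via the trace constraint (the paper does the identical computation by introducing a shrinkage factor $\delta = d\lambda_i/(1-\lambda_i)$), then substitute the partials $\partial S/\partial\lambda_j = -\log_2\lambda_j - \tfrac{1}{\ln 2}$ and simplify using $\sum_{j\neq i}\lambda_j = 1-\lambda_i$ and $\sum_j \lambda_j\log_2\lambda_j = -S$. No gaps; the cancellation of the $\tfrac{1}{\ln 2}$ terms and the collapse to $-\bigl(S+\log_2\lambda_i\bigr)/(1-\lambda_i)$ proceed exactly as you describe.
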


\begin{proof}
{\color{black}

For simplicity, suppose we increase the element $\lambda_i$ up to the value $\lambda_i+ d\lambda_i$ for some infinitesimally small value $d\lambda_i$. As we increase this element, we decrease all the other elements proportionately to their values so that the constraint holds. Thus, for some infinitesimal value $\delta$ we update $\boldsymbol{\lambda}$ as:
\begin{align*}
    \lambda_1 \mapsto \lambda_1 (1-\delta) \\
    \vdots \\
    \lambda_{i-1} \mapsto \lambda_{i-1} (1-\delta) \\
    \lambda_{i+1} \mapsto \lambda_{i+1} (1-\delta) \\
    \vdots \\
    \lambda_N \mapsto \lambda_N (1-\delta)
\end{align*}
or in general $\lambda_{j} \mapsto \lambda_j (1-\delta),  j \ne i $. 
Due to the constraint $\sum_i\lambda_i=1$, or
\begin{align*}
    \sum_{j \ne i} \lambda_j(1-\delta) + \lambda_i+ d \lambda_i &= 1 \\
    (1-\lambda_i)(1-\delta) + \lambda_i+ d \lambda_i &= 1 \\
    1 -\lambda_i - (1-\lambda_i)\delta + \lambda_i+ d \lambda_i &= 1 \\
    - (1-\lambda_i)\delta + d \lambda_i &= 0 \\    
\end{align*}
thus we have:
\begin{equation}
    \delta = d\lambda_i/(1-\lambda_i).
\end{equation}

Then,
\begin{equation}
    \frac{d\lambda_j}{d\lambda_i} = \frac{-\lambda_j \delta}{d\lambda_i} = -\frac{\lambda_j}{1-\lambda_i}, \quad j\neq i.
\end{equation}

Therefore, the total derivative of von Neumann entropy $S(\boldsymbol{\lambda})=-\sum_{i=1}^{N}{\lambda_i\log_2{\lambda_i}}$ with respect to $\lambda_i$ is given by:
\begin{equation}
    \begin{split}
        \frac{dS}{d\lambda_i} & = \frac{\partial S}{\partial\lambda_i}+\sum_{i\neq j}{\frac{\partial S}{\partial\lambda_j}\frac{d\lambda_j}{d\lambda_i}} \\
        & = -(\frac{1}{\ln 2}+\log_2 \lambda_i) + \frac{1}{1-\lambda_i} \sum_{i\neq j} \lambda_j\left(\frac{1}{\ln 2} + \log_2 \lambda_j \right) \\
        & = - \frac{(1-\lambda_i)\log_2 \lambda_i}{1-\lambda_i} + \frac{1}{1-\lambda_i}\sum_{i\neq j} \lambda_j \log_2\lambda_j \\
        & = - \frac{1}{1-\lambda_i} \{\sum_{i=1}^{N}{ - \lambda_i\log_2{\lambda_i}} + \log_2\lambda_i \} \\
        & = -\frac{1}{1-\lambda_i} \{S(\boldsymbol{\lambda}) +\log_2\lambda_i\}
    \end{split}
\end{equation}


}
\end{proof}

{\color{black}
In addition of the previous lemma, we show a more general form (without assuming that all other eigenvalues decrease proportionately to their values).

\begin{lemma}\label{lemma2}
Let $\bm{\lambda}=\{\lambda_i\}$ be the eigenvalues of (trace normalized) Graph Laplacian $\tilde{L}$. 
Then, the directional total derivative of von Neumann entropy $S(\bm{\lambda})=-\sum_{i=1}^{N}{\lambda_i\log_2{\lambda_i}}$ with respect to $\lambda_i$ along a change in the eigenvalues defined by $\bm{v}$ such that $\blambda' = \blambda + \delta \bv$, for $\delta \to 0$ is given by:
\begin{align}
\frac{dS}{d\lambda_i} |_{\bv} &= \frac{\partial S}{\partial\lambda_i}+\sum_{j\neq i}{\frac{\partial S}{\partial\lambda_j}\frac{d\lambda_j}{d\lambda_i}} |_{\bv} \nonumber \\
&= -\frac1{v_i} \sum_{j} v_j \log_2 \lambda_j = -\frac1{v_i} \E_{\bv} \log_2 \blambda
\end{align}
where $\sum_j v_j = 0$ and $0 \le v_j + \lambda_j \le 1$ is the directional variation of the eigenvalues $\blambda' = \blambda + \delta \bv$. We denoted $\E_{\bv} $ the sum over the element of $\bv$ (i.e. $\E_{\bv} \log_2 \blambda = \sum_{j} v_j \log_2 \lambda_j $).
In compact form (and with the abuse of the
expectation operator)
\begin{align}
\frac{dS}{d\blambda} |_{\bv} = -\bv^{-1} \E_{v} \log_2 \blambda
\end{align}
where the inverse of the vector $\bv$ is element-wise.
\end{lemma}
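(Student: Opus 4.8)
The plan is to read the claimed identity as a single chain-rule computation along the one-parameter family $\blambda(\delta) = \blambda + \delta\bv$, where the middle expression $\frac{\partial S}{\partial\lambda_i} + \sum_{j\neq i}\frac{\partial S}{\partial\lambda_j}\frac{d\lambda_j}{d\lambda_i}$ is interpreted as the derivative of $S$ with respect to $\lambda_i$ when every other eigenvalue is viewed as a function of $\lambda_i$ through this family. Along the family each coordinate moves at a constant ($\delta$-independent) rate, namely $\lambda_j(\delta) = \lambda_j + \delta v_j$, so the only ratio I need is $\frac{d\lambda_j}{d\lambda_i} = v_j/v_i$. This is the direct generalization of the ratio $-\lambda_j/(1-\lambda_i)$ used in Lemma~\ref{lemma}, which corresponds to the special direction $v_j = -\lambda_j$, $v_i = 1-\lambda_i$.

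First I would compute the partial derivatives of the von Neumann entropy $S(\blambda) = -\sum_i \lambda_i \log_2\lambda_i$, obtaining $\frac{\partial S}{\partial\lambda_j} = -\log_2\lambda_j - \frac{1}{\ln 2}$, where the constant $\frac{1}{\ln 2} = \log_2 e$ arises from differentiating the leading factor $\lambda_j$. Substituting $\frac{d\lambda_j}{d\lambda_i} = v_j/v_i$ and factoring out $1/v_i$, the right-hand side of the chain rule collapses to $\frac{1}{v_i}\sum_j v_j \frac{\partial S}{\partial\lambda_j} = -\frac{1}{v_i}\sum_j v_j\bigl(\log_2\lambda_j + \frac{1}{\ln 2}\bigr)$, so that the asserted formula is reduced to killing the spurious constant term.

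The decisive step is then to invoke the constraint $\sum_j v_j = 0$, which is precisely the trace-preservation condition $\tr\tilde L = 1$ differentiated along $\bv$. This makes $\frac{1}{\ln 2}\sum_j v_j$ vanish, leaving $-\frac{1}{v_i}\sum_j v_j\log_2\lambda_j = -\frac{1}{v_i}\E_{\bv}\log_2\blambda$, which is exactly the claim; the compact form follows by letting $i$ range over all coordinates and reading $\bv^{-1}$ entrywise. I would note explicitly that the admissibility condition $0 \le v_j + \lambda_j \le 1$ plays no role in the derivative identity itself; it merely guarantees that the perturbed vector remains a valid (trace-one, nonnegative) spectrum.

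I expect the only real friction to be bookkeeping rather than analysis: being explicit that this ``total derivative with respect to $\lambda_i$'' is inherently a directional object, well-defined only relative to the chosen $\bv$ and requiring $v_i \neq 0$, and verifying that it reduces correctly to the earlier lemma. As a consistency check I would substitute the Lemma~\ref{lemma} direction $v_i = 1-\lambda_i$, $v_j = -\lambda_j$ $(j\neq i)$ into $-\frac{1}{v_i}\E_{\bv}\log_2\blambda$; using $-\sum_{j\neq i}\lambda_j\log_2\lambda_j = S(\blambda) + \lambda_i\log_2\lambda_i$ one finds $\E_{\bv}\log_2\blambda = S(\blambda) + \log_2\lambda_i$, and hence the expression recovers $-\frac{S(\blambda)+\log_2\lambda_i}{1-\lambda_i}$, confirming that Lemma~\ref{lemma2} subsumes Lemma~\ref{lemma}.
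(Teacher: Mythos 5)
Your proposal is correct and follows essentially the same route as the paper's proof: both use the directional ratio $\frac{d\lambda_j}{d\lambda_i}\big|_{\bv} = v_j/v_i$, the partials $\frac{\partial S}{\partial\lambda_j} = -\bigl(\log_2\lambda_j + \tfrac{1}{\ln 2}\bigr)$, and the constraint $\sum_j v_j = 0$ to cancel the constant $\tfrac{1}{\ln 2}$ terms, differing only in that you factor out $1/v_i$ and collapse everything into $\tfrac{1}{v_i}\sum_j v_j\frac{\partial S}{\partial\lambda_j}$ before cancellation, whereas the paper carries the $i$-th term separately through the algebra. Your added consistency check against Lemma~\ref{lemma} (direction $v_i = 1-\lambda_i$, $v_j = -\lambda_j$) and your explicit remark that $v_i \neq 0$ is required are both sound and go slightly beyond what the paper records.
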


\begin{proof}
We first observe that 
\begin{equation*}
    \frac{d\lambda_j}{d\lambda_i}|_{\bv} = \frac{\delta v_j}{\delta v_i} = \frac{v_j}{v_i}, \quad j\neq i.
\end{equation*}

for the definition of the variation. Therefore, the total derivative of von Neumann entropy $S(\boldsymbol{\lambda})=-\sum_{i=1}^{N}{\lambda_i\log_2{\lambda_i}}$ with respect to $\lambda_i$,along the direction $\bm{v}$, is given by:
\begin{equation*}
    \begin{split}
        \frac{dS}{d\lambda_i} |_{\bv} &= \frac{\partial S}{\partial\lambda_i}+\sum_{j\neq i}{\frac{\partial S}{\partial\lambda_j}\frac{d\lambda_j}{d\lambda_i}} |_{\bv} \\
        & = -(\frac{1}{\ln 2}+\log_2 \lambda_i) - \sum_{j \neq i} \frac{v_j}{v_i} \left(\frac{1}{\ln 2} + \log_2 \lambda_j \right) \\
        & = -\frac{1}{\ln 2}-\log_2 \lambda_i - \frac{1}{v_i} \sum_{j \neq i} v_j \left(\frac{1}{\ln 2} + \log_2 \lambda_j \right) \\
        & = -\frac{1}{\ln 2}-\log_2 \lambda_i - \frac{1}{v_i} \sum_{j \neq i} v_j \frac{1}{\ln 2} - \frac{1}{v_i} \sum_{j \neq i} v_j \log_2 \lambda_j  \\
        & = -\frac{1}{\ln 2}-\log_2 \lambda_i - \frac{1}{v_i} \frac{-v_i}{\ln 2} - \frac{1}{v_i} \sum_{j \neq i} v_j \log_2 \lambda_j  \\
        & = -\frac{1}{\ln 2}-\log_2 \lambda_i + \frac{1}{\ln 2} - \frac{1}{v_i}  \sum_{j \neq i} v_j \log_2 \lambda_j  \\
        & = -\log_2 \lambda_i - \frac{1}{v_i}  \sum_{j \neq i} v_j \log_2 \lambda_j  \\
        & = -\frac{v_i}{v_i} \log_2 \lambda_i - \frac{1}{v_i}  \sum_{j \neq i} v_j \log_2 \lambda_j  \\
        & = -\frac{1}{v_i}  \sum_{j} v_j \log_2 \lambda_j 
    \end{split}
\end{equation*}
\end{proof}

\begin{lemma}\label{lemma3}
Let $\boldsymbol{\lambda}=\{\lambda_i\}$ be the eigenvalues of (trace normalized) Graph Laplacian $\tilde{L}$ whose von Neumann entropy is  $S(\boldsymbol{\lambda})=-\sum_{i=1}^{N}{\lambda_i\log_2{\lambda_i}}$. Let  $\blambda' = \blambda + \bv$, such that $\sum_i v_i = 0 $ and  $0 \le v_j + \lambda_j \le 1$ and $S(\blambda') \ge S(\blambda)$, then 
\begin{align}
\E_{\bv} \log_2 \blambda' \ge \E_{\bv} \log_2 \blambda 
\end{align}
and
\begin{align}
\bm{v} \frac{dS}{d\blambda} (\blambda') |_{\bv} \le \bm{v} \frac{dS}{d\blambda} (\blambda) |_{\bv}
\end{align}
\end{lemma}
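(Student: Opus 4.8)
The plan is to reduce both inequalities to a single elementary observation about the monotonicity of the logarithm, using the closed form supplied by Lemma~\ref{lemma2}. Since $\blambda' = \blambda + \bv$ by hypothesis, we have $v_j = \lambda_j' - \lambda_j$ for every $j$, so the first claim
\begin{equation*}
\E_{\bv}\log_2\blambda' - \E_{\bv}\log_2\blambda = \sum_j v_j\bigl(\log_2\lambda_j' - \log_2\lambda_j\bigr) \ge 0
\end{equation*}
rewrites, after substituting $v_j = \lambda_j' - \lambda_j$, as
\begin{equation*}
\sum_j \bigl(\lambda_j' - \lambda_j\bigr)\bigl(\log_2\lambda_j' - \log_2\lambda_j\bigr) \ge 0 .
\end{equation*}
First I would prove this sum is nonnegative term by term.

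For each fixed $j$, the two factors $\lambda_j' - \lambda_j$ and $\log_2\lambda_j' - \log_2\lambda_j$ carry the same sign because $x\mapsto\log_2 x$ is strictly increasing: if $\lambda_j'\ge\lambda_j$ both factors are nonnegative, and if $\lambda_j'\le\lambda_j$ both are nonpositive, so in either case their product is $\ge 0$. Summing over $j$ yields the first inequality. I would handle the shared zero eigenvalue of the (connected-graph) Laplacian with the usual convention $0\log_2 0 = 0$: when $\lambda_j = \lambda_j' = 0$ the corresponding $v_j$ vanishes and the term drops out, so the argument is unaffected. It is worth flagging that this step does \emph{not} actually use the hypothesis $S(\blambda')\ge S(\blambda)$; the inequality holds for every admissible perturbation with $\sum_j v_j = 0$ and $0\le\lambda_j+v_j\le 1$, with the entropy condition carried along only for consistency with the way the lemma is invoked in Corollary~\ref{corollary}.

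For the second claim I would appeal directly to the closed form of Lemma~\ref{lemma2}. Multiplying that identity through by $v_i$ gives $v_i\,\frac{dS}{d\lambda_i}\big|_{\bv} = -\E_{\bv}\log_2\blambda$, and the identical computation at the point $\blambda'$ gives $v_i\,\frac{dS}{d\lambda_i}(\blambda')\big|_{\bv} = -\E_{\bv}\log_2\blambda'$; note that every component of $\bv\,\frac{dS}{d\blambda}\big|_{\bv}$ collapses to this one scalar, so the expression is well defined. The second inequality $\bv\,\frac{dS}{d\blambda}(\blambda')\big|_{\bv} \le \bv\,\frac{dS}{d\blambda}(\blambda)\big|_{\bv}$ is then exactly $-\E_{\bv}\log_2\blambda' \le -\E_{\bv}\log_2\blambda$, i.e.\ the negation of the first claim, and follows immediately. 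The only real bookkeeping obstacle is the vector/scalar notation inherited from Lemma~\ref{lemma2} together with making the domain conventions precise; the analytic content is the one-line monotonicity estimate above, so I expect no genuine difficulty beyond that.
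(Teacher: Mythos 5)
Your proof is correct, and its overall architecture matches the paper's: substitute $v_j=\lambda_j'-\lambda_j$, establish the first inequality by showing $\E_{\bv}\log_2\blambda'-\E_{\bv}\log_2\blambda\ge 0$, then obtain the second inequality by multiplying the closed form of Lemma~\ref{lemma2} through by $v_i$ and negating. Where you genuinely diverge is in the justification of the central non-negativity. The paper expands the difference into
\begin{equation*}
\Bigl(-S(\blambda)-\sum_j \lambda_j\log_2\lambda_j'\Bigr)+\Bigl(-S(\blambda')-\sum_j \lambda_j'\log_2\lambda_j\Bigr),
\end{equation*}
i.e.\ the Jeffreys (symmetrized KL) decomposition $D(\blambda\|\blambda')+D(\blambda'\|\blambda)$, and invokes Gibbs' inequality twice. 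You instead write the same quantity as $\sum_j(\lambda_j'-\lambda_j)(\log_2\lambda_j'-\log_2\lambda_j)$ and observe that \emph{each term} is non-negative by monotonicity of $\log_2$. Your route is more elementary (no appeal to KL non-negativity, hence no hidden use of Jensen) and proves something slightly stronger (term-wise rather than aggregate non-negativity); the paper's route makes the information-theoretic identity explicit, which fits the paper's framing but buys nothing extra logically. Two further points in your favor: you correctly flag that the hypothesis $S(\blambda')\ge S(\blambda)$ is never used --- this is also true of the paper's proof, though the paper does not say so --- and you address the zero-eigenvalue convention $0\log_2 0=0$, which the paper ignores even though a graph Laplacian always has a zero eigenvalue (the remaining degenerate case $\lambda_j=0$, $\lambda_j'>0$ sends both sides' difference to $+\infty$ and harms neither argument, but is a shared loose end in both proofs).
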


\begin{proof}
From the definition we consider $\bv = \blambda' - \blambda$, thus, omitting the vector indices and using vector notation, where operations are performed element-wise and sum is over the elements of the resulting vector:
\begin{align*}
    \E_{\bv} \log_2 \blambda & = \sum_j (\blambda_j' - \blambda_j) \log_2 \blambda_j \\ 
    &= \sum_j \blambda_j' \log_2 \blambda_j - \sum \blambda_j \log_2 \blambda_j \\
    &= \sum_j \blambda_j' \log_2 \blambda_j + S(\blambda)
\end{align*}
similarly
\begin{align*}
    \E_{\bv} \log_2 \blambda' & = \sum_j (\blambda_j' - \blambda_j) \log_2 \blambda_j' \\ 
    &= \sum_j \blambda_j' \log_2 \blambda_j' - \sum_j \blambda_j \log_2 \blambda_j' \\
    &= -S(\blambda') - \sum_j \blambda_j \log_2 \blambda_j'
\end{align*}
the difference is 
\begin{align*}
    \E_{v} \log_2 \blambda' - \E_{v} \log_2 \blambda  & = -S(\blambda') - S(\blambda) \\
    & - \sum_j \blambda_j \log_2 \blambda_j' - \sum_j \blambda_j' \log_2 \blambda_j \ge 0
\end{align*}
which is the sum of two non-negative terms:
\begin{align*}
    -S(\blambda) - \sum_j \blambda_j \log_2 \blambda_j' \ge 0 \\
    -S(\blambda') - \sum_j \blambda_j' \log_2 \blambda_j \ge 0 
\end{align*}
The last two inequalities follow from the property of the KL divergence, indeed we use the inequality $D(\boldsymbol{p} || \boldsymbol{q}) = \sum_i p_i \log_2 \frac{p_i}{q_i}
= \E_{\boldsymbol{p}} \log_2 \boldsymbol{p} - \E_{\boldsymbol{p}} \log_2 \boldsymbol{q}  = -H(\boldsymbol{p}) - \E_{\boldsymbol{p}} \log_2 \boldsymbol{q}\ge 0$.

Since for Lemma \ref{lemma2}
\begin{align*}
\frac{dS}{d\blambda} (\blambda) |_{\bv} = -\bv^{-1} \E_{v} \log_2 \blambda,
\end{align*}
it follows from the first property ($\E_{\bv} \log_2 \blambda' \ge \E_{\bv} \log_2 \blambda $) that 
\begin{align*}
\bv \frac{dS}{d\blambda} (\blambda') |_{\bv} \le \bv \frac{dS}{d\blambda} (\blambda) |_{\bv}
\end{align*}
where the comparison is element wise, i.e.
\begin{align*}
\bv_i \frac{dS}{d\blambda_i} (\blambda') |_{\bv} \le \bv_i \frac{dS}{d\blambda_i} (\blambda) |_{\bv}
\end{align*}
\end{proof}

}

Now, we present proof to Corollary~\ref{corollary}.
\begin{proof}
Suppose the addition of an edge makes the $i$-th eigenvalue $\lambda_i$ has a minor change $\Delta\lambda_i$. By first-order approximation, we have: 
\begin{equation}\label{eq:1st_1}
    S_{\text{vN}}\left(\frac{L_G+L_{G'_S}}{2}\right)-S_{\text{vN}}\left(\frac{L_G+L_{G_S}}{2}\right)\approx\frac{1}{2}\frac{dS_{\text{vN}}(L_{\bar{G}})}{d\lambda_i}\Delta\lambda_i,
\end{equation}
and
\begin{equation}\label{eq:1st_2}
    S_{\text{vN}}(L_{G'_S})-S_{vN}(L_{G_S})\approx\frac{dS_{\text{vN}}(L_{G_s})}{d\lambda_i}\Delta\lambda_i,
\end{equation}
in which $\frac{dS}{d\lambda_i}$ is the total derivative of von Neumann entropy $S(\boldsymbol{\lambda})=-\sum_{i=1}^{N}{\lambda_i\log_2{\lambda_i}}$ with respect to $\lambda_i$.

{\color{black}

By Assumption~\ref{assumption}, we have $S_{\text{vN}}(L_{\bar{G}})\geq S_{\text{vN}}(L_{G_s})$. By applying Lemma~\ref{lemma3}, we obtain:
\begin{equation}\label{eq:1st_3_new}
    \frac{dS_{\text{vN}}(L_{\bar{G}})}{d\lambda_i} \Delta \lambda_i \le\frac{dS_{\text{vN}}(L_{G_s})}{d\lambda_i}\Delta \lambda_i.
\end{equation}
along the direction $\bv = \blambda(L_{\bar{G}}) - \blambda(L_{G_s})$, where $\blambda(L_{\bar{G}})$ and $ \blambda(L_{G_s})$ are the eigenvalues of the two normalized Graph Laplacian matrices. We used the variation $ v_i = \Delta \lambda_i$ of Lemma \ref{lemma3}.

}

Combining Eqs.~(\ref{eq:1st_1}) to (\ref{eq:1st_3_new}), we get:
\begin{equation}
\begin{split}
    S_{\text{vN}}\left(\frac{L_G+L_{G'_S}}{2}\right)-S_{\text{vN}}\left(\frac{L_G+L_{G_S}}{2}\right) \\
    \le\frac{1}{2} \left(S_{\text{vN}}(L_{G'_S})-S_{\text{vN}}(L_{G_S})\right).
\end{split}
\end{equation}

Thus,
\begin{equation}
\begin{split}
    S_{\text{vN}}\left(\frac{L_G+L_{G'_S}}{2}\right)-\frac{1}{2}\left(S_{\text{vN}}(L_{G'_S})+S_{vN}(L_G)\right)\\
    \le S_{\text{vN}}\left(\frac{L_G+L_{G_S}}{2}\right)-\frac{1}{2}\left(S_{\text{vN}}(L_{G_S})+S_{vN}(L_G)\right),
\end{split}
\end{equation}
which completes the proof.

\end{proof}


%

%

\subsection{Additional Information and Proof to Theorem~\ref{th:grad}}

{\color{black}
Theorem~\ref{th:grad} shows the closed-form gradient of the argument of Eq.~(\ref{eq:pri_final}). This gradient can be used to reduce the computational or memory requirement to compute the gradient, as compared to the use of automatic differentiation. It can also help in understanding the contribution of the gradient and design approximation of the gradient. In Theorem~\ref{th:grad}, $\mathbf{w}$ is edge selection vector, while $\tilde{\mathbf{w}}$ is its normalised version, i.e. $\tilde{\mathbf{w}} = \mathbf{w} / \sum_{i=1}^M w_i$. Similarly,  $\tilde{\mathbf{1}}_{M}  = \frac1{M} \mathbf{1}_{M} $ is the normalized version of the all-ones vector. In Theorem \ref{th:grad}, $g$ is the gradient of the Von Neumann entropy with respect to the normalized Laplacian matrix, while $U$ is a matrix that normalizes the gradient with respect to the edge selection vector values. 

The gumbel-softmax distribution can be used on both $H(G)$ and $S_{\text{vN}}(L_G)$ and does not require the results of Theorem \ref{th:grad}.
}

\begin{proof} [Proof of Theorem \ref{th:grad}]
Theorem \ref{th:grad} follows by definition of Eq.~(\ref{eq:jsd})
and substituting the definition of 
Eq.~(\ref{eq:jsd}) and the use of result from Theorem~\ref{th:grad_entropy}.
The total derivative of the cost function w.r.t. to the normalized selection vector $\tilde{\mathbf{w}}$, is given by $ \frac{d J}{d w_i} = \frac{\partial J}{\partial w_i}+\sum_{i\neq j}{\frac{\partial J}{\partial w_j}\frac{d w_j}{d w_i}} $. With the normalized selector vector, we have that $\sum_{k} w_k = 1$ before and after the change.
If we consider, 
{\color{black}
as in Lemma~\ref{lemma},
}
$w_i \to w_i + \delta$ and $w_j \to w_j (1 - \gamma), j \ne i$, then $\gamma = \frac{\delta}{1-w_i}$ and $\frac{d w_j}{d w_i} = - \frac{\gamma w_j}{\delta} = -\frac{w_j}{1-w_i} $
\end{proof}

\begin{theorem} \label{th:grad_entropy}
The gradient of the von Neumann entropy w.r.t. the edge selection vector $\mathbf{w}$ is
\begin{align} \label{eq:grad_entropy}
\nabla_\mathbf{w} S(\sigma_\mathbf{w}) = -\diag \left(B^T \log (B \diag(\mathbf{w})B^T) B\right),
\end{align}
where 
$S(\sigma) = -\tr{\sigma \log \sigma - \sigma} = -\sum_i (\lambda_i \log \lambda_i - \lambda_i)$
and $\sigma_\mathbf{w} = B \diag(\mathbf{w})B^T$.
\end{theorem}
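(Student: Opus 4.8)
The plan is to reduce this matrix-valued differentiation to $M$ scalar partial derivatives $\partial S/\partial w_m$ and then repackage the resulting vector as the diagonal of a single matrix. Two facts make the computation routine. First, by Eq.~(\ref{eq:k-sparse}) the map $\mathbf{w}\mapsto\sigma_\mathbf{w}=B\diag(\mathbf{w})B^T=\sum_{m=1}^M w_m\,\mathbf{b}_m\mathbf{b}_m^T$ is \emph{linear}, so $\partial\sigma_\mathbf{w}/\partial w_m=\mathbf{b}_m\mathbf{b}_m^T$. Second, for a scalar function $g$ lifted to symmetric matrices by the spectral calculus, the trace obeys the chain rule $\partial_{w_m}\tr{g(\sigma_\mathbf{w})}=\tr{g'(\sigma_\mathbf{w})\,\partial_{w_m}\sigma_\mathbf{w}}$.

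First I would write the functional in scalar form: $S(\sigma)=-\tr{\sigma\log\sigma-\sigma}=\tr{g(\sigma)}$ with $g(x)=-(x\log x-x)$. The $-\sigma$ term inside the trace is exactly what makes the gradient clean, because $g'(x)=-\log x-1+1=-\log x$, so the constant terms cancel and $g'(\sigma_\mathbf{w})=-\log(\sigma_\mathbf{w})$. Next I would apply the trace chain rule with $\partial_{w_m}\sigma_\mathbf{w}=\mathbf{b}_m\mathbf{b}_m^T$ and use cyclicity of the trace to get $\partial_{w_m}S=-\tr{\log(\sigma_\mathbf{w})\,\mathbf{b}_m\mathbf{b}_m^T}=-\mathbf{b}_m^T\log(\sigma_\mathbf{w})\,\mathbf{b}_m$. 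Finally, recognizing $\mathbf{b}_m^T\log(\sigma_\mathbf{w})\mathbf{b}_m$ as the $(m,m)$ entry of $B^T\log(\sigma_\mathbf{w})B$, stacking over $m$ gives $\nabla_\mathbf{w}S(\sigma_\mathbf{w})=-\diag(B^T\log(\sigma_\mathbf{w})B)$, which is the claim.

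To justify the trace chain rule itself I would expand the analytic part of $g$ as a power series and differentiate term by term using $\frac{d}{dt}\tr{\sigma^k}=k\,\tr{\sigma^{k-1}\dot\sigma}$; the cyclic invariance of the trace absorbs the non-commutativity of $\sigma_\mathbf{w}$ and $\dot\sigma_\mathbf{w}$, so no Daleckii--Krein off-diagonal correction survives and the formula $\tr{g'(\sigma)\dot\sigma}$ holds verbatim.

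The hard part will be the singularity of the matrix logarithm: $\sigma_\mathbf{w}$ is only positive \emph{semi}-definite since a Laplacian always has a nontrivial kernel, and $g'(x)=-\log x$ blows up as $x\to 0^+$. I would handle this by observing that each incidence column $\mathbf{b}_m$ of an active edge is orthogonal to $\ker\sigma_\mathbf{w}$: an edge joins two vertices of the same connected component, and $\ker\sigma_\mathbf{w}$ consists precisely of vectors constant on components, so $\mathbf{b}_m^T\mathbf{x}=0$ for every $\mathbf{x}\in\ker\sigma_\mathbf{w}$. Hence $\mathbf{b}_m\in\range(\sigma_\mathbf{w})$ and the quadratic form $\mathbf{b}_m^T\log(\sigma_\mathbf{w})\mathbf{b}_m$ only samples $\log$ on the strictly positive part of the spectrum, so it is finite. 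The rigorous version runs the entire derivation with the regularized matrix $\sigma_\mathbf{w}+\epsilon I$ and lets $\epsilon\to 0$, the orthogonality above guaranteeing that the kernel contribution vanishes in the limit; this also pins down the precise meaning of $\log(\sigma_\mathbf{w})$ in the statement as the logarithm restricted to $\range(\sigma_\mathbf{w})$.
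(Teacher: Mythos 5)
Your proof is correct and follows essentially the same route as the paper's own (very terse) proof: the paper simply invokes $\nabla_\sigma S(\sigma) = -\log\sigma$ together with the rule for differentiating the trace of a matrix function, which is exactly your computation $g'(x)=-\log x$ combined with $\partial_{w_m}\sigma_\mathbf{w}=\mathbf{b}_m\mathbf{b}_m^T$ and cyclicity of the trace, repackaged as the diagonal of $B^T\log(\sigma_\mathbf{w})B$. Your extra care (the power-series justification of the trace chain rule and the treatment of the Laplacian's kernel via $\sigma_\mathbf{w}+\epsilon I$) elaborates details the paper leaves implicit, but it is a fleshing-out of the same argument rather than a different approach.
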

\begin{proof}[Proof of Theorem \ref{th:grad_entropy}]
Theorem \ref{th:grad_entropy} follows from
$\nabla_\sigma S(\sigma) = - \log{\sigma}$
and the use of gradient of the trace of a function of a matrix. Here we use the un-normalized Laplacian matrix for simplicity.
\end{proof}

\section{Principle of Relevant Information (PRI) for scalar random variables}\label{sec:appendix_PRI}

In information theory, a natural extension of the well-known Shannon's entropy is the R{\'e}nyi's $\alpha$-entropy~\citep{renyi1961measures}. For a random variable $\bf{X}$ with PDF $f(x)$ in a finite set $\mathcal{X}$, the $\alpha$-entropy of $H(\bf{X})$  is defined as:

\begin{equation}
H_\alpha(f)=\frac{1}{1-\alpha}\log \int_\mathcal{X} f^\alpha(x)dx.
\label{1.1}
\end{equation}

On the other hand, motivated by the famed Cauchy-Schwarz (CS) inequality:
\begin{equation}
\Big| \int f(x)g(x)dx \Big|^2 \leq \int \mid f(x)\mid^2 dx \int \mid g(x)\mid^2 dx,
\end{equation}
with equality if and only if $f(x)$ and $g(x)$ are linearly dependent (e.g., $f(x)$ is just a scaled version of $g(x)$), a measure of the ``distance'' between the PDFs can be defined, which was named the CS divergence~\citep{jenssen2006cauchy}, with:
\begin{equation} \label{1.2}
\begin{split}
D_{cs} (f\|g) & = -\log(\int fg)^2 + \log(\int f^2) + \log(\int g^2) \\
& = 2H_2(f;g) - H_2(f) - H_2(g),
\end{split}
\end{equation}
the term $H_2(f;g)=-\log\int f(x)g(x)dx$ is also called the quadratic cross entropy~\citep{principe2010information}.

Combining Eqs.~(\ref{1.1}) and (\ref{1.2}), the PRI under the $2$-order R{\'e}nyi entropy can be formulated as:
\begin{equation}\label{1.3}
\begin{aligned}
f_{\text{opt}}& =\arg \min_f H_2(f)+\beta(2H_2(f;g)-H_2(f)-H_2(g))\\
& \equiv \arg \min_f (1-\beta)H_2(f) + 2\beta H_2(f;g),
\end{aligned}
\end{equation}
the second equation holds because the extra term $\beta H_2(g)$ is a constant with respect to $f$. 

As can be seen, the objective of na\"ive PRI for $i.i.d.$ random variables (i.e., Eq.~(\ref{1.3})) resembles its new counterpart on graph data (i.e., Eq.~(\ref{eq:pri_final})). The big difference is that we replace $H_2(f)$ with $S_{\text{vN}}(\tilde{\sigma})$ and $H_2(f;g)$ with $S_{\text{vN}}\left(\frac{\tilde{\sigma}+\tilde{\rho}}{2}\right)$ to capture structure information.


If we estimate $H_2(f)$ and $H_2(f;g)$ with the Parzen-window density estimator and optimize Eq.~(\ref{1.3}) by gradient descent. 
Fig.~\ref{fig:org_pri} demonstrates the structure learned from an original intersect data by different values of $\beta$.

Interestingly, when $\beta=0$, we obtained a single point, very similar to what happens for Graph-PRI that learns a nearly star graph such that edges concentrates on one node. Similarly, when $\beta\to\infty$, both na\"ive PRI and Graph-PRI get back to the original input as the solution.

\begin{figure*}[ht]
	\centering
	    \begin{subfigure}{.3\textwidth}
	    \centering
        \includegraphics[width=\textwidth]{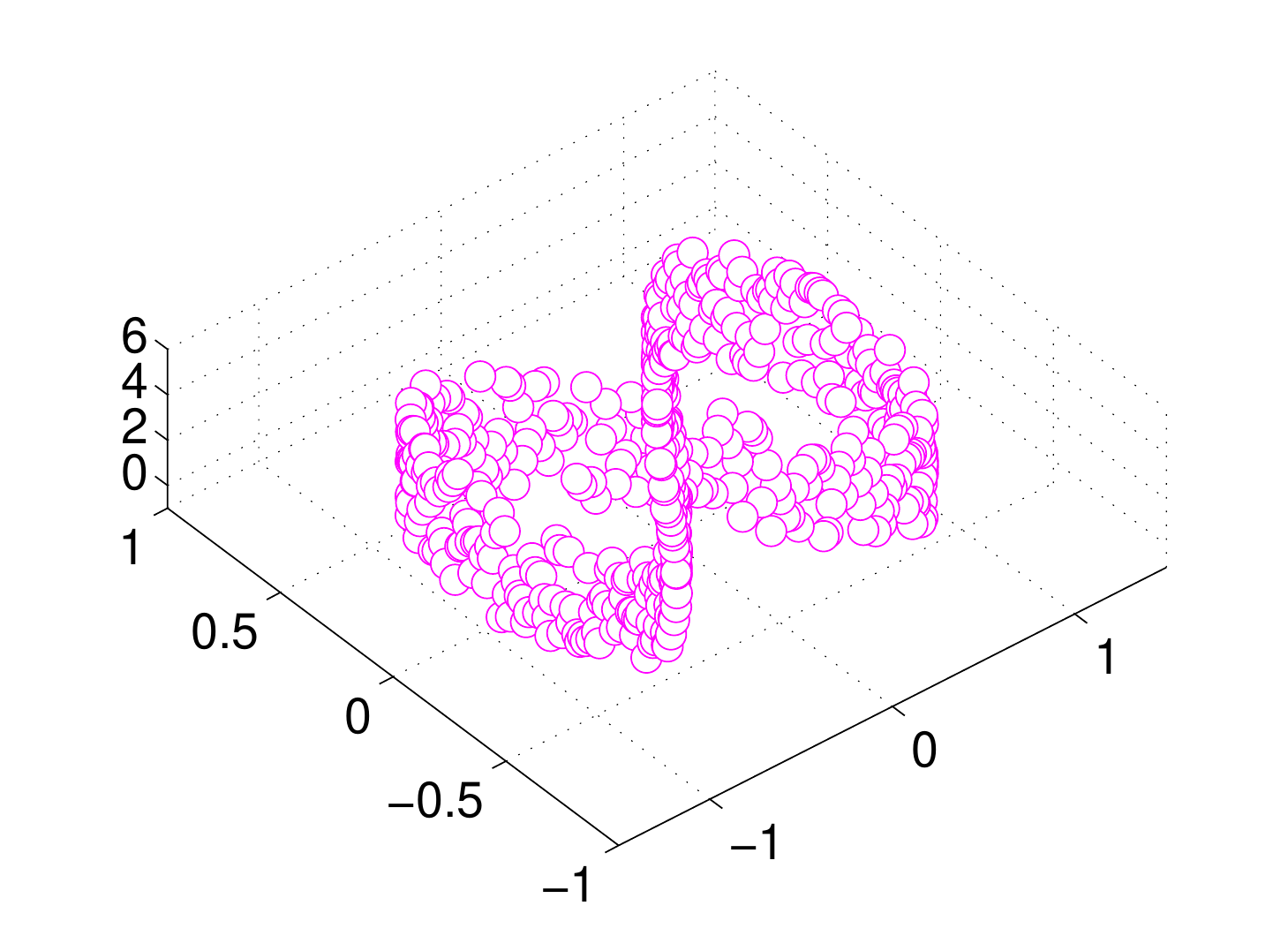}
        \caption{original data}
	\end{subfigure}%
	\begin{subfigure}{.3\textwidth}
	    \centering
        \includegraphics[width=\textwidth]{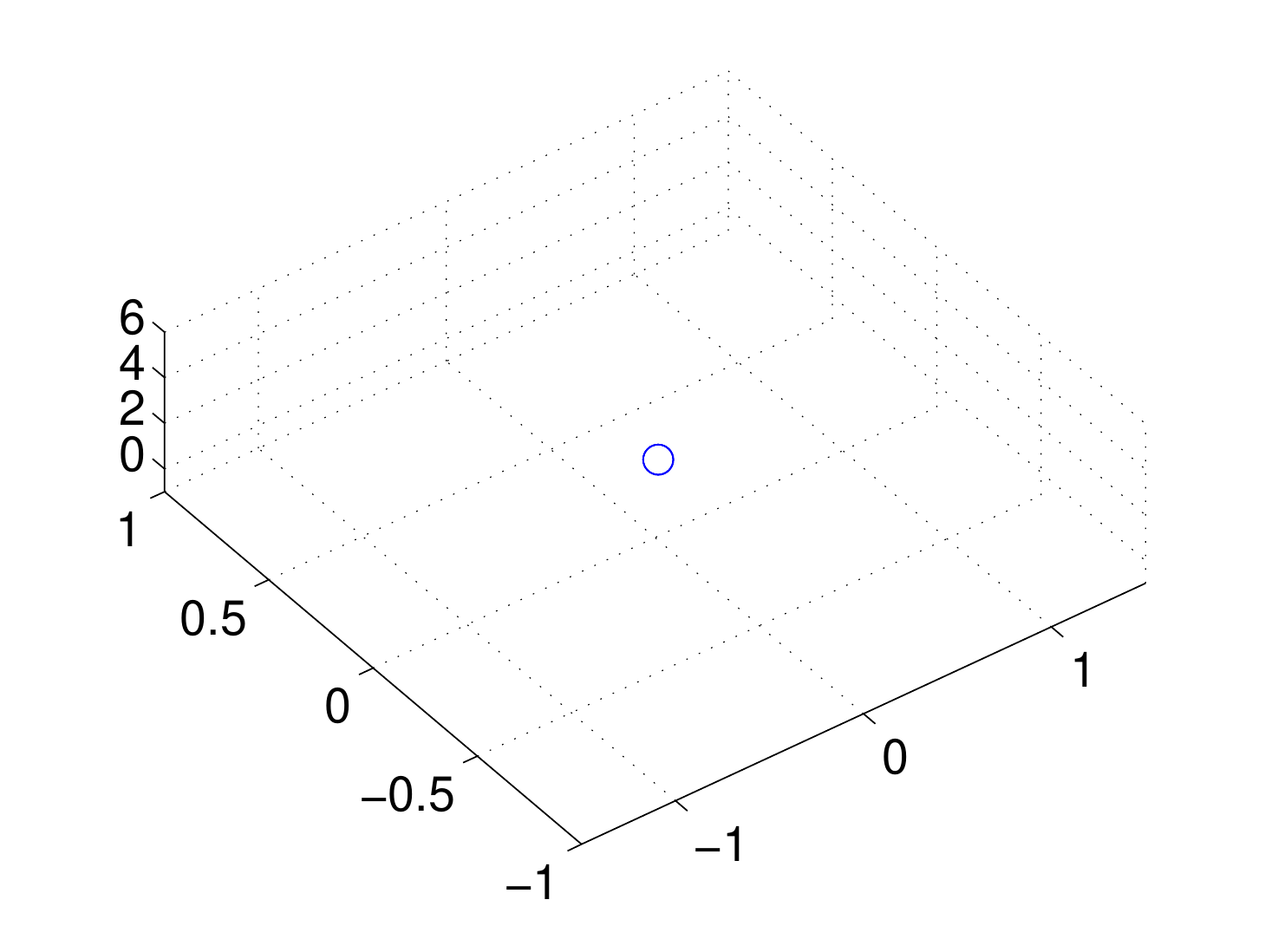}
        \caption{$\beta=0$}
	\end{subfigure}	
	\begin{subfigure}{.3\textwidth}
	    \centering
        \includegraphics[width=\textwidth]{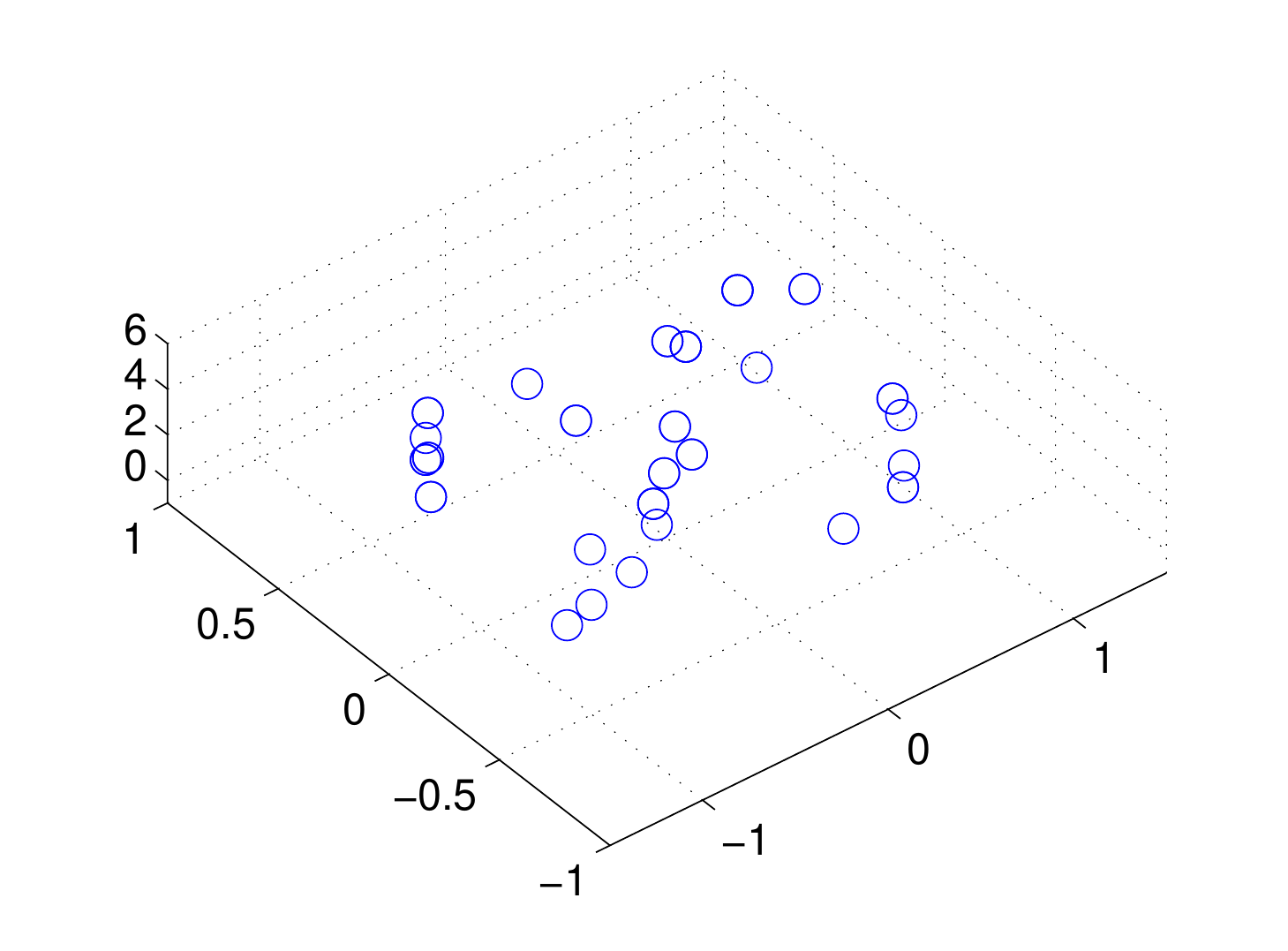}
        \caption{$\beta=1$}
	\end{subfigure}	 \\
	    \begin{subfigure}{.3\textwidth}
	    \centering
        \includegraphics[width=\textwidth]{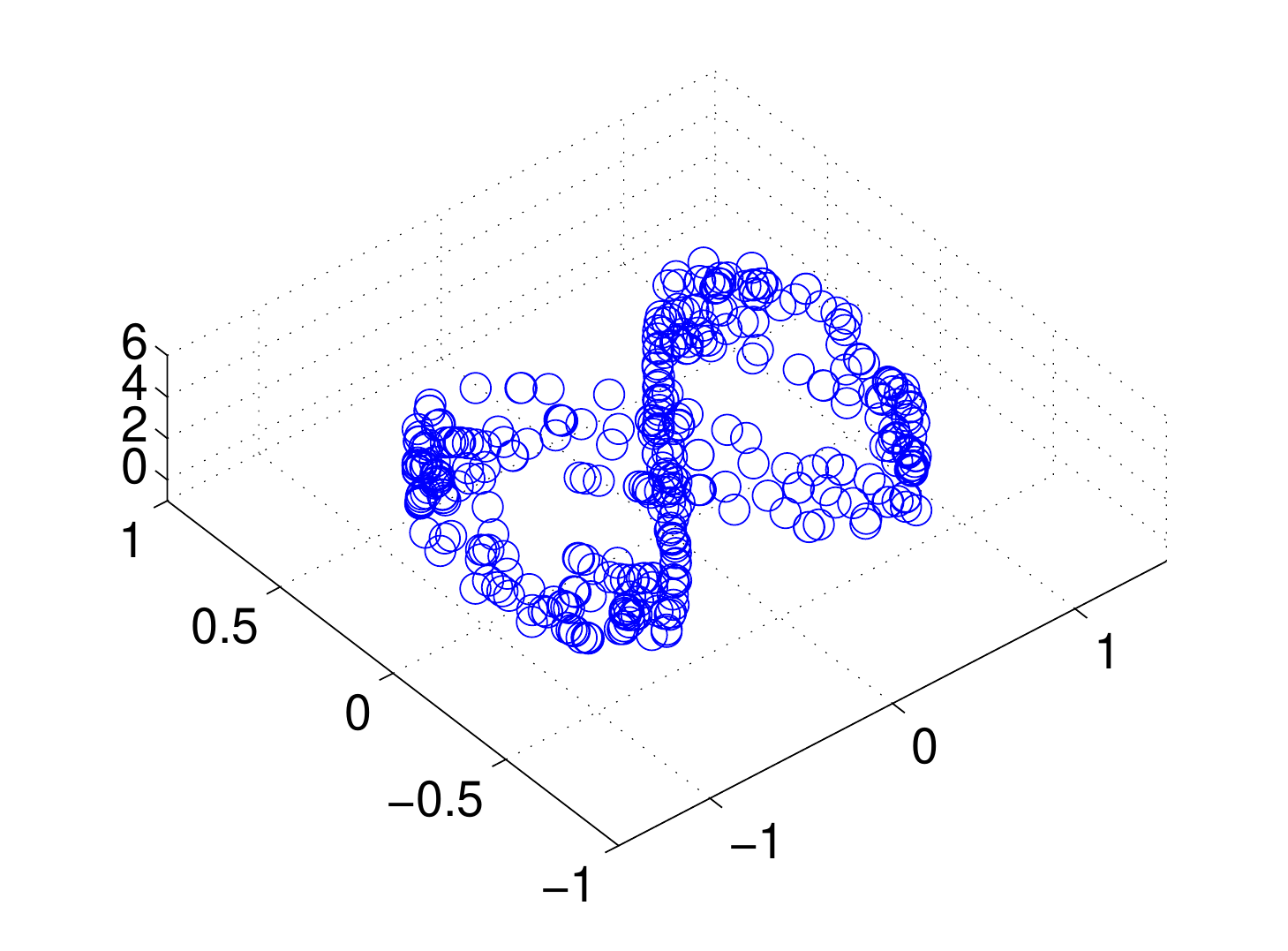}
        \caption{$\beta=3$}
	\end{subfigure}%
	\begin{subfigure}{.3\textwidth}
	    \centering
        \includegraphics[width=\textwidth]{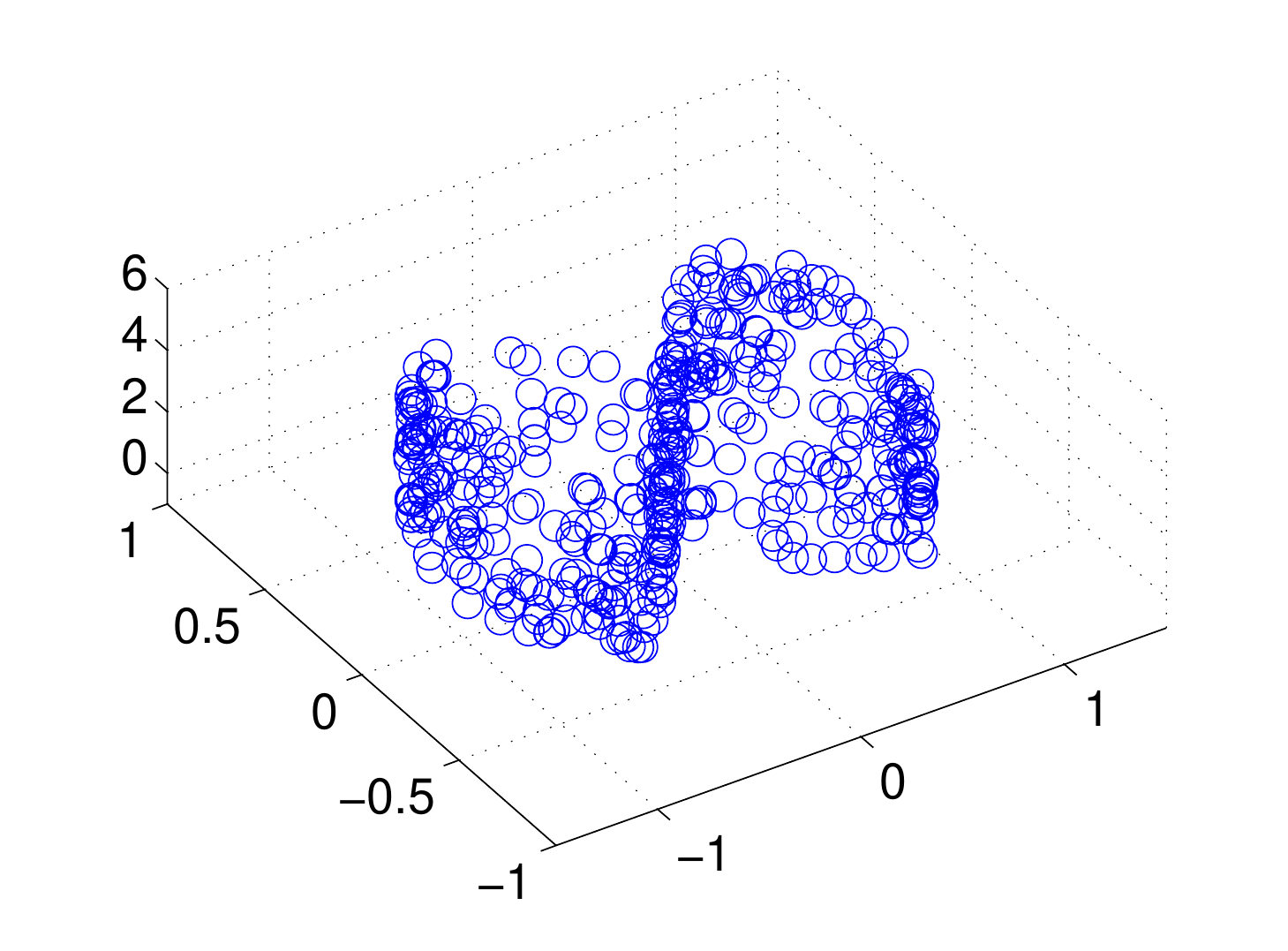}
        \caption{$\beta=6$}
	\end{subfigure}	
	\begin{subfigure}{.3\textwidth}
	    \centering
        \includegraphics[width=\textwidth]{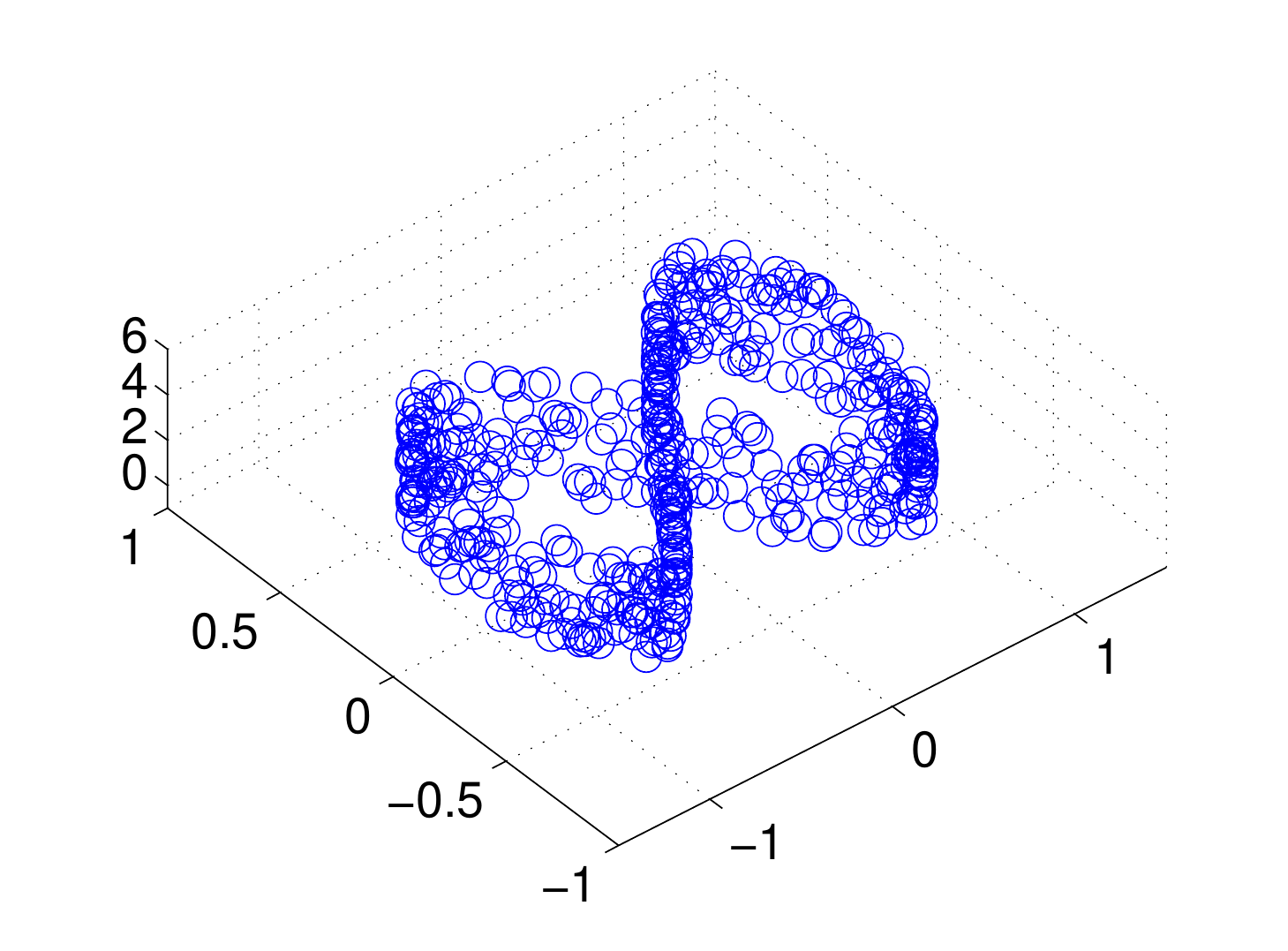}
        \caption{$\beta=100$}
	\end{subfigure}		
	\caption{Illustration of the structures revealed by the na\"ive PRI for (a) Intersect data set. As the values of $\beta$ increase the solution passes through (b) a single point, (c) modes, (d) and (e) principal curves at different dimensions, and in the extreme case of (f) $\beta \rightarrow \infty$ we get back the data themselves as the solution.}
	\label{fig:org_pri}
\end{figure*}

\section{Details of used datasets in Section~\ref{sec:MTL} and Section~\ref{sec:brain}} \label{sec:appendix_data}

\subsection{Multi-task learning}
\noindent
\textbf{Synthetic data.}
This dataset consists of $20$ regression tasks with $100$ samples each. Each task is a $30$-dimensional linear regression problem in which the last $10$
variables are independent of the output variable $y$. The $20$ tasks are
related in a group-wise manner: the first $10$ tasks form a group and the
remaining $10$ tasks belong to another group. Tasks' coefficients in the same group are completely related to each other, while totally unrelated to
tasks in another group.

Tasks' data are generated as follows: weight vectors corresponding
to tasks $1$ to $10$ are $\mathbf{w}_k= \mathbf{w}_a \odot \mathbf{b}_k + \xi $, where $\odot$ is the element-wise Hadamard product; and tasks $11$ to $20$ are $\mathbf{w}_k= \mathbf{w}_b \odot \mathbf{b}_k + \xi $, where $\xi \sim \mathcal{N}(\mathbf{0},0.2\mathbf{I}_{20})$. Vectors $\mathbf{w}_a$ and $\mathbf{w}_b$ are generated from $\mathcal{N}(\mathbf{0},\mathbf{I}_{20})$, while $\mathbf{b}_k\sim \mathcal{U}(0, 1)$ are uniformly distributed $20$-dimensional random vectors.

Input and output variables for the $t$-th ($t = 1,\cdots,20$) task, $X_t$ and $y_t$, are generated as $X'_t \sim \mathcal{N}(\mathbf{0},\mathbf{I}_{20})$ and
$y_t = X'_t \mathbf{w}_t + \mathcal{N}(0,1)$.
$10$-dimensional unrelated variables  $X''_t \sim \mathcal{N}(\mathbf{0},\mathbf{I}_{10})$ are then concatenated to $X'_t$ to form the final input data $X_t=[X'_t\quad X''_t]$.

\noindent
\textbf{Parkinsons's disease dataset.}
This is a benchmark multi-task regression data set, comprising a range of biomedical voice measurements taken from $42$ patients with earlystage Parkinson’s disease. For each patient, the goal is to predict the motor Unified Parkinson’s Disease Rating Scale (UPDRS) score based $18$-dimensional record: age, gender, and $16$ jitter and shimmer voice measurements. For the categorical variable ``gender", we applied label encoding that converts genders into a numeric representation. We treat UPDRS prediction for each patient as a task, resulting in $42$ tasks and $5,875$ observations in total.

\subsection{Brain network classification}

For both datasets, the Automated Anatomical Labeling (AAL) template was used to extract ROI-averaged time series from the $116$ ROIs. Meanwhile, to construct the initial brain network topology (i.e., the adjacency matrix $A$), we only keep edge if its weight (i.e., the absolute correlation coefficient) is among the top $20\%$ of all absolute correlation coefficients in the network.

As for the node features, we only use the correlation coefficients for simplicity. That is, the node feature for node $i$ can be represented as $\mathbf{x}_i=[\rho_{i1},\rho_{i2},\cdots,\rho_{in}]^T$, in which $\rho_{ij}$ is the Pearson's correlation coefficient for node $i$ and node $j$. One can expect performance gain by incorporating more discriminative network property features such as the local clustering coefficient~\citep{rubinov2010complex}, although this is not the main scope of our work. 


The first one is the eyes open and eyes closed (EOEC) dataset~\citep{zhou2020toolbox}, which contains the rs-fMRI data of $48$ ($22$ females) college students (aged $19$-$31$ years) in both eyes open and eyes closed states. The task is to predict two states based on brain network FC. 

The second one is from the Alzheimer's Disease Neuroimaging Initiative (ADNI) database\footnote{\url{http://adni.loni.usc.edu/}}. We use the rs-fMRI data collected and preprocessed in~\citep{kuang2019concise} which includes $31$ AD patients aged $60$–$90$ years. They were matched by age, gender, and education to mild cognitive impairment (MCI)\footnote{MCI is a transitional stage between AD and NC.} and $37$ normal control (NC) subjects, together comprising $106$ participants been selected. In this work, we only focus on distinguishing MCI group from NC group. 

\noindent
\textbf{EOEC} is publicly available from
\url{https://github.com/zzstefan/BrainNetClass/}.

\noindent
\textbf{ADNI} preprocessed by~\citep{kuang2019concise} is publicly available from
\url{http://gsl.lab.asu.edu/software/ipf/}.

\section{Network architecture and hyperparameter tuning}

\subsection{fMRI-based Brain Network Classification}
The classification problem is solved using graph neural networks composed of two graph convolutional networks of size $32$ and with relu activation function. We also use node feature drop with probability $10^{-1}$.
The node pooling is the sum of the node features, while the node classification minimizes the cross entropy loss. Hyper parameter search is applied to all method with time budget of $3'000$ seconds, over $3$ runs. The learning rates, $\lambda,\beta$ and the softmax temperature are optimized using early pruning. Each graph neural network is fed with graphs generated from the full correlation matrix by selecting edges among the strongest $20\%$ absolute correlation values. For the Graph-PRI method, we used the GCN \citep{kipf2017semi} as graph classification network. 

For SVM, we use the Gaussian kernel and set kernel size equals to $1$. For LASSO, we set the hyperparameter as $0.1$. For t-test, we set the significance level as $0.05$.

\section{Minimal Implementation of Graph-PRI in PyTorch}\label{sec:appendix_code}


\begin{algorithm}[htb]
\caption{PRI for Graph Sparsification}
\label{alg:PRI_Graph}
\begin{algorithmic}[1]
\Require
$\rho = BB^T$, $\beta$, learning rate $\eta$, number of samples $S$
\Ensure
$\sigma_{\mathbf{w}}$
\State $B\gets$ incident matrix of $\rho$;
\State Initialize $\mathbf{\theta}=\{\theta_1,\theta_2,\cdots,\theta_M\}$;
\While {not converged}
\State $L = 0$;
\For{$i=1,2,\cdots,S$}
  \State $\mathbf{w}^i\gets$ $\text{GumbelSoftmax}(\mathbf{\theta})$; 
  \State $\sigma_{\mathbf{w}^i} = B \diag{(\mathbf{w}^i)} B^T$; 
  \State $L = L + \mathcal{J}_\beta(\rho, \sigma_{\mathbf{w}^i})$;
\EndFor
\State $L = \frac{1}{S} L$
\State $\mathbf{\theta} \gets \mathbf{\theta} - \eta \nabla_{\mathbf{\theta}} L$;
\EndWhile
\State $\mathbf{w}\gets$ $\text{GumbelSoftmax}(\mathbf{\theta})$; \\
\Return $\sigma_{\mathbf{w}} = B \diag{(\mathbf{w})} B^T$;
\end{algorithmic}
\end{algorithm}

We additional provide PyTorch implementation of Graph-PRI.


\begin{lstlisting}[language=Python, caption=Graph-PRI PyTorch]

import torch
import networkx as nx
import numpy as np 

def vn_entropy(k, eps=1e-20):

    k = k / torch.trace(k) 
    eigv = torch.abs(torch.symeig(k, eigenvectors=True)[0])
    entropy = -torch.sum(eigv[eigv>0] * torch.log(eigv[eigv>0] + eps))
    return entropy

def entropy_loss(sigma, rho, beta):

    assert(beta>=0), "beta shall be >=0"
    if beta > 0:
        return 0.5 * (1 - beta) / beta * vn_entropy(sigma) + vn_entropy(0.5 * (sigma + rho))
    else:
        return vn_entropy(sigma)

def sparse(G, tau, n_samples, max_iteration, lr, beta):
	'''
	Args:
		G: networkx Graph
		n_samples: number of samples for gumbel softmax
	'''
	
	E = nx.incidence_matrix(g1, oriented=True)
    E = E.todense().astype(np.double)
    E = torch.from_numpy(E)

    rho = E @ E.T

    m, n = G.number_of_edges(), G.number_of_nodes()   
    theta = torch.randn(m, 2, requires_grad=True)
    optimizer = torch.optim.Adam([theta], lr=lr)

    for itr in range(max_iteration):
        cost = 0      
        for sample in range(n_samples):
            # Sampling
            z = F.gumbel_softmax(theta, tau, hard = True)
            w = z[:, 1].squeeze()
            sigma = E @ torch.diag(w) @ E.T
            _loss = entropy_loss(sigma, rho, beta)                 
            cost = cost + _loss

        cost = cost / n_samples        
        cost.backward()
        optimizer.step()
        optimizer.zero_grad()

    z = F.gumbel_softmax(theta, tau, hard=True)
    w = z[:,1].squeeze()

    sigma = E @ torch.diag(w) @ E.T # sparse laplacian

    return sigma, w
\end{lstlisting}

\end{document}